\def\H{\mathcal{H}}
\def\calA{\mathcal{A}}
\def\calR{\mathcal{R}}
\def\calS{\mathcal{S}}
\def\calI{\mathcal{I}}
\def\S{\mathcal{S}}
\def\E{\mathbb{E}}
\def\1{\mathbf{1}}
\def\P{\mathbb{P}}
\def\R{\mathbb{R}}
\newtheorem*{rep@theorem}{\rep@title}
\newcommand{\newreptheorem}[2]{%
\newenvironment{rep#1}[1]{%
 \def\rep@title{#2 \ref{##1}}%
 \begin{rep@theorem}}%
 {\end{rep@theorem}}}
\newtheorem{lemma}{Lemma}
\newtheorem{definition}{Definition}
\newtheorem{theorem}{Theorem}
\newtheorem{remark}{Remark}
\title{A Bandit Approach to Multiple Testing with False Discovery Control}
 \author{{Kevin Jamieson$^{\ast,\dagger}$, Lalit Jain$^\ast$} \\
 \texttt{\{jamieson,lalitj\}@cs.washington.edu} \\
 $^\ast$Paul G. Allen School of Computer Science \& Engineering,\\ University of Washington, Seattle, WA, and \\
 $^\dagger$Optimizely, San Francisco, CA
 }
\begin{document}

\maketitle

\begin{abstract}
 We propose an adaptive sampling approach for multiple testing which aims to maximize statistical power while ensuring anytime false discovery control. We consider $n$ distributions whose means are partitioned by whether they are below or equal to a baseline (nulls), versus above the baseline (actual positives). In addition, each distribution can be sequentially and repeatedly sampled. Inspired by the multi-armed bandit literature, we provide an algorithm that takes as few samples as possible to exceed a target true positive proportion (i.e. proportion of actual positives discovered) while giving anytime control of the false discovery proportion (nulls predicted as actual positives). Our sample complexity results match known information theoretic lower bounds and through simulations we show a substantial performance improvement over uniform sampling and an adaptive elimination style algorithm. Given the simplicity of the approach, and its sample efficiency, the method has promise for wide adoption in the biological sciences, clinical testing for drug discovery, and online A/B/n testing problems.
\end{abstract}

\section{Introduction}
Consider $n$ possible treatments, say, drugs in a clinical trial, where each treatment either has a positive expected effect relative to a baseline (actual positive), or no difference (null), with a goal of identifying as many actual positive treatments as possible.
If evaluating the $i$th trial results in a noisy outcome (e.g. due to variance in the actual measurement or just diversity in the population) then given a total measurement budget of $B$, it is standard practice to execute and average $B/n$ measurements of each treatment, and then output a set of predicted actual positives based on the measured effect sizes.
False alarms (i.e. nulls predicted as actual positives) are controlled by either controlling \emph{family-wise error rate (FWER)}, where one bounds the probability that at least one of the predictions is null, or \emph{false discovery rate (FDR)}, where one bounds the expected proportion of the number of predicted nulls to the number of predictions.
FDR is a weaker condition than FWER but is often used in favor of FWER because of its higher \emph{statistical power}: more actual positives are output as predictions using the same measurements.

In the pursuit of even greater statistical power, there has recently been increased interest in the biological sciences to reject the uniform allocation strategy of $B/n$ trials to the $n$ treatments in favor of an \emph{adaptive} allocation.
Adaptive allocations partition the budget $B$ into sequential rounds of measurements in which the measurements taken at one round inform which measurements are taken in the next \cite{hao2008drosophila,rocklin}.
Intuitively, if the effect size is relatively large for some treatment, fewer trials will be necessary to identify that treatment as an actual positive relative to the others, and that savings of measurements can be allocated towards treatments with smaller effect sizes to boost the signal.
However, both \cite{hao2008drosophila,rocklin} employed ad-hoc heuristics which may not only have sub-optimal statistical power, but also may even result in more false alarms than expected.
As another example, in the domain of A/B/n testing in online environments, the desire to understand and maximize click-through-rate across treatments (e.g., web-layouts, campaigns, etc.) has become ubiquitous across retail, social media, and headline optimization for the news.
And in this domain, the desire for statistically rigorous adaptive sampling methods with high statistical power are explicit \cite{johari2015always}.

In this paper we propose an adaptive measurement allocation scheme that achieves near-optimal statistical power subject to FWER or FDR false alarm control.
Perhaps surprisingly, we show that even if the treatment effect sizes of the actual positives are identical, adaptive measurement allocation can still substantially improve statistical power.
That is, more actual positives can be predicted using an adaptive allocation relative to the uniform allocation under the same false alarm control.

\subsection{Problem Statement}\label{sec:problem_statement}
Consider $n$ distributions (or arms) and a game where at each time $t$, the player chooses an arm $i \in [n] := \{1,\dots,n\}$ and immediately observes a reward $X_{i,t} \overset{iid}{\sim} \nu_i$ where $X_{i,t} \in [0,1]$\footnote{All results without modification apply to unbounded, sub-Gaussian random variables.} and $\E_{\nu_i}[X_{i,t}] = \mu_i$.
For a \emph{known} threshold $\mu_0$, define the sets\footnote{All results generalize to the case when $\H_0 = \{ i : \mu_i \leq \mu_0 \}$.}
\begin{align*}
\H_1 = \{ i \in [n]: \mu_i > \mu_0 \} \quad \text{ and } \quad \H_0 = \{ i \in [n] : \mu_i = \mu_0 \} = [n] \setminus \H_1 .
\end{align*}
The value of the means $\mu_i$ for $i\in[n]$ and the cardinality of $\H_1$ are \emph{unknown}.
The arms (treatments) in $\H_1$ have means greater than $\mu_0$ (positive effect) while those in $\H_0$ have means equal to $\mu_0$ (no effect over baseline).
At each time $t$, after the player plays an arm, she also outputs a set of indices $\calS_t \subseteq [n]$ that are interpreted as \emph{discoveries} or rejections of the null-hypothesis (that is, if $i \in \calS_t$ then the player believes $i \in \H_1$).
For as small a $\tau \in \mathbb{N}$ as possible, the goal is to have the number of true detections $|\calS_t \cap \H_1|$ be approximately $|\H_1|$ for all $t \geq \tau$, subject to the number of false alarms $|\calS_t \cap \H_0|$ being small uniformly over all times $t \in \mathbb{N}$.
We now formally define our notions of false alarm control and true discoveries.
\begin{definition}[False Discovery Rate, FDR-$\delta$]
Fix some $\delta \in (0,1)$.
We say an algorithm is FDR-$\delta$ if for all possible problem instances $(\{\nu_i\}_{i=1}^n, \mu_0)$ it satisfies $\displaystyle\E[\tfrac{|\calS_t \cap \H_0|}{|\calS_t|\vee 1}] \leq \delta$ for all $t \in \mathbb{N}$ simultaneously.
\end{definition}
\begin{definition}[Family-wise Error Rate, FWER-$\delta$]
Fix some $\delta \in (0,1)$.
We say an algorithm is FWER-$\delta$ if for all possible problem instances $(\{\nu_i\}_{i=1}^n, \mu_0)$ it satisfies $\P(\bigcup_{t=1}^\infty \{ \calS_t \cap \H_0 \neq \emptyset\} ) \leq \delta$.
\end{definition}
\noindent Note FWER-$\delta$ implies FDR-$\delta$, the former being a stronger condition than the latter.
Allowing a relatively small number of false discoveries is natural, especially if $|\H_1|$ is relatively large.
Because $\mu_0$ is known, there exist schemes that guarantee FDR-$\delta$ or FWER-$\delta$ even if the arm means $\mu_i$ and the cardinality of $\H_1$ are unknown (see Section~\ref{sec:false_alarm_control}).
It is also natural to relax the goal of identifying \emph{all} arms in $\H_1$ to simply identifying a \emph{large proportion} of them.
\begin{definition}[True Positive Rate, TPR-$\delta,\tau$]
Fix some $\delta \in (0,1)$.
We say an algorithm is TPR-$\delta,\tau$ on an instance $(\{\nu_i\}_{i=1}^n, \mu_0)$ if $\E[\frac{|\calS_t \cap \H_1|}{|\H_1|}] \geq 1-\delta$ for all  $t \geq \tau$.
\end{definition}
\begin{definition}[Family-wise Probability of Detection, FWPD-$\delta,\tau$]
Fix some $\delta \in (0,1)$.
We say an algorithm is FWPD-$\delta,\tau$ on an instance $(\{\nu_i\}_{i=1}^n, \mu_0)$ if $\P(\H_1 \subseteq \calS_t ) \geq 1- \delta$ for all  $t \geq \tau$.
\end{definition}
Note that FWPD-$\delta,\tau$ implies TPR-$\delta,\tau$, the former being a stronger condition than the latter.
Also note $\P( \bigcup_{t=1}^\infty \{ \calS_t \cap \H_0 \neq \emptyset\} ) \leq \delta$ and $\P( \H_1 \subseteq \calS_\tau) \geq 1-\delta$ together imply $\P(\H_1 = \calS_\tau) \geq 1-2\delta$.
We will see that it is possible to control the number of false discoveries $|\calS_t \cap \H_0|$ regardless of how the player selects arms to play.
It is the rate at which $\calS_t$ includes $\H_1$ that can be thought of as the statistical power of the algorithm, which we formalize as its \emph{sample complexity}:
\begin{definition}[Sample Complexity]
Fix some $\delta \in (0,1)$ and an algorithm $\calA$ that is FDR-$\delta$ (or FWER-$\delta$) over all possible problem instances.
Fix a particular problem instance $(\{\nu_i\}_{i=1}^n, \mu_0)$.
At each time $t \in \mathbb{N}$, $\calA$ chooses an arm $i \in [n]$ to obtain an observation from, and before proceeding to the next round outputs a set $\calS_t \subseteq [n]$.
The \emph{sample complexity} of $\calA$ on this instance is the smallest time $\tau \in \mathbb{N}$ such that $\calA$ is TPR-$\delta,\tau$ (or FWPD-$\delta,\tau$).
\end{definition}
The sample complexity and value of $\tau$ of an algorithm will depend on the particular instance $(\{\nu_i\}_{i=1}^n, \mu_0)$.
For example, if $\H_1 = \{i \in [n] : \mu_i =\mu_0 + \Delta\}$ and $\H_0 = [n] \setminus \H_1$, then we expect the sample complexity to increase as $\Delta$ decreases since at least $\Delta^{-2}$ samples are necessary to determine whether an arm has mean $\mu_0$ versus $\mu_0 + \Delta$. The next section will give explicit cases.

 \begin{remark}[Impossibility of stopping time] We emphasize that just as in the non-adaptive setting, at no time can an algorithm \emph{stop} and declare that it is TPR-$\delta,\tau$ or FWPD-$\delta,\tau$ for any finite $\tau \in \mathbb{N}$.
This is because there may be an arm in $\H_1$ with a mean infinitesimally close to $\mu_0$ but distinct such that no algorithm can determine whether it is in $\H_0$ or $\H_1$.
Thus, the algorithm must run indefinitely or until it is stopped externally.
However, using an anytime confidence bound (see Section~\ref{sec:algorithm}) one can always make statements like ``either $\H_1 \subseteq \calS_t$, or $\max_{i \in \H_1 \setminus \calS_t} \mu_i-\mu_0 \leq \epsilon$'' where the $\epsilon$ will depend on the width of the confidence interval.
\end{remark}

\subsection{Contributions and Informal Summary of Main Results}\label{sec:contributions}
\begin{table}
\begin{center}
\begin{tabular}{c | c c }
& \multicolumn{2}{c}{\textbf{False alarm control}}\\[2pt]
& \makecell{FDR-$\delta$ \\ $\max_t \E[\frac{|\calS_t \cap \H_0|}{|\calS_t| \vee 1}] \leq \delta$} & \makecell{FWER-$\delta$ \\ $\P( \bigcup_{t=1}^\infty \{ \calS_t \cap \H_0 \neq \emptyset\} ) \leq \delta$}  \\  \hline \\[-8pt]
\makecell{\textbf{Detection Probability}\\[4pt] TPR-$\delta,\tau$ \\ $\E[\frac{|\calS_\tau \cap \H_1|}{|\H_1|}] \geq 1-\delta$\\[8pt] } & \makecell{ {\small Theorem~\ref{thm:FDR_TPR}}\\[-0pt]  $n\Delta^{-2}$} &  \makecell{ {\small Theorem~\ref{thm:FWER_TPR}}\\[-0pt] $(n-k)\Delta^{-2} + k \Delta^{-2} \log(n-k)$} \\[16pt]
\makecell{FWPD-$\delta,\tau$ \\ $\P( \H_1 \subseteq \calS_\tau) \geq 1-\delta$} & \makecell{ {\small Theorem~\ref{thm:FDR_FWPD}}\\[-0pt] $\quad (n-k) \Delta^{-2} \log(k) + k \Delta^{-2} \quad$} & \makecell{ {\small Theorem~\ref{thm:FWER_FWPD}}\\[-0pt] $(n-k) \Delta^{-2}\log(k) + k \Delta^{-2}\log(n-k)$}
\end{tabular}
\end{center}
\caption{\small Informal summary of sample complexity results proved in this paper for $|\H_1|=k$, constant $\delta$ (e.g., $\delta=.05$) and $\Delta=\min_{i \in \H_1} \mu_i - \mu_0$. Uniform sampling across all settings requires at least $n \Delta^{-2} \log(n/k)$ samples, and in the FWER+FWPD setting requires $n \Delta^{-2}\log(n)$.
Constants and $\log\log$ factors are ignored.\label{tab:complexity_table}}
\vspace{-.25in}
\end{table}
In Section~\ref{sec:algorithm} we propose an algorithm that handles all four combinations of \{FDR-$\delta$, FWER-$\delta$\} and \{TPR-$\delta,\tau$, FWPD-$\delta,\tau$\}.
A reader familiar with the multi-armed bandit literature would expect an adaptive sampling algorithm to have a large advantage over uniform sampling when there is a large diversity in the means of $\H_1$ since larger means can be distinguished from $\mu_0$ with fewer samples.
However, one should note that to declare all of $\H_1$ as discoveries, one must sample every arm in $\H_0$ \emph{at least} as many times as the \emph{most sampled} arm in $\H_1$, otherwise they are statistically indistinguishable.
As discoveries are typically uncovering rare phenomenon, it is common to assume $|\H_1| = n^\beta$ for $\beta \in (0,1)$ \cite{castro2014adaptive,rabinovich2017optimal}, or $|\H_1| = o(n)$, but this implies that the number of samples taken from the arms in $\H_1$, regardless of how samples are allocated to those arms, will almost always be dwarfed by the number of samples allocated to those arms in $\H_0$ since there are $\Omega(n)$ of them.
This line of reasoning, in part, is what motivates us to give our sample complexity results in terms of the quantities that best describe the contributions from those arms in $\H_0$, namely, the cardinality $|\H_1| = n-|\H_0|$, the confidence parameter $\delta$ (e.g., $\delta=.05$), and the gap $\Delta := \min_{i \in \H_1} \mu_i - \mu_0$ between the means of the arms in $\H_0$ and the smallest mean in $\H_1$.
Reporting sample complexity results in terms of $\Delta$ also allows us to compare to known lower bounds in the literature \cite{2017arXiv170306222R,castro2014adaptive,malloy2014sequential,simchowitz2017simulator}.
Nevertheless, we do address the case where the means of $\H_1$ are varied in Theorem~\ref{thm:FDR_TPR}.


An informal summary of the sample complexity results proven in this work are found in Table~\ref{tab:complexity_table} for $|\H_1|=k$.
For the least strict setting of FDR+TPR, the upper-left quadrant of Table~\ref{tab:complexity_table} matches the lower bound of \cite{castro2014adaptive}, a sample complexity of just $\Delta^{-2}n$.
In this FDR+TPR setting (which requires the fewest samples of the four settings), uniform sampling which pulls each arm an equal number of times has a sample complexity of at least $n \Delta^{-2} \log(n/|\H_1|)$
(see Theorem \ref{thm:uniform_fdr_lower_bound} in Appendix~\ref{sec:succ-elim}), which exceeds all results in Table~\ref{tab:complexity_table} demonstrating the statistical power gained by adaptive sampling.
For the most strict setting of FWER+FWPD, the lower-right quadrant of Table~\ref{tab:complexity_table} matches the lower bounds of \cite{malloy2014sequential,kalyanakrishnan2012pac,simchowitz2017simulator}, a sample complexity of
 $(n-k) \Delta^{-2}\log(k) + k \Delta^{-2}\log(n-k)$.
 Uniform sampling in the FWER+FWPD setting has a sample complexity lower bounded by $n \Delta^{-2} \log(n)$ (see Theorem~\ref{thm:uniform_fwer_lower_bound} in Appendix~\ref{sec:succ-elim}).
The settings of FDR+FWPD and FWER+TPR are sandwiched between these results, and we are unaware of existing lower bounds for these settings.

All the results in Table~\ref{tab:complexity_table} are novel, and to the best of our knowledge are the first non-trivial sample complexity results for an adaptive algorithm in the \emph{fixed confidence} setting where a desired confidence $\delta$ is set, and the algorithm attempts to minimize the number of samples taken to meet the desired conditions.
We also derive tools that we believe may be useful outside this work: for always valid $p$-values (c.f. \cite{johari2015always,yang2017framework}) we show that FDR is controlled for all times using the Benjamini-Hochberg procedure \cite{benjamini1995controlling}  (see Lemma~\ref{lem:expected_FDR}), and also provide an anytime high probability bound on the false discovery proportion (see Lemma~\ref{lem:fdr_high_prob}).

Finally, as a direct consequence of the theoretical guarantees proven in this work and the empirical performance of the FDR+TPR variant of the algorithm on real data, an algorithm faithful to the theory was implemented and is in use in production at a leading A/B testing platform \cite{optimizely}.

\subsection{Related work}\label{sec:related_work}

Identifying arms with means above a threshold, or equivalently, multiple testing via rejecting null-hypotheses with small $p$-values, is an ubiquitous problem in the biological sciences.
In the standard setup, each arm is given an equal number of measurements (i.e., a uniform sampling strategy), a $p$-value $P_i$ is produced for each arm where $\P(P_i \leq x) \leq x$ for all $x \in (0,1]$ and $i \in \H_0$, and a procedure is then run on these $p$-values to declare small $p$-values as rejections of the null-hypothesis, or discoveries.
For a set of $p$-values $P_1 \leq P_2 \leq \dots \leq P_n$, the so-called Bonferroni selection rule selects $\calS_{BF} = \{ i : P_{i} \leq \delta/n \}$.
The fact that FWER control implies FDR control, $\E[ | \calS_{BF} \cap \H_0| ] \leq \P(\bigcup_{i \in \H_0} \{ P_{i} \leq \delta/n\} ) \leq \delta \frac{|\H_0|}{n} \leq \delta$,
suggests that greater statistical power (i.e. more discoveries) could be achieved with procedures designed specifically for FDR.
The BH procedure \cite{benjamini1995controlling} is one such procedure to control FDR and is widely used in practice (with its many extensions \cite{2017arXiv170306222R} and performance investigations \cite{rabinovich2017optimal}).
Recall that a uniform measurement strategy where every arm is sampled the same number of times requires
$n \Delta^{-2} \log(n/k)$ samples in the FDR+TPR setting, and $n \Delta^{-2}\log(n)$ samples in the FWER+FWPD setting (Theorems~\ref{thm:uniform_fdr_lower_bound} and \ref{thm:uniform_fwer_lower_bound} in Appendix~\ref{sec:succ-elim}), which can be substantially worse than our adaptive procedure (see Table~\ref{tab:complexity_table}).

Adaptive sequential testing has been previously addressed in the \emph{fixed budget} setting: the procedure takes a sampling budget as input, and the guarantee states that if the given budget is larger than a problem dependent constant, the procedure drives the error probability to zero and the detection probability to one.
One of the first methods called \emph{distilled sensing} \cite{haupt2011distilled} assumed that arms from $\H_0$ were Gaussian with mean at most $\mu_0$, and successively discarded arms after repeated sampling by thresholding at $\mu_0$--at most the median of the null distribution--thereby discarding about half the nulls at each round.
The procedure made guarantees about FDR and TPR, which were later shown to be nearly optimal \cite{castro2014adaptive}.
Specifically, \cite[Corollary 4.2]{castro2014adaptive} implies that any procedure with $\max\{FDR+(1-TPR)\} \leq \delta$ requires a budget of at least $\Delta^{-2} n \log(1/\delta)$, which is consistent with our work.
Later, another thresholding algorithm for the fixed budget setting addressed the FWER and FWPD metrics \cite{malloy2014sequential}.
In particular, if their procedure is given a budget exceeding $(n-|\H_1|) \Delta^{-2}\log(|\H_1|) + |\H_1| \Delta^{-2} \log(n-|\H_1|)$ then the FWER is driven to zero, and the FWPD is driven to one.
By appealing to the optimality properties of the SPRT (which knows the distributions precisely) it was argued that this is optimal.
These previous works mostly focused on the asymptotic regime as $n \rightarrow \infty$ and $|\H_1| = o(n)$.

Our paper, in contrast to these previous works considers the \emph{fixed confidence} setting: the procedure takes a desired FDR (or FWER) and TPR (or FWPD) and aims to minimize the number of samples taken before these constraints are met. To the best of our knowledge, our paper is the first to propose a scheme for this problem in the fixed confidence regime with near-optimal sample complexity guarantees.

A related line of work is the threshold bandit problem, where all the means of $\H_1$ are assumed to be strictly above a given threshold, and the means of $\H_0$ are assumed to be strictly below the threshold \cite{locatelli2016optimal,kano2017good}. To identify this partition, each arm must be pulled a number of times inversely proportional to the square of its deviation from the threshold. This contrasts with our work, where the majority of arms may have means \emph{equal} to the threshold and the goal is to identify arms with means greater than the threshold subject to discovery constraints. If the arms in $\H_0$ are assumed to be strictly below the threshold it is possible to declare arms as in $\H_0$. In our setting we can only ever determine that an arm is in $\H_1$ and not $\H_0$, but it is impossible to detect that an arm is in $\H_0$ and not in $\H_1$.

Note that the problem considered in this paper is very related to the top-$k$ identification problem where the objective is to identify the unique $k$ arms with the highest means with high probability \cite{chen2017nearly,kalyanakrishnan2012pac,simchowitz2017simulator}.
Indeed, if we knew $|\H_1|$, then our FWER+FWPD setting is equivalent to the top-$k$ problem with $k=|\H_1|$. Lower bounds derived for the top-$k$ problem assume the algorithm has knowledge of the values of the means, just not their indices \cite{chen2017nearly, simchowitz2017simulator}. Thus, these lower bounds also apply to our setting and are what are referenced in Section \ref{sec:contributions}.


As pointed out by \cite{locatelli2016optimal}, both our setting and the threshold bandit problem can be posed as a combinatorial bandits problem as studied in \cite{chen2014combinatorial,cao2017disagreement}, but such generality leads to unnecessary $\log$ factors.
The techniques used in this work aim to reduce extraneous $\log$ factors, a topic of recent interest in the top-$1$ and top-$k$ arm identification problem \cite{even2006action,karnin2013almost,jamieson2014lil,chen2017towards,chen2017nearly,simchowitz2017simulator}.
While these works are most similar to exact identification (FWER+FWPD), there also exist examples of \emph{approximate} top-$k$ where the objective is to find any $k$ means that are each within $\epsilon$ of the best $k$ means \cite{kalyanakrishnan2012pac}.
Approximate recovery is also studied in a ranking context with a symmetric difference metric \cite{heckel2018approximaterank} which is more similar to the FDR and TPR setting, but neither this nor that work subsumes one another.

Finally, maximizing the number of discoveries subject to a FDR constraint has been studied in a sequential setting in the context of A/B testing with uniform sampling \cite{johari2015always}.
This work popularized the concept of an always valid $p$-value that we employ here (see Section~\ref{sec:algorithm}).
The work of \cite{yang2017framework} controls FDR over a \emph{sequence} of independent bandit problems that each outputs at most one discovery.
While \cite{yang2017framework} shares much of the same vocabulary as this paper, the problem settings are very different.

\section{Algorithm and Discussion}\label{sec:algorithm}

\begin{algorithm}[t]
 	\textbf{Input:} Threshold $\mu_0$, confidence $\delta\in (0, e^{-1}]$, confidence interval $\phi(\cdot,\cdot)$\\
 	\textbf{Initialize:} Pull each arm $i \in [n]$ once and let $T_i(t)$ denote the number of times arm $i$ has been pulled up to time $t$. Set $\S_{n+1} = \emptyset$, $\calR_{n+1} = \emptyset$, and\\
 	\textbf{If } TPR\\
 	\hspace*{.25in}$\xi_t=1$, \hspace{.2in} and \hspace{.2in} $\nu_t=1 \quad \forall t$\\
 	\textbf{Else if } FWPD\\
 	\hspace*{.25in}$\xi_t=\max\{ 2|\S_t|, \tfrac{5}{3(1-4\delta)} \log(1/\delta) \}$, \hspace{.2in} and \hspace{.2in} $\nu_t=\max\{ |\S_t|, 1\} \quad \forall t$ \\[4pt]
	\textbf{For} $t = n+1,n+2,\dots$ \\
	\hspace*{.25in}\textbf{Pull arm} $\displaystyle I_t = \arg\max_{i \in [n] \setminus \S_t } \widehat{\mu}_{i,T_i(t)} + \phi(T_i(t),\tfrac{\delta}{\xi_t})$, \\
	\hspace*{.25in} \textbf{Apply} Benjamini-Hochberg \cite{benjamini1995controlling} selection at level $\delta' = \tfrac{\delta}{6.4\log(36/\delta)}$ to obtain $\delta$ FDR-controlled set $\calS_t$:\\[2pt]
	\hspace*{.5in}$s(k) = \{ i \in [n] : \widehat{\mu}_{i,T_i(t)} - \phi(T_i(t),\delta' \tfrac{k}{n}) \geq \mu_0 \}$, $\forall k \in [n]$ \\
	\hspace*{.5in}$\displaystyle\S_{t+1} = s(\widehat{k}) \text{ where }\widehat{k} = \max \{ k \in [n]: |s(k)| \geq k \}$ (if $\not\exists \widehat{k}$ set $\S_{t+1} = \S_t$)\\[4pt] 
	\hspace*{.25in}\textbf{If } FWER and $\S_t \neq \emptyset$:\\
	\hspace*{.5in}\textbf{Pull arm} $\displaystyle J_t = \arg\max_{i \in \S_t\setminus \calR_t} \widehat{\mu}_{i,T_i(t)} + \phi(T_i(t),\tfrac{\delta}{\nu_t})$ \\
	\hspace*{.5in} \textbf{Apply} Bonferroni-like selection to obtain FWER-controlled set $\calR_t$:\\[2pt]
	\hspace*{.75in}$\chi_t = n - (1-2\delta'(1+4\delta')) |\S_t| + \tfrac{4(1+4\delta')}{3}\log(5\log_2(n/\delta')/\delta')$\\
	\hspace*{.75in}$\displaystyle\calR_{t+1} = \calR_t \cup \{ i \in \S_t : \widehat{\mu}_{i,T_i(t)} - \phi(T_i(t), \tfrac{\delta}{\chi_t}) \geq \mu_0 \}$ \\ 
	\caption{\small An algorithm for identifying arms with means above a threshold $\mu_0$ using as few samples as possible subject to false alarm and true discovery conditions.
	The set $\calS_t$ is designed to control FDR at level $\delta$.
	The set $\calR_t$ is designed to control FWER at level $\delta$. \label{alg:FDRMAB}}
\end{algorithm}

Throughout, we will assume the existence of an \emph{anytime confidence interval}.
Namely, if $\widehat{\mu}_{i,t}$ denotes the empirical mean of the first $t$ bounded i.i.d. rewards in $[0,1]$ from arm $i$, then for any $\delta \in (0,1)$ we assume the existence of a function $\phi$ such that for any $\delta$ we have
$\P\left( \bigcap_{t=1}^\infty \{ |\widehat{\mu}_{i,t} - \mu_i| \leq \phi(t,\delta) \}  \right) \geq 1-\delta$.
We assume that $\phi( t , \delta )$ is non-increasing in its second argument and that there exists an absolute constant $c_\phi$ such that $\phi( t , \delta ) \leq \sqrt{\frac{ c_\phi \log( \log_2(2 t)/\delta)}{t}}$.
It suffices to define $\phi$ with this upper bound with $c_\phi=4$ but there are much sharper known bounds that should be used in practice (e.g., they may take empirical variance into account), see \cite{jamieson2014lil,Balsubramani2014,kaufmann2016complexity,tanczosnowak2017}.
Anytime bounds constructed with such a $\phi(t,\delta)$ are known to be tight in the sense that $\P(\bigcup_{t=1}^\infty \{|\widehat{\mu}_{i,t} - \mu_i| \geq \phi(t,\delta)\}) \leq \delta$ and that there exists an absolute constant $h \in (0,1)$ such that $\P(\{|\widehat{\mu}_{i,t} - \mu_i| \geq h \, \phi(t,\delta)\text{ for infinitely many $t \in \mathbb{N}$}\}) = 1$ by the Law of the Iterated Logarithm \cite{HartmanWintnerLIL}.

Consider Algorithm~\ref{alg:FDRMAB}.
Before entering the for loop, time-dependent variables $\xi_t$ and $\nu_t$ are defined that should be updated at each time $t$ for different settings.
If just FDR control is desired, the algorithm merely loops over the three lines following the for loop, pulling the arm $I_t$ not in $\calS_t$ that has the highest upper confidence bound; such strategies are common for pure-exploration problems \cite{jamieson2014lil,yang2017framework}.
But if FWER control is desired then at most one additional arm $J_t$ is pulled per round to provide an extra layer of filtering and evidence before an arm is added to $\calR_t$.
Below we describe the main elements of the algorithm and along the way sketch out the main arguments of the analysis, shedding light on the constants $\xi_t$ and $\nu_t$.

\subsection{False alarm control}\label{sec:false_alarm_control}
\textbf{$\calS_t$ is FDR-controlled.}
In addition to its use as a confident bound, we can also use $\phi(t,\delta)$ to construct:
\begin{align}\label{eqn:anytime_pvalue}
P_{i,t} := \sup \{ \alpha \in (0,1] : \widehat{\mu}_{i,t}-\mu_0 \leq \phi(t,\alpha) \} \leq \log_2(2t) \exp(-t (\widehat{\mu}_{i,t}-\mu_0)^2/c_\phi).
\end{align}
Proposition~1 of \cite{yang2017framework} (and the proof of our Lemma~\ref{lem:expected_FDR}) shows that if $i \in \H_0$ so that $\mu_i = \mu_0$ then $P_{i,t}$ is an \emph{anytime, sub-uniformly distributed $p$-value} in the sense that $\P( \bigcup_{t=1}^\infty \{ P_{i,t} \leq x \} ) \leq x$.
Sequences that have this property are sometimes referred to as \emph{always-valid} $p$-values \cite{johari2015always}.
Note that if $i \in \H_1$ so that $\mu_i > \mu_0$, we would intuitively expect the sequence $\{P_{i,t}\}_{t=1}^\infty$ to be point-wise smaller than if $\mu_i = \mu_0$ by the property that $\phi(\cdot,\cdot)$ is non-increasing in its second argument.
This leads to the intuitive rule to reject the null-hypothesis (i.e., declare $i \notin \H_0$) for those arms $i \in [n]$ where $P_{i,t}$ is very small.
The Benjamini-Hochberg (BH) procedure introduced in \cite{benjamini1995controlling} proceeds by first sorting the $p$-values so that $P_{(1),T_{(1)}(t)} \leq P_{(2),T_{(2)}(t)} \leq \dots \leq P_{(n),T_{(n)}(t)}$, then defines $\widehat{k} = \max\{ k : P_{(k),T_{(k)}(t)} \leq \delta \tfrac{k}{n} \}$, and sets $\calS_{BH} = \{ i : P_{i,T_i(t)} \leq \delta \tfrac{\widehat{k}}{n} \}$.
Note that this procedure is identical to defining sets
\begin{align*}
s(k) = \{ i : P_{i,T_i(t)} \leq \delta \tfrac{k}{n} \} = \{ i : \widehat{\mu}_{i,T_i(t)} - \phi(T_i(t),\delta\tfrac{k}{n}) \geq \mu_0 \},
\end{align*}
setting $\widehat{k} = \max\{ k : |s(k)| \geq k \}$, and $\calS_{BH} = s(\widehat{k})$, which is exactly the set $\calS_t = \calS_{BH}$ in Algorithm~\ref{alg:FDRMAB}.
Thus, $\calS_t$ in Algorithm~\ref{alg:FDRMAB} is equivalent to applying the BH procedure at a level $O(\delta/\log(1/\delta))$ to the anytime $p$-values of \eqref{eqn:anytime_pvalue}. We now discuss the extra logarithmic factor.

Because the algorithm is pulling arms sequentially, some dependence between the $p$-values may be introduced. Because the anytime $p$-values are not independent, the BH procedure at level $\delta$ does not directly guarantee FDR-control at level $\delta$.
However, it has been shown \cite{benjamini2001control} that for even arbitrarily dependent $p$-values the BH procedure at level $\delta$ controls FDR at level $\delta \log(n)$ (and that it is nearly tight).
Similarly, the following theorem, which may be of independent interest,
is a significant improvement when applied to our setting.

\begin{theorem}\label{thm:newFDRcontrol}
Fix $\delta \in (0,e^{-1})$.
Let $p_1,\dots,p_n$ be random variables such that $\{ p_i \}_{i\in \H_0}$ are independent and sub-uniformly distributed so that $\max_{i \in \H_0} \P(p_i \leq x ) \leq x$. For any $k \in \{0,1,\dots,n\}$, let $R_k := \{ i : p_i \leq \delta \frac{k}{n} \}$ and $\widehat{FDP}(R_k) := \tfrac{\max_{p_i \in R_k} p_i }{|R_k| \vee 1}$.
\begin{align*}
    \E\left[ \max_{k : \widehat{FDP}(R_k) \leq \delta} FDP(R_k) \right]
    &\leq \frac{|\H_0|\delta}{n} \left( 2  \log( \tfrac{2n}{|\H_0|\delta }) + \log( 8 e^5\log( \tfrac{8n}{|\H_0| \delta} ) ) \right) \\
    &\leq 4 \delta \log( 9/\delta)
\end{align*}
In other words, any procedure that chooses a set $\{ i : p_i \leq \frac{\delta k}{n} \}$   satisfying $|\{ i : p_i \leq \frac{\delta k}{n} \}| \geq k$ is FDR controlled at level $O(\delta \log(1/\delta))$.
\end{theorem}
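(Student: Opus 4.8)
The plan is to reduce the data-dependent quantity $\max_{k:\widehat{FDP}(R_k)\le\delta}FDP(R_k)$, where $FDP(R_k)=\frac{|R_k\cap\H_0|}{|R_k|\vee1}$, to a clean supremum of the null empirical process, and then bound that supremum's contribution by integrating its tail against the trivial cap $FDP\le 1$. First I would unpack the constraint. Writing $t_k=\max_{p_i\in R_k}p_i$ for the largest accepted $p$-value, the event $\widehat{FDP}(R_k)\le\delta$ is precisely a lower bound on the number of rejections, $|R_k|\ge \frac{n}{\delta}t_k$. Since $R_k=\{i:p_i\le\delta k/n\}$ contains no $p$-value in $(t_k,\delta k/n]$, the null count satisfies $|R_k\cap\H_0|=V(t_k)$ where $V(t):=|\{i\in\H_0:p_i\le t\}|$. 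Combining these gives $FDP(R_k)=\frac{V(t_k)}{|R_k|}\le\frac{\delta}{n}\frac{V(t_k)}{t_k}$, so that
\[
\max_{k:\widehat{FDP}(R_k)\le\delta}FDP(R_k)\ \le\ \frac{\delta}{n}\,\sup_{t>0}\frac{V(t)}{t}.
\]
Because $V$ jumps (by one) only at null $p$-values, the supremum is attained at an order statistic, i.e. $\sup_t V(t)/t=\max_{1\le j\le|\H_0|}\frac{j}{p^{0}_{(j)}}$ with $p^0_{(1)}\le\cdots\le p^0_{(|\H_0|)}$ the sorted null $p$-values.

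The probabilistic heart is a uniform-in-threshold tail bound on $\sup_t V(t)/t$. I would fix $s>0$ and union bound over ranks,
\[
\P\Big(\sup_{t}\tfrac{V(t)}{t}>s\Big)=\P\big(\exists\,j:\ p^0_{(j)}<\tfrac{j}{s}\big)\ \le\ \sum_{j=1}^{|\H_0|}\P\big(V(\tfrac{j}{s})\ge j\big),
\]
where the independence and sub-uniformity of the null $p$-values enter: $V(j/s)$ is a sum of independent Bernoullis with $\E[V(j/s)]\le|\H_0|\,j/s$, so a Chernoff bound gives $\P(V(j/s)\ge j)\le\exp(-j\rho)$ with rate $\rho=\log(s/|\H_0|)-1+|\H_0|/s$. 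The terms are geometric in $j$, so the series is $O(|\H_0|/s)$ once $s\gtrsim|\H_0|$, and at most $1$ always. Crucially, indexing the union by the rank $j$ (equivalently, peeling the threshold geometrically) is what keeps the union finite and the series convergent; a naive sup over all $t$ down to $0$ would diverge because a single very small null $p$-value blows up $V(t)/t$, and indeed $\E[\sup_t V(t)/t]=\infty$, which is why the truncation at $FDP\le1$ is essential.

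Finally I would integrate the tail against the cap. Using $FDP\le1$ to truncate and the substitution $s=un/\delta$,
\[
\E\Big[\max_{k:\widehat{FDP}(R_k)\le\delta}FDP(R_k)\Big]\ \le\ \frac{\delta}{n}\int_0^{n/\delta}\P\Big(\sup_t\tfrac{V(t)}{t}>s\Big)\,ds.
\]
Splitting this integral at $s_0\asymp|\H_0|$, bounding the integrand by $1$ below $s_0$ and by $O(|\H_0|/s)$ above, yields $\frac{\delta}{n}\big(s_0+|\H_0|\log\frac{n}{|\H_0|\delta}\big)$, which is the claimed $\frac{|\H_0|\delta}{n}$ times a logarithmic factor; the secondary $\log\log$ term arises from carrying the exact rate $\rho$ (rather than its leading $\log(s/|\H_0|)$ approximation) through the integration. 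The closing inequality $\le4\delta\log(9/\delta)$ is then pure algebra, maximizing the instance-dependent bound over $|\H_0|/n\in(0,1]$ and using $\delta<e^{-1}$.

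I expect the main obstacle to be the second step: obtaining a tail bound for $\sup_t V(t)/t$ that is simultaneously uniform over all thresholds and tight enough (constant-wise) to produce only the stated $\log$ and $\log\log$ factors. The danger is double-counting across scales or losing a multiplicative factor in the peeling; organizing the union over the discrete ranks $j$ and tracking the exact Chernoff rate $\rho$ is what keeps the constants under control and matches the explicit bound $\frac{|\H_0|\delta}{n}\big(2\log\frac{2n}{|\H_0|\delta}+\log(8e^5\log\frac{8n}{|\H_0|\delta})\big)$.
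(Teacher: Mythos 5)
Your argument is sound and reaches the stated conclusion, but it is a genuinely different route from the paper's. Both proofs ultimately control the same random quantity --- your $\sup_{t>0} V(t)/t$ equals $\max_{\ell} \ell / p_\ell^0$ in the paper's notation, since the constraint $\widehat{FDP}(R_k)\leq\delta$ forces $|R_k| \geq \tfrac{n}{\delta}\max_{i\in R_k}p_i \geq \tfrac{n}{\delta}p_\ell^0$ with $\ell = |R_k\cap\H_0|$ --- but the executions diverge. The paper decomposes $FDP(R_k)$ as $\sum_\ell \tfrac{\ell}{|R_k|}\1\{\ell = |R_k\cap\H_0|\}$, splits at a cutoff $\ell_0 \asymp \log\tfrac{n}{|\H_0|\delta}$, handles $\ell>\ell_0$ via the high-probability events $\{p_\ell^0 \geq \ell/(2|\H_0|)\}$ (paying $8e^{-\ell_0/8}$ for their failure), and for $\ell\leq\ell_0$ computes $\E[\ell/\lceil n u_{(\ell)}/\delta\rceil]$ exactly through Beta-density integrals of uniform order statistics; the $\log\log$ term in the bound is precisely $\log(\ell_0 e^4)$. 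You instead collapse everything into one tail bound on $\sup_t V(t)/t$ and integrate it against the truncation $FDP\leq 1$, which is cleaner and more modular: the rank-indexed union bound plus Chernoff is exactly the right discretization, the $\ell\geq 2$ terms decay like $s^{-2}$ and integrate to an additive constant rather than a $\log\log$ factor, and your observation that the truncation is essential (since $\E[\sup_t V(t)/t]=\infty$ for uniform nulls) is correct and is the same role the paper's $\1\{\mathcal{E}^c\}$ term plays. The one place you are slightly optimistic is the claim that tracking the exact Chernoff rate $\rho$ recovers the paper's explicit constants: the Chernoff bound on the rank-one term gives a tail of order $e|\H_0|/s$ rather than the exact union bound $|\H_0|/s$, and carrying that factor of $e$ (plus the factor $2$ from summing the geometric series) through the integral gives a leading constant near $2e$ in front of $\log\tfrac{n}{|\H_0|\delta}$, which for very small $\delta$ overshoots the stated $4\delta\log(9/\delta)$. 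Treating $\ell=1$ separately with the plain union bound $\P(p_{(1)}^0 < 1/s)\leq |\H_0|/s$ fixes this and in fact yields leading constant $1$ with no $\log\log$ term at all, so your approach, executed carefully, matches or improves the paper's explicit bound; in any case the qualitative conclusion of FDR control at level $O(\delta\log(1/\delta))$ follows either way.
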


Recall, if $\widehat{k} = \max\{ k : \widehat{FDP}(R_k) \leq \delta \}$ then $\E[ FDP(R_{\widehat{k}})] \leq \delta$ by the standard BH result.
When running the algorithm we recommend using BH at level $\delta$, not level  $O(\delta/\log(1/\delta))$.
As $T_i$ gets very large, $P_{i,T_i(t)} \rightarrow P_{i,*}$ and we know that if BH is run on $P_{i,*}$ at level $\delta$ then FDR would be controlled at level $\delta$. We believe this inflation to be somewhat of an artifact of our proofs.


\textbf{$\calR_t$ is FWER-controlled.}
A core obstacle in our analysis is the fact that we don't know the cardinality of $\H_1$.
If we did know $|\H_1|$ (and equivalently know $|\H_0| = n - |\H_1|$) then a FWER+FWPD algorithm is equivalent to the so-called top-$k$ multi-armed bandit problem \cite{kalyanakrishnan2012pac,simchowitz2017simulator} and controlling FWER would be relatively simple {using a Bonferroni correction:
\begin{align*}
\P\Big( \bigcup_{i \in \H_0} \cup_{t=1}^\infty \{ \widehat{\mu}_{i,t} - \phi(t, \tfrac{\delta}{n-|\H_1|}) \geq \mu_0\}\Big) \leq \sum_{i \in \H_0} \P\left(\cup_{t=1}^\infty \{ \widehat{\mu}_{i,t} - \phi(t, \tfrac{\delta}{|\H_0|}) \geq \mu_0\} \right)\leq |\H_0| \tfrac{\delta}{|\H_0|}
\end{align*}
which implies FWER-$\delta$.
Comparing the first expression immediately above to the definition of $\calR_t$ in the algorithm, it is clear our strategy is to use $|\calS_t|$ as a surrogate for $|\H_1|$.
Note that we could use the bound $|\H_0| = n - |\H_1| \leq n$ to guarantee FWER-$\delta$, but this could be very loose and induce an $n\log(n)$ sample complexity.
Using $|\calS_t|$ as a surrogate for $|\H_1|$ in $\calR_t$ is intuitive because by the FDR guarantee, we know $|\H_1| \geq \E[|\calS_t \cap \H_1|] = \E[|\calS_t|] - \E[|\calS_t \cap \H_0|] \geq (1-\delta) \E[ |\calS_t|]$, implying that $|\H_0| = n - |\H_1| \leq n - (1-\delta) \E[|\calS_t|]$ which may be much tighter than $n$ if $\E[|\calS_t|] \rightarrow |\H_1|$.
Because we only know $|\calS_t|$ and not its expectation, the extra factors in the surrogate expression used in $\calR_t$ are used to ensure correctness with high-probability (see Lemma~\ref{lem:fwer}).

\subsection{Sampling strategies to boost statistical power}
The above discussion about controlling false alarms for $\calS_t$ and $\calR_t$ holds for \emph{any} choice of arms $I_t$ and $J_t$ that may be pulled at time $t$.
Thus, $I_t$ and $J_t$ are chosen in order to minimize the amount of time necessary to add arms into $\calS_t$ and $\calR_t$, respectively, and optimize the sample complexity.

\textbf{TPR-$\delta,\tau$ setting} implies $\xi_t=\nu_t=1$.
Define the random set $\calI = \{ i \in \H_1 : \widehat{\mu}_{i,T_i(t)} + \phi(T_i(t),\delta) \geq \mu_i \ \ \forall t \in \mathbb{N} \}$.
Because $\phi$ is an anytime confidence bound, $\E\left[ \left| \calI \right| \right] \geq (1-\delta) |\H_1|$.
If $\Delta = \min_{i \in \H_1} \mu_i - \mu_0$, then $\min_{i \in \calI} \mu_i \geq \mu_0 + \Delta$ and we claim that with probability at least $1-O(\delta)$ (Section~\ref{sec:FDR_TPR_proof})
\begin{align*}
\textstyle\sum_{t=1}^\infty \1\{I_t \in \H_0, \calI \not\subseteq \calS_t \} &\leq \textstyle\sum_{t=1}^\infty \1\{I_t \in \H_0, \widehat{\mu}_{I_t,T_{I_t}(t)} + \phi(T_{I_t}(t),\delta) \geq \mu_0 + \Delta \}\\
&\leq c|\H_0| \Delta^{-2}\log(\log(\Delta^{-2}/\delta).
\end{align*}
Thus once this number of samples has been taken, either $\calI \subseteq \calS_t$, or arms in $\calI$ will be repeatedly sampled until they are added to $\calS_t$ since each arm $i \in \calI$ has its upper confidence bound larger than those arms in $\H_0$ by definition.
It is clear that an arm in $\H_1$ that is repeatedly sampled will eventually be added to $\calS_t$ since its anytime $p$-value of \eqref{eqn:anytime_pvalue} approaches $0$ at an exponential rate as it is pulled, and BH selects for low $p$-values.
A similar argument holds for $J_t$ and adding arms to $\calR_t$.

\begin{remark}
While the main objective of Algorithm~\ref{alg:FDRMAB} is to identify all arms with means above a given threshold, we note that prior to adding an arm to $\calS_t$ in the TPR setting (i.e., when $\xi_t=1$) Algorithm~\ref{alg:FDRMAB} behaves identically to the nearly optimal best-arm identification algorithm lil'UCB of \cite{jamieson2014lil}.
Thus, whether the goal is best-arm identification or to identify all arms with means above a certain threshold, Algorithm~\ref{alg:FDRMAB} is applicable.
\end{remark}

\textbf{FWPD-$\delta,\tau$ setting} is more delicate and uses inflated values of $\xi_t$ and $\nu_t$.
This time, we must ensure that $\{\H_1 \not\subseteq \calS_t\} \implies \max_{i \in \H_1 \cap S_t^c} \widehat{\mu}_{i,T_i(t)} + \phi(T_i(t),\delta) \geq \min_{i \in \H_1 \cap \calS_t^c} \mu_i \geq \mu_0 + \Delta$.
Because then we could argue that either $\H_1 \subset \calS_t$, or only arms in $\H_1$ are sampled until they are added to $\calS_t$ (mirroring the TPR argument).
As in the FWER setting above, if we knew the value of $|\H_1|$ the we could set $\xi_t \geq |\H_1|$ to observe that
\begin{align*}
\textstyle\P( \bigcup_{i \in \H_1} \cup_{t=1}^\infty \{ \widehat{\mu}_{i,t} + \phi(t, \tfrac{\delta}{\xi_t}) < \mu_i\} ) \leq \sum_{i \in \H_1} \P\left(\cup_{t=1}^\infty \{ \widehat{\mu}_{i,t} + \phi(t, \tfrac{\delta}{\xi_t}) < \mu_i\} \right)\leq |\H_1| \tfrac{\delta}{\xi_t}
\end{align*}
which is less than $\delta$, to guarantee such a condition.
But we don't know $|\H_1|$ so we use $|\calS_t|$ as a surrogate, resulting in the inflated definitions of $\xi_t$ and $\nu_t$ relative to the TPR setting.
The key argument is that either $\calI \not\subseteq \calS_t$ so that $\max_{i \in \calI \cap \calS_t^c} \widehat{\mu}_{i,T_i(t)} + \phi(T_i(t),\tfrac{\delta}{\xi_t}) \geq \mu_0+\Delta$ by the definition of $\calI$ (since $\xi_t \geq 1$), or $\calI \subset \calS_t$ and $|\calS_t| \geq \frac{1}{2} |\H_1|$ with high probability which implies $\xi_t = \max\{ 2|\S_t|, \tfrac{5}{3(1-4\delta)} \log(1/\delta) \} \geq |\H_1|$ and the union bound of the display above holds.

\section{Main Results}\label{sec:main_results}
In what follows, we say $f \lesssim g$ if there exists a $c >0$ that is independent of all problem parameters and $f \leq c g$.
The theorems provide an upper bound on the sample complexity $\tau \in \mathbb{N}$ as defined in Section~\ref{sec:problem_statement} for TPR-$\delta,\tau$ or FWER-$\delta,\tau$ that holds with probability at least $1-c \delta$ for different values of $c$\footnote{
Each theorem relies on different events holding with high probability, and consequently a different $c$ for each. To have $c=1$ for each of the four settings, we would have had to define different constants in the algorithm for each setting. We hope the reader forgives us for this attempt at minimizing clutter.}.
We begin with the least restrictive setting, resulting in the smallest sample complexity of all the results presented in this work.
Note the slight generalization in the below theorem where the means of $\H_0$ are assumed to be no greater than $\mu_0$.
\begin{theorem}[FDR, TPR]\label{thm:FDR_TPR}
Let $\H_1 = \{ i \in [n]: \mu_i > \mu_0\}$, $\H_0 = \{ i \in [n]: \mu_i \leq \mu_0 \}$. Define $\Delta_i = \mu_i - \mu_0$ for $i \in \H_1$, $\Delta = \min_{i \in \H_1} \Delta_i$, and $\Delta_i=\min_{j \in \H_1} \mu_j - \mu_i = \Delta + (\mu_0 - \mu_i)$ for $i\in\H_0$.
For all $t \in \mathbb{N}$ we have $\E[\frac{|\calS_t \cap \H_0|}{|\calS_t| \vee 1}] \leq \delta$.
Moreover, with probability at least $1-2\delta$ there exists a $T$ such that
\begin{align*}
\textstyle T \lesssim \min\big\{&n \Delta^{-2} \log( \log(\Delta^{-2})/\delta), \\
&\textstyle\sum_{i \in \H_0} \Delta_i^{-2} \log( \log(\Delta_i^{-2})/\delta) + \textstyle\sum_{i \in \H_1} \Delta_i^{-2}  \log( n \log(\Delta_i^{-2})/\delta)\big\}
\end{align*}
and $\E[\frac{|\calS_t \cap \H_1|}{|\H_1|}] \geq 1-\delta$ for all $t \geq T$.
Neither argument of the minimum follows from the other.
\end{theorem}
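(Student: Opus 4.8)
I would dispatch the in-expectation FDR claim first by reducing to Theorem~\ref{thm:newFDRcontrol}. For each $i\in\H_0$ set $p_i=\inf_{t\geq 1}P_{i,t}$ with $P_{i,t}$ the anytime $p$-value of \eqref{eqn:anytime_pvalue}; the always-valid property gives $\P(p_i\leq x)=\P(\exists t: P_{i,t}\leq x)\leq x$, and this sub-uniformity is preserved under the generalization $\mu_i\leq\mu_0$, since $\widehat\mu_{i,t}-\mu_0\leq\widehat\mu_{i,t}-\mu_i$ means $\{P_{i,t}\leq x\}$ is contained in the corresponding event for a mean-$\mu_0$ null. The $p_i$ are independent across $i\in\H_0$ because the reward streams are. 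At every fixed $t$, the selection $\calS_t=s(\widehat k)$ satisfies $|\calS_t|\geq\widehat k$ and, since $p_i\leq P_{i,T_i(t)}$, it obeys $\calS_t\cap\H_0\subseteq\{i\in\H_0:p_i\leq\delta'\widehat k/n\}$; thus $FDP(\calS_t)$ is at most $\max_{k:\widehat{FDP}(R_k)\leq\delta'}FDP(R_k)$ in the notation of Theorem~\ref{thm:newFDRcontrol}. Applying that theorem at level $\delta'=\delta/(6.4\log(36/\delta))$ bounds the expected FDP by $4\delta'\log(9/\delta')$, which is $\leq\delta$ for this choice of $\delta'$, uniformly in $t$.

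\textbf{Structural progress lemma.} For the sample complexity I would set up the random set $\calI=\{i\in\H_1:\widehat\mu_{i,T_i(t)}+\phi(T_i(t),\delta)\geq\mu_i\ \forall t\}$, so that $\E[|\calI|]\geq(1-\delta)|\H_1|$ by the anytime confidence bound and $\min_{i\in\calI}\mu_i\geq\mu_0+\Delta$. The key deterministic observation is that whenever $\calI\not\subseteq\calS_t$ there is some $j\in\calI\setminus\calS_t$ with $\widehat\mu_{j,T_j(t)}+\phi(T_j(t),\delta)\geq\mu_j\geq\mu_0+\Delta$; since $I_t$ maximizes the upper confidence bound over $[n]\setminus\calS_t$ (and $\xi_t=1$ here), the pulled arm inherits $\widehat\mu_{I_t,T_{I_t}(t)}+\phi(T_{I_t}(t),\delta)\geq\mu_0+\Delta$. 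This yields the displayed domination $\sum_t\1\{I_t\in\H_0,\calI\not\subseteq\calS_t\}\leq\sum_t\1\{I_t\in\H_0,\ \widehat\mu_{I_t,T_{I_t}(t)}+\phi(T_{I_t}(t),\delta)\geq\mu_0+\Delta\}$.

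\textbf{The two pull budgets and the minimum.} For the nulls, I bound the pulls of $i\in\H_0$ with upper bound exceeding $\mu_0+\Delta=\mu_i+\Delta_i$ by $N_i:=\#\{s:\widehat\mu_{i,s}+\phi(s,\delta)\geq\mu_i+\Delta_i\}$; on $\{\widehat\mu_{i,s}-\mu_i\leq\phi(s,\delta)\}$ this forces $2\phi(s,\delta)\geq\Delta_i$, so inverting $\phi(s,\delta)\leq\sqrt{c_\phi\log(\log_2(2s)/\delta)/s}$ gives $N_i\lesssim\Delta_i^{-2}\log(\log(\Delta_i^{-2})/\delta)$, and $\Delta_i\geq\Delta$ gives the uniform bound $\Delta^{-2}\log(\log(\Delta^{-2})/\delta)$. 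Crucially I would control $\sum_{i\in\H_0}N_i$ by a single concentration step exploiting independence of the $N_i$—charging one factor of $\delta$—rather than a union bound over nulls (which would cost $|\H_0|\delta$). This bounds the null budget by both $|\H_0|\Delta^{-2}\log(\log(\Delta^{-2})/\delta)$ and $\sum_{i\in\H_0}\Delta_i^{-2}\log(\log(\Delta_i^{-2})/\delta)$. Once the nulls are exhausted only arms of $\calI$ are preferentially sampled, and $i\in\calI$ enters $\calS_t$ once its anytime $p$-value \eqref{eqn:anytime_pvalue} drops below the BH threshold $\delta'\widehat k/n$: using the worst case $\widehat k\geq1$ yields the instance-dependent count $\Delta_i^{-2}\log(n\log(\Delta_i^{-2})/\delta)$ (second term of the second argument), whereas using that $\widehat k$ grows to order $|\calI|$ as the easy positives enter loosens the threshold, removes the $\log n$, and—after $k\log(n/k)\lesssim n$—produces the clean first argument $n\Delta^{-2}\log(\log(\Delta^{-2})/\delta)$. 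Summing null and positive budgets bounds $T$ by both arguments of the minimum, with the positive side contributing the second $\delta$.

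\textbf{TPR conclusion and main obstacle.} On the resulting event $\calG$ of probability $\geq1-2\delta$ one has $\calI\subseteq\calS_t$ for all $t\geq T$, hence $|\calS_t\cap\H_1|\geq|\calI|$; taking expectations and invoking $\E[|\calI|]\geq(1-\delta)|\H_1|$ (absorbing $\P(\calG^c)$ into the algorithm's confidence constants, as the footnote permits) gives $\E[|\calS_t\cap\H_1|/|\H_1|]\geq1-\delta$ for $t\geq T$. I expect the main obstacle to be the feedback between the BH set and the sampling rule: $\calS_t$ is \emph{not} monotone, since BH can eject an arm when other $p$-values shift, so I must show that once an arm of $\calI$ enters $\calS_t$ it stays—its $p$-value decreases monotonically as it continues to be sampled, and the threshold $\delta'\widehat k/n$ only rises as the set grows—while simultaneously arguing that $\widehat k$ reaches order $|\calI|$ to license the looser threshold behind the first bound. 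Disentangling this coupling, while keeping the whole argument at cost $O(\delta)$ by never union-bounding over individual nulls, is the crux; the null-exhaustion and $p$-value-decay steps are then routine inversions of $\phi$.
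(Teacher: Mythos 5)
Your proposal follows the paper's proof essentially step for step: the same reduction of FDR control to Theorem~\ref{thm:newFDRcontrol} via $P_{i,*}=\inf_t P_{i,t}$ for the nulls, the same random set $\calI$ with $\E|\calI|\geq(1-\delta)|\H_1|$, the same split of $T$ into null pulls while $\calI\not\subseteq\calS_t$ plus total $\H_1$ pulls, the same aggregate concentration on $\sum_i \Delta_i^{-2}\log(1/\rho_i)$ in place of a per-arm union bound (the paper's events $\mathcal{E}_{4,0},\mathcal{E}_{4,1}$), and the same one-at-a-time permutation argument with $\sum_{i\leq k}\log(n/i)\lesssim n$ to obtain both arguments of the minimum. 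The monotonicity of $\calS_t$ that you flag as the crux is simply asserted in the paper; it does hold, but for a slightly different reason than you sketch --- arms in $\calS_t$ are never resampled, so their $p$-values are frozen, and since the only $p$-value that changes between rounds belongs to an arm outside $\calS_t$ and the sets $s(k)$ are nested in $k$, the selected index $\widehat{k}$ cannot drop in a way that ejects a current member.
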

If the means of $\H_1$ are very diverse so that $\max_{i \in \H_1} \mu_i-\mu_0 \gg \min_{i \in \H_1} \mu_i-\mu_0$ then the second argument of the min in Theorem~\ref{thm:FDR_TPR} can be tighter than the first.
But as discussed above, this advantage is inconsequential if $|\H_1| = o(n)$.
The remaining theorems are given in terms of just $\Delta$.
The $\log\log(\Delta^{-2})$ dependence is due to inverting the $\phi$ confidence interval and is unavoidable on at least one arm when $\Delta$ is unknown a priori due to the law of the iterated logarithm \cite{HartmanWintnerLIL,jamieson2014lil,chen2017towards}.

Informally, Theorem~\ref{thm:FDR_TPR} states that if just most true detections suffice while not making too many mistakes, then $O(n)$ samples suffice.
The first argument of the min is known to be tight in a minimax sense up to doubly logarithmic factors due to the lower bound of \cite{castro2014adaptive}.
As a consequence of this work, an algorithm inspired by Algorithm~\ref{alg:FDRMAB} in this setting is now in production at one of the largest A/B testing platforms on the web.
The full proof of Theorem~\ref{thm:FDR_TPR} (and all others) is given in the Appendix due to space.
\begin{theorem}[FDR, FWPD]\label{thm:FDR_FWPD}
For all $t \in \mathbb{N}$ we have $\E[\frac{|\calS_t \cap \H_0|}{|\calS_t| \vee 1}] \leq \delta$.
Moreover, with probability at least $1-5\delta$, there exists a $T$ such that
\begin{align*}
T \lesssim  (n-|\H_1|) \Delta^{-2} \log (  \max\{|\H_1|,& \log\log(n/\delta)\} \log(\Delta^{-2})/ \delta)  + |\H_1| \Delta^{-2} \log ( \log(\Delta^{-2})/ \delta)
\end{align*}
and $\H_1 \subseteq \calS_t$ for all $t \geq T$.
\end{theorem}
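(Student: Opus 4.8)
The plan is to decouple the FDR guarantee, which is insensitive to the sampling rule, from the FWPD and sample-complexity guarantee, where the inflated parameter $\xi_t$ does the work. For the FDR claim, note that $\calS_t$ is by construction the Benjamini--Hochberg set applied to the always-valid $p$-values of \eqref{eqn:anytime_pvalue} at level $\delta'=\tfrac{\delta}{6.4\log(36/\delta)}$; since $\{P_{i,t}\}_{i\in\H_0}$ are sub-uniform, Theorem~\ref{thm:newFDRcontrol} applied at level $\delta'$ gives $\E[\tfrac{|\calS_t\cap\H_0|}{|\calS_t|\vee 1}]\le O(\delta'\log(1/\delta'))$, and $\delta'$ is chosen precisely so this is at most $\delta$ for every $t$. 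This step is identical to the FDR+TPR setting and needs nothing new.

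For the FWPD statement I would work on the intersection of three high-probability events, arranged so that the total failure probability is at most $5\delta$: (i) every arm of $\H_1$ has a valid \emph{inflated} upper bound $\widehat{\mu}_{i,t}+\phi(t,\delta/|\H_1|)\ge\mu_i$ for all $t$, which holds with probability $\ge 1-\delta$ by a union bound over the $|\H_1|$ arms; (ii) the subset $\calI=\{i\in\H_1:\widehat{\mu}_{i,t}+\phi(t,\delta)\ge\mu_i\ \forall t\}$ has $|\calI|\ge\tfrac12|\H_1|$, which holds with probability $\ge 1-2\delta$ since $\E[|\H_1\setminus\calI|]\le\delta|\H_1|$ and Markov; and (iii) a one-sided concentration event for the null arms that both controls how long they remain active and, via the anytime FDP bound of Lemma~\ref{lem:fdr_high_prob}, keeps $|\calS_t|$ (hence $\xi_t=\max\{2|\calS_t|,\tfrac{5}{3(1-4\delta)}\log(1/\delta)\}$) within a constant factor of $|\H_1|$; this last event is the source of the $\log\log(n/\delta)$ floor.

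The heart of the proof is a two-phase bootstrap that breaks the circular dependence between ``$\xi_t\ge|\H_1|$'' and ``$\H_1\subseteq\calS_t$''. In Phase~1, since $\xi_t\ge1$ always, monotonicity of $\phi$ in its second argument upgrades the level-$\delta$ bound defining $\calI$ into $\widehat{\mu}_{i,T_i(t)}+\phi(T_i(t),\delta/\xi_t)\ge\mu_i\ge\mu_0+\Delta$ for every $i\in\calI$; this is exactly the invariant of the TPR analysis, so the same reasoning shows that after the null arms are pulled a bounded number of times and the arms of $\calI$ are sampled until admitted by BH, we reach $\calI\subseteq\calS_t$ and therefore $|\calS_t|\ge|\calI|\ge\tfrac12|\H_1|$. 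At that point $\xi_t\ge 2|\calS_t|\ge|\H_1|$, so $\delta/\xi_t\le\delta/|\H_1|$ and event (i) makes the inflated upper bound valid for \emph{all} of $\H_1$ simultaneously. Phase~2 then reruns the TPR-style argument for all of $\H_1$: while $\H_1\not\subseteq\calS_t$ the pulled arm $I_t$ is either null (charged to the finite null budget) or a not-yet-discovered member of $\H_1$ whose anytime $p$-value is driven below the BH threshold so it is admitted; a monotonicity check on $s(\cdot)$ shows admissions persist, giving $\H_1\subseteq\calS_t$ for all large $t$.

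Finally I would tally samples. A null arm $i$ is pulled only while its inflated UCB exceeds $\mu_0+\Delta$, i.e. while $\phi(T_i(t),\delta/\xi_t)\gtrsim\Delta$; inverting $\phi$ with $\xi_t\lesssim\max\{|\H_1|,\log\log(n/\delta)\}$ gives $T_i(t)\lesssim\Delta^{-2}\log(\max\{|\H_1|,\log\log(n/\delta)\}\log(\Delta^{-2})/\delta)$, and summing over the $n-|\H_1|$ nulls yields the first term. An arm $i\in\H_1$ is sampled only until $P_{i,T_i(t)}\le\delta'\widehat{k}/n$ with $\widehat{k}$ within a constant of $|\H_1|$, costing $\Delta_i^{-2}[\log(n/|\H_1|)+\log(1/\delta')+\log\log]$ per arm; crucially the $\log(n/|\H_1|)$ piece summed over $\H_1$ is dominated by the null term $(n-|\H_1|)\Delta^{-2}\log(\max\{|\H_1|,\cdots\})$ whenever $|\H_1|\le n/2$ (and is $O(1)$ otherwise), so only $|\H_1|\Delta^{-2}\log(\log(\Delta^{-2})/\delta)$ survives as the second term. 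I expect the main obstacle to be exactly the bootstrap: certifying $|\calS_t|\ge\tfrac12|\H_1|$ \emph{before} the inflated bounds can be invoked, while simultaneously bounding $\xi_t=O(|\H_1|)$ from above so the null budget does not blow up --- both directions require coupling the random, adaptively defined $\xi_t$ to the deterministic $|\H_1|$ through the FDR guarantee and the anytime FDP control of Lemma~\ref{lem:fdr_high_prob}.
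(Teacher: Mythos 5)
Your proposal is correct and follows essentially the same route as the paper: the FDR claim is the generic Theorem~\ref{thm:newFDRcontrol}/Lemma~\ref{lem:expected_FDR} argument, and your ``two-phase bootstrap'' is exactly the case analysis of Lemma~\ref{lem:H1_ucb_highest} (if $|\calS_t|<\tfrac12|\H_1|$ some arm of the well-behaved half of $\H_1$ is still outside $\calS_t$ with a valid inflated UCB; once $|\calS_t|\geq\tfrac12|\H_1|$, $\xi_t\geq|\H_1|$ and the union-bound event $\mathcal{E}_{2,1}$ covers all of $\H_1$), combined with Lemma~\ref{lem:fdr_high_prob} to cap $\xi_t$ and the same $k\log(n/k)\leq n$ absorption of the alternatives' cost. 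The only deviations are cosmetic: you use Markov (cost $2\delta$) where the paper uses Bernstein with the $\beta$-floor (cost $\delta$) for the ``half of $\H_1$'' event, and your budget omits the $\mathcal{E}_{4,1}$-type event controlling $\sum_{i\in\H_1}\Delta^{-2}\log(1/\rho_i)$, so your failure probability tallies to $6\delta$ rather than $5\delta$ --- immaterial given the paper's own footnote about these constants.
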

Here $T$ roughly scales like $(n-|\H_1|) \max\{\log(|\H_1|), \log\log\log(n/\delta)\}  + |\H_1|$ where the $\log\log\log(n/\delta)$ term comes from a high probability bound on the false discovery proportion for anytime $p$-values (in contrast to just expectation) in Lemma~\ref{lem:fdr_high_prob} that may be of independent interest.
While negligible for all practical purposes, it appears unnatural and we suspect that this is an artifact of our analysis.
We note that if $|\H_1| = \Omega(\log(n))$ then the sample complexity sheds this awkwardness\footnote{In the asymptotic $n$ regime, it is common to study the case when $|\H_1| = n^\beta$ for $\beta \in (0,1)$ \cite{castro2014adaptive,haupt2011distilled}.}.

The next two theorems are concerned with controlling FWER on the set $\calR_t$ and determining how long it takes before the claimed detection conditions are satisfied on the set $\calR_t$.
Note we still have that FDR is controlled on the set $\mathcal{S}_t$ but now this set feeds into $\calR_t$.

\begin{theorem}[FWER, FWPD]\label{thm:FWER_FWPD}
For all $t$ we have $\E[\frac{|\calS_t \cap \H_0|}{|\calS_t| \vee 1}] \leq \delta$.
Moreover, with probability at least $1-6\delta$, we have $\H_0 \cap \calR_t = \emptyset$ for all $t \in \mathbb{N}$ and there exists a $T$ such that
\begin{align*}
T \lesssim&  (n-|\H_1|) \Delta^{-2} \log (  \max\{|\H_1|, \log\log(n/\delta)\} \log(\Delta^{-2}) / \delta) \\
&+ |\H_1| \Delta^{-2} \log ( \max\{n-(1-2\delta(1+4\delta))|\H_1|, \log\log(n/\delta)\}\log(\Delta^{-2}) / \delta)
\end{align*}
and $\H_1 \subseteq \calR_t$ for all $t \geq T$.
Note, together this implies $\H_1 = \calR_t$ for all $t \geq T$.
\end{theorem}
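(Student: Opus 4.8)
The first claim, FDR control of $\calS_t$, is inherited from the construction of $\calS_t$ alone and is independent of the sampling rule. By definition $\calS_t$ is the Benjamini--Hochberg selection at level $\delta'=\tfrac{\delta}{6.4\log(36/\delta)}$ applied to the anytime $p$-values $P_{i,t}$ of \eqref{eqn:anytime_pvalue}. For $i\in\H_0$ these are independent and anytime sub-uniform, so that $P_{i,T_i(t)}\geq\inf_s P_{i,s}=:\bar P_i$ with $\P(\bar P_i\leq x)\leq x$. Applying Theorem~\ref{thm:newFDRcontrol} to the collection $\{\bar P_i\}$, which upper bounds the false discovery proportion of $\calS_t$ at every $t$ simultaneously (this is Lemma~\ref{lem:expected_FDR}), yields $\E[\tfrac{|\calS_t\cap\H_0|}{|\calS_t|\vee 1}]\leq 4\delta'\log(9/\delta')\leq\delta$ for all $t\in\mathbb N$.

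For FWER control of $\calR_t$ the plan is to show $\chi_t\geq|\H_0|$ for all $t$ on a high-probability event, reducing the claim to a Bonferroni union bound. Since $\calR_t$ adds an arm $i$ only when $\widehat\mu_{i,T_i(t)}-\phi(T_i(t),\tfrac{\delta}{\chi_t})\geq\mu_0$ and $\phi$ is non-increasing in its second argument, whenever $\chi_t\geq|\H_0|$ the anytime guarantee gives
\begin{align*}
\P\Big(\bigcup_{i\in\H_0}\bigcup_{t=1}^\infty\{\widehat\mu_{i,t}-\phi(t,\tfrac{\delta}{|\H_0|})\geq\mu_0\}\Big)\leq|\H_0|\tfrac{\delta}{|\H_0|}=\delta.
\end{align*}
It remains to certify $\chi_t\geq|\H_0|$. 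Writing $|\calS_t|=|\calS_t\cap\H_1|+|\calS_t\cap\H_0|\leq|\H_1|+|\calS_t\cap\H_0|$, the anytime high-probability false-discovery-proportion bound of Lemma~\ref{lem:fdr_high_prob} controls $|\calS_t\cap\H_0|$ by $2\delta'(1+4\delta')|\calS_t|$ plus the logarithmic slack built into $\chi_t$, uniformly in $t$ with probability $1-O(\delta)$. Substituting into $\chi_t=n-(1-2\delta'(1+4\delta'))|\calS_t|+\tfrac{4(1+4\delta')}{3}\log(5\log_2(n/\delta')/\delta')$ and using $n-|\H_1|=|\H_0|$ gives $\chi_t\geq|\H_0|$ on that event; combined with the display this is exactly Lemma~\ref{lem:fwer} and yields $\H_0\cap\calR_t=\emptyset$ for all $t$ with probability $1-O(\delta)$.

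For the sample complexity I would split the run into two nested detection tasks. First, the analysis of Theorem~\ref{thm:FDR_FWPD} bounds the time $T_1$ after which $\H_1\subseteq\calS_t$ for all $t\geq T_1$, supplying the first term $(n-|\H_1|)\Delta^{-2}\log(\max\{|\H_1|,\log\log(n/\delta)\}\log(\Delta^{-2})/\delta)$ together with a subdominant $|\H_1|\Delta^{-2}\log(\log(\Delta^{-2})/\delta)$ from the $I_t$ pulls on $\H_1$. Second, once $\H_1\subseteq\calS_t$ the FWER branch pulls $J_t\in\calS_t\setminus\calR_t$ with the largest upper bound at confidence $\tfrac{\delta}{\nu_t}$. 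On the event that these upper bounds are valid for all un-rejected $\H_1$ arms (the analogue for $J_t$ of the $\H_1\subseteq\calI$ event of Section~\ref{sec:algorithm}, which closes because $\nu_t=\max\{|\calS_t|,1\}\geq|\calS_t\cap\H_1|$), whenever $\H_1\not\subseteq\calR_t$ some unrejected $\H_1$ arm keeps an upper bound at least $\mu_0+\Delta$, so any $J_t\in\H_0$ must also have upper bound at least $\mu_0+\Delta$. Mirroring the $I_t$ argument of the TPR analysis, this bounds the $J_t$ pulls on each $\H_0$ arm by $O(\Delta^{-2}\log(\nu_t\log(\Delta^{-2})/\delta))$ with $\nu_t\lesssim|\H_1|$ on the FDR event, so these total $O((n-|\H_1|)\Delta^{-2}\log(|\H_1|\log(\Delta^{-2})/\delta))$ and are absorbed into the first term. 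Finally, on the good event an arm $i\in\H_1$ enters $\calR_t$ once $T_i(t)\gtrsim\Delta_i^{-2}\log(\chi_t/\delta)$, i.e. once its lower bound at confidence $\tfrac{\delta}{\chi_t}$ exceeds $\mu_0$; since $\chi_t\approx n-(1-2\delta(1+4\delta))|\H_1|$ and $\Delta_i\geq\Delta$, summing over the $|\H_1|$ arms yields the second term. Taking $T$ to be the largest over arms of the time each enters $\calR_t$ gives the stated bound, and $\H_1=\calR_t$ for $t\geq T$ follows by combining with the FWER claim.

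The main obstacle is the interlocking use of $|\calS_t|$ as a simultaneous surrogate for both $|\H_1|$ (through $\xi_t$ and $\nu_t$, which must dominate the number of $\H_1$ arms whose upper bounds are needed) and $|\H_0|$ (through $\chi_t$, which must satisfy $\chi_t\geq|\H_0|$), where both requirements must hold for \emph{all} $t\in\mathbb N$ at once. Enforcing them uniformly in time is precisely what forces the anytime high-probability false-discovery-proportion control of Lemma~\ref{lem:fdr_high_prob} rather than the in-expectation FDR bound, and the distinct high-probability events it is coupled with (the anytime FDP bound, the $\H_1$ upper-bound validity events feeding $I_t$ and $J_t$, and the $\H_0$ lower-bound event) are what accumulate into the stated $1-6\delta$ confidence.
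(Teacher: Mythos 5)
Your proposal is correct and follows essentially the same route as the paper: FDR control of $\calS_t$ via Lemma~\ref{lem:expected_FDR}, FWER control of $\calR_t$ by using the anytime bound on $|\calS_t|$ from Lemma~\ref{lem:fdr_high_prob} to certify $\chi_t \geq |\H_0|$ and then a Bonferroni argument (this is exactly Lemma~\ref{lem:fwer}), and a sample-complexity decomposition into the time until $\H_1\subseteq\calS_t$ (the Theorem~\ref{thm:FDR_FWPD} analysis), the $J_t\in\H_0$ pulls, and the $J_t\in\H_1$ pulls, each bounded by per-arm inversions of $\phi$ at the surrogate confidence levels. The only gap relative to the paper is bookkeeping: you do not spell out the events controlling $\sum_i \log(1/\rho_i)$ (the paper's $\mathcal{E}_{4,0},\mathcal{E}_{4,1}$) that turn the random $\phi^{-1}(\cdot,\delta\rho_i/\kappa)$ sums into deterministic bounds, but this is a routine ingredient already used in the earlier theorems.
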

Theorem~\ref{thm:FWER_FWPD} has the strongest conditions, and therefore the largest sample complexity.
Ignoring $\log\log\log(n)$ factors, $T$ roughly scales as $(n-|\H_1|) \log(|\H_1|) + |\H_1| \log(n-(1-2\delta(1+4\delta))|\H_1|)$.
Inspecting the top-k lower bound of \cite{simchowitz2017simulator} where the arms' means in $\H_1$ are equal to $\mu_0 + \Delta$, the arms' means in $\H_0$ are equal to $\mu_0$, and the algorithm has knowledge of the cardinality of $\H_1$, a necessary sample complexity of $(n-|\H_1|)\log(|\H_1|) + |\H_1| \log(n-|\H_1|)$ is given.
It is not clear whether this small difference of $\log(n-(1-2\delta(1+4\delta)) |\H_1|)$ versus $\log(n-|\H_1|)$ is an artifact of our analysis, or a fundamental limitation when the cardinality $|\H_1|$ is unknown.
We now state our final theorem.

\begin{theorem}[FWER, TPR]\label{thm:FWER_TPR}
For all $t$ we have $\E[\frac{|\calS_t \cap \H_0|}{|\calS_t| \vee 1}] \leq \delta$.
Moreover, with probability at least $1-7\delta$ we have $\H_0 \cap \calR_t = \emptyset$ for all $t \in \mathbb{N}$ and there exists a $T$ such that
\begin{align*}
T \lesssim&  (n-|\H_1|) \Delta^{-2} \log (  \log(\Delta^{-2})/ \delta) \\
&+ |\H_1| \Delta^{-2} \log ( \max\{n-(1-\eta)|\H_1|, \log\log(n\log(1/\delta)/\delta)\}\log(\Delta^{-2}) / \delta)
\end{align*}
and $\E[\frac{|\calR_t \cap \H_1|}{|\H_1|}] \geq 1-\delta$ for all $t \geq T$, where $\eta=( 1 - 3\delta- \sqrt{2\delta \log(1/\delta)/|\H_1|})$.
\end{theorem}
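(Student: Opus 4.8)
The plan is to establish three things separately and then combine them: FDR control of $\calS_t$ at every time, FWER control of $\calR_t$, and a bound on the time $T$ after which $\E[|\calR_t \cap \H_1|/|\H_1|] \geq 1-\delta$. Since this setting runs Algorithm~\ref{alg:FDRMAB} with the TPR constants $\xi_t=\nu_t=1$ while retaining the FWER machinery that maintains $\calR_t$, the first two conclusions are inherited from the general false-alarm analysis: FDR control of $\calS_t$ follows from Theorem~\ref{thm:newFDRcontrol} applied to the anytime $p$-values of \eqref{eqn:anytime_pvalue} exactly as in the other settings, and the event $\H_0 \cap \calR_t = \emptyset$ for all $t$ follows from Lemma~\ref{lem:fwer}, which certifies $\chi_t$ as a valid high-probability surrogate for $|\H_0|$ in the Bonferroni-type admission rule. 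The work is therefore concentrated on the detection guarantee for $\calR_t$, which differs from Theorem~\ref{thm:FDR_TPR} in that arms must be driven into $\calR_t$ rather than merely into $\calS_t$.

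The heart of the argument is a two-stage accounting. First I would define the anytime-good set $\calI = \{ i \in \H_1 : \widehat{\mu}_{i,T_i(t)} + \phi(T_i(t),\delta) \geq \mu_i \ \forall t\}$, on which $\min_{i\in\calI}\mu_i \geq \mu_0+\Delta$ and, since each arm lies in $\calI$ with probability at least $1-\delta$, $\E[|\calI|]\geq(1-\delta)|\H_1|$. Stage one reuses the $\calS_t$ analysis of Section~\ref{sec:FDR_TPR_proof}: because $\xi_t=1$, each $\H_0$ arm needs only $O(\Delta^{-2}\log(\log(\Delta^{-2})/\delta))$ of the $I_t$ pulls before its upper confidence bound at level $\delta$ falls below $\mu_0+\Delta$, so after $\lesssim (n-|\H_1|)\Delta^{-2}\log(\log(\Delta^{-2})/\delta)$ such pulls we have $\calI\subseteq\calS_t$; this is the first term, with no $\log n$ factor precisely because $\xi_t=1$. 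Stage two bounds the $J_t$ pulls: an arm $i\in\calI\cap\calS_t$ is admitted to $\calR_t$ once $\phi(T_i(t),\delta/\chi_t)\leq\Delta$, i.e.\ after $O(\Delta^{-2}\log(\chi_t\log(\Delta^{-2})/\delta))$ pulls, and there are at most $|\H_1|$ such arms, giving the second term once $\chi_t$ is controlled. Because each round contributes one $I_t$ pull and at most one $J_t$ pull, and the $J_t$ pulls on an arm can only begin after it has entered $\calS_t$, the total number of rounds is bounded by the sum of the two stage counts.

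The step demanding the most care, and the origin of the constant $\eta$, is controlling the surrogate $\chi_t = n - (1-2\delta'(1+4\delta'))|\calS_t| + \text{(lower order)}$ that sets the admission level $\delta/\chi_t$. Upper bounding $\chi_t$ requires a high-probability lower bound on $|\calS_t|$, which I would obtain from a Bernstein/Chernoff concentration on the number of $\H_1$ arms whose upper confidence bounds fail: this count has mean at most $\delta|\H_1|$, so with probability at least $1-\delta$ we get $|\calI|\geq(1-\delta-\sqrt{2\delta\log(1/\delta)/|\H_1|})|\H_1|$, which produces the deviation term in $\eta$. Once $\calI\subseteq\calS_t$ this yields $|\calS_t|\geq\eta|\H_1|$ and hence $\chi_t\leq n-(1-\eta)|\H_1|$, exactly the quantity inside the logarithm of the second term; the outer $\max\{\cdot,\log\log(n\log(1/\delta)/\delta)\}$ reflects the additive lower-order floor of $\chi_t$. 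A union bound over the failure events --- the FDR event, the FWER event of Lemma~\ref{lem:fwer}, the upper-confidence event defining $\calI$, the lower-confidence event used to admit $\H_1$ arms to $\calR_t$ at level $\delta/\chi_t$, and the concentration event above --- accounts for the $1-7\delta$ probability, and on their intersection every arm of $\calI$ is admitted to $\calR_t$ by time $T$. The TPR expectation bound then follows by the same averaging used for $\calS_t$, with the algorithm's constants tuned so the aggregate slack is $\delta$.

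I expect the principal obstacle to be disentangling the two-way coupling between $\calS_t$ and $\calR_t$: the admission threshold for $\calR_t$ depends through $\chi_t$ on $|\calS_t|$, while $\calS_t$ itself grows only through the $I_t$ pulls, and the FWER guarantee of Lemma~\ref{lem:fwer} must remain valid on the same event on which the $\chi_t$ upper bound holds. Making the concentration bound on $|\calS_t|$ hold uniformly over all $t$ (rather than at a single time) while keeping it compatible with that lemma, and then converting the resulting high-probability, good-event detection statement into the unconditional expectation $\E[|\calR_t\cap\H_1|/|\H_1|]\geq 1-\delta$, is where the bookkeeping is most delicate.
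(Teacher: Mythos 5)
Your overall architecture matches the paper's proof almost exactly: the same random set $\calI = \{i \in \H_1 : \rho_i \geq \delta\}$ with $\E[|\calI|] \geq (1-\delta)|\H_1|$, the same appeal to Lemma~\ref{lem:fwer} for $\H_0 \cap \calR_t = \emptyset$, the same Bernstein concentration (the paper's event $\mathcal{E}_5$) to lower-bound $|\calS_t| \geq |\calI| \geq \eta|\H_1|$ and hence upper-bound $\chi_t \leq n - (1-\eta)|\H_1| + \text{(lower order)}$, and the same decomposition of $T$ into rounds before $\calI \subseteq \calS_t$ (bounded by the Theorem~\ref{thm:FDR_TPR} argument, since $\xi_t = 1$) and rounds after. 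So far this is the paper's argument.

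There is, however, one concrete gap in your stage-two accounting. You bound the $J_t$ pulls by saying an arm $i \in \calI \cap \calS_t$ needs $O(\Delta^{-2}\log(\chi_t \log(\Delta^{-2})/\delta))$ pulls to enter $\calR_t$ and ``there are at most $|\H_1|$ such arms.'' But $J_t = \arg\max_{i \in \calS_t \setminus \calR_t}$ of an upper confidence bound, and $\calS_t$ is only FDR-controlled: it may (and in general does) contain arms of $\H_0$, which by the FWER guarantee are \emph{never} admitted to $\calR_t$ and therefore sit in $\calS_t \setminus \calR_t$ forever, eligible to be selected as $J_t$ at every round. Your claim that the total number of rounds is the sum of the two stage counts silently assumes $J_t \in \H_1$ always, so the rounds in which $\calI \subseteq \calS_t$, $\calI \not\subseteq \calR_t$, and $J_t \in \H_0$ are unaccounted for; without bounding them, $T$ is not controlled. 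The paper handles this as a separate ``second sum'': while $\calI \cap (\calS_t \setminus \calR_t) \neq \emptyset$, some arm of $\calI$ in that set has $\widehat{\mu}_{j,T_j(t)} + \phi(T_j(t),\delta) \geq \mu_0 + \Delta$ (here $\nu_t = 1$), whereas each $i \in \H_0$ satisfies $\widehat{\mu}_{i,T_i(t)} + \phi(T_i(t),\delta) \leq \mu_0 + 2\phi(T_i(t),\delta\rho_i)$, so $i$ can be chosen as $J_t$ at most $\phi^{-1}(\Delta/2,\delta\rho_i)$ times; summing over $\H_0$ on the event $\mathcal{E}_{4,0}$ contributes an additional $(n-|\H_1|)\Delta^{-2}\log(\log(\Delta^{-2})/\delta)$, which is absorbed into the first term of the theorem. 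This is the same UCB-domination argument you already use in stage one, so the fix is routine, but it is a necessary piece of the proof rather than bookkeeping that can be waved away.
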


\section{Experiments}
The distribution of each arm equals $\nu_i = \mathcal{N}(\mu_i,1)$ where $\mu_i = \mu_0 = 0$ if $i \in \H_0$, and $\mu_i>0$ if $i \in \H_1$.
We consider three algorithms: $i$) uniform allocation with anytime BH selection as done in Algorithm~1, $ii$) successive elimination (SE) (see Appendix~\ref{sec:succ-elim})\footnote{Inspired by the best-arm identification literature \cite{even2006action}.} that performs uniform allocation on only those arms that have not yet been selected by BH, and $iii$) Algorithm 1 (UCB).
Algorithm 1 and the BH selection rule for all algorithms use $\phi(t,\delta) = \sqrt{\tfrac{2\log(1/\delta)+6 \log\log(1/\delta) + 3 \log(\log(e t/2))}{t}}$ from \cite[Theorem 8]{kaufmann2016complexity}. In addition, we ran BH at level $\delta$ instead of $\delta/(6.4\log(36/\delta))$ as discussed in section \ref{sec:main_results}.
Here we present the sample complexity for TPR+FDR with $\delta=0.05$ and different parameterizations of $\mu$, $n$, $|\H_1|$.\\[6pt]
\begin{tabular}{l l l l l} 
\includegraphics[width=.252\textwidth,trim={.6cm .7cm .7cm .7cm},clip]{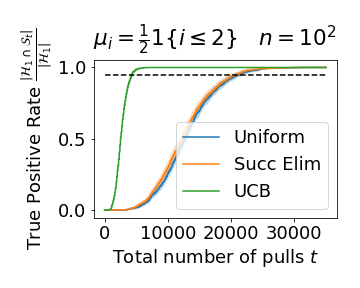} &
\includegraphics[width=.201\textwidth,trim={.3cm .7cm .7cm .4cm},clip]{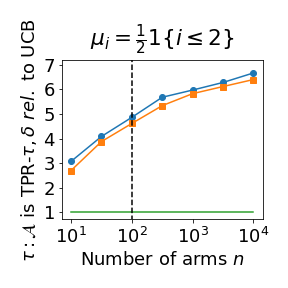} &
\includegraphics[width=.18\textwidth,trim={1.3cm .7cm .7cm .4cm},clip]{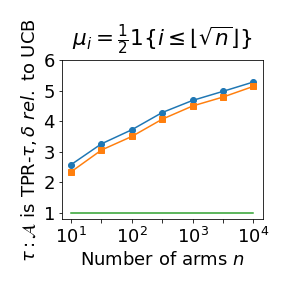} &
\includegraphics[width=.18\textwidth,trim={1.3cm .7cm .7cm .4cm},clip]{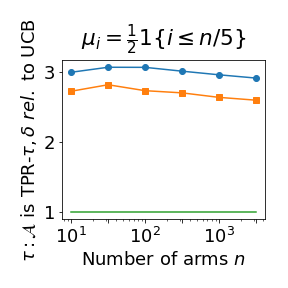} &
\includegraphics[width=.18\textwidth,trim={1.3cm .7cm .7cm .4cm},clip]{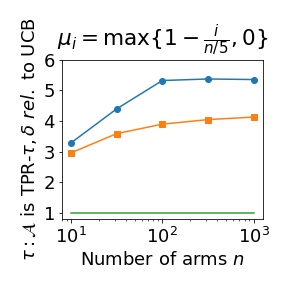}
\end{tabular}

The first  panel shows an empirical estimate of $\E[ \frac{|\calS_t \cap \H_1|}{|\H_1|} ]$ at each time $t$ for each algorithm, averaged over 1000 trials. The black dashed line on the first panel denotes the level $\E[ \frac{|\calS_t \cap \H_1|}{|\H_1|} ]=1-\delta = .95$, and corresponds to the dashed black line on the second panel. The right four panels show the number of samples each algorithm takes before the true positive rate exceeds $1-\delta=.95$, relative to the number of samples taken by UCB, for various parameterizations.
Panels two, three, and four have $\Delta_i=\Delta$ for $i \in \H_1$ while panel five is a case where the $\Delta_i$'s are linear for $i \in \H_1$.
While the differences are most clear on the second panel when $|\H_1| =2= o(n)$, over all cases UCB uses at least $\approx3$ times fewer samples than uniform and SE.
For FDR+TPR, Appendix~\ref{sec:succ-elim} shows uniform sampling roughly has a sample complexity that scales like $n \Delta^{-2} \log(\tfrac{n}{|\H_1|})$ while SE's is upper bounded by $\min\{ n \Delta^{-2} \log(\tfrac{n}{|\H_1|}), (n-|\H_1|)\Delta^{-2} \log(\tfrac{n}{|\H_1|}) + \sum_{i \in \H_1} \Delta_i^{-2} \log(n )\}$. Comparing with Theorem~\ref{thm:FDR_TPR} for the difference cases (i.e., $|\H_1| =2, \sqrt{n}, n/5$) provides insight into the relative difference between UCB, uniform, and SE on the different panels.

\subsection*{Acknowledgments}
This work was informed and inspired by early discussions with Aaditya Ramdas on methods for controlling the false discovery rate (FDR) in multiple testing; we are grateful to have learned from a leader in the field. We also thank him for his careful reading and feedback. We'd also like to thank Martin J. Zhang for his input.
We also thank the leading experimentation and A/B testing platform on the web, \emph{Optimizely}, for its support, insight into its customers' needs, and for committing engineering time to implementing this research into their platform \cite{optimizely}.
In particular, we thank Whelan Boyd, Jimmy Jin, Pete Koomen, Sammy Lee, Ajith Mascarenhas, Sonesh Surana, and Hao Xia at Optimizely for their efforts.



\clearpage
\bibliographystyle{unsrt}
\bibliography{bandit_FDR}

\newpage
\appendix

\section{Analysis Preliminaries}

Recall that for any $i \in [n]$ we assume the existence of a function $\phi$ such that for any $\delta$ we have
$\P\left( \bigcap_{t=1}^\infty \{ |\widehat{\mu}_{i,t} - \mu_i| \leq \phi(t,\delta) \}  \right) \geq 1-\delta$.
We assume that $\phi( t , \delta )$ is \textbf{non-increasing in its second argument} and that there exists an absolute constant $c_\phi$ such that $\phi( t , \delta ) \leq \sqrt{\frac{ c_\phi \log( \log_2(2 t)/\delta)}{t}}$.
It suffices to take $c_\phi=4$ but there are much sharper known bounds that should be used in practice, see \cite{jamieson2014lil,Balsubramani2014,kaufmann2016complexity,tanczosnowak2017}.
Moreover, define its \textbf{inverse} $\phi^{-1}(\epsilon,\delta) = \min\{ t : \phi(t,\delta) \leq \epsilon)$.
For the same constant $c_\phi$, it can be shown that $\phi^{-1}(\epsilon,\delta) \leq c_\phi \epsilon^{-2} \log ( 2 \log(\tfrac{e c_\phi  \epsilon^{-2}}{\delta})/\delta) \leq c \epsilon^{-2} \log( \log(\epsilon^{-2})/\delta)$ for a sufficiently large constant $c$ (and any $\epsilon,\delta<1/4$).


The technical challenges in this work revolve around arguments that \emph{avoid union bounds}.
By union bounding over all $n$ arms we have
\begin{align*}
\textstyle\P\left( \bigcup_{i=1}^n \bigcup_{t=1}^\infty \{ |\widehat{\mu}_{i,t} - \mu_i| \leq \phi(t,\tfrac{\delta}{n}) \} \right) \leq \sum_{i=1}^n \P\left( \bigcup_{t=1}^\infty \{ |\widehat{\mu}_{i,t} - \mu_i| \leq \phi(t,\tfrac{\delta}{n}) \}\right) \leq n \tfrac{\delta}{n} = \delta
\end{align*}
which says that with probability at least $1-\delta$, the deviations of \emph{all} arms after any number of samples $s$ are bounded by $\phi(s,\delta/n)$.
This is attractive because we can easily upper bound the number of times we need to sample an arm $i$ before its empirical mean is within $\epsilon$ of its true mean by $\phi^{-1}(\epsilon, \tfrac{\delta}{n}) \approx \epsilon^{-2} \log( \tfrac{n}{\delta} \log(\epsilon^{-2}))$.
However, we note that this number of samples scales as $\log(n)$, and this $\log(n)$ will persist in any sample complexity result in any analysis that uses such a union bounding technique.
That is, using union bounds in a naive way leads to sample complexities of at least $\Delta^{-2} n \log(n)$, which is no better than uniform sampling when considering FWER+FWPD (see Theorem \ref{thm:uniform_fwer_lower_bound})!

To avoid union bounds, we use techniques developed for best arm and top-k identification for multi-armed bandits \cite{jamieson2014lil,simchowitz2017simulator}.
Define the random variable
\begin{align}\label{eqn:rho_def}
\rho_i = \sup \left\{ \rho \in (0,1] : \bigcap_{t=1}^\infty \{ |\widehat{\mu}_{i,t}-\mu_i| \leq \phi(t,\rho)\} \right\}
\end{align}
and note that by the definition of $\phi$, we have $\P(\rho_i \leq x) \leq x$ for any $x \in (0,1]$.
That is, each $\rho_i$ is an \textbf{independent sub-uniformly distributed random variable}.
However, note that the $\rho_i$ random variables are \emph{not} $p$-values, that is, they are unrelated to $P_{i,t}$.
We will often make use of the fact that $\mu_i - \phi(t,\rho_i) \leq \widehat{\mu}_{i,t} \leq \mu_i + \phi(t,\rho_i)$ which is simply sandwiching a random quantity by two other random quantities. Furthermore, by definition $|\widehat{\mu}_{i,t} - \mu_i|\leq \phi(t, \delta)$ implies $\rho_i \geq
\delta$.
While a union bound would be equivalent to saying $\P(\bigcup_{i=1}^n \{ \rho_i \leq \tfrac{\delta}{n} \} ) \leq \delta$, we avoid union bounds by reasoning about only the statements we need.
For instance, instead of guaranteeing \emph{all} the $\rho_i$ are bounded by a single value, we will make guarantees about how \emph{most} behave, and argue that this is sufficient. Examples of such statements are given in events $\mathcal{E}_3, \mathcal{E}_{4,0}, \mathcal{E}_{4,1}$ defined below.
This strategy has been applied successfully to remove extraneous $\log$ factors (for instance, \cite{simchowitz2017simulator} improved the top-$k$ algorithm and analysis initially proposed in \cite{kalyanakrishnan2012pac}).

Our proofs may upper bound an expression by another with a leading constant $c$, which may become $c'$ on the next line, then $c''$, and so on.
This is meant to indicate that the constant is changing from line to line, but the next display may revert back to using $c$ and this is not the same $c$ as before.
All constants are independent of problem parameters.
This strategy is principally used to hide the ugliness of inverting the $\phi$ function, which arises only in the analysis and not the algorithm itself.

The proofs that follow are meant to be read sequentially, as some of the proofs will reuse the same lemmas.
All four theorems claim that FDR is controlled for the set $\S_t$ in the algorithm.
This is independent of the sampling scheme because of the defined anytime $p$-values.

\section{Anytime FDR for Anytime $p$-values}
We begin our analysis by proving a few general statements about anytime $p$-values and FDR control for the set $\S_t$.

For $p$-values $p_1,\dots,p_n$ and null set $\H_0 \subseteq [n]$ define
\begin{align*}
    R_k &= \{ i : p_i \leq \alpha \tfrac{k}{n} \}\\
    FDP(R_k) &= \frac{\sum_{i \in \H_0} 1\{p_i \leq \alpha \tfrac{k}{n} \}}{|R_k| \vee 1} \\
    \widehat{FDP}(R_k) &= \frac{n \max_{i \in R_k} p_i}{|R_k| \vee 1}
\end{align*}

The celebrated result of Benjamini-Hochberg says that under the assumption that the null p-values are independent and sub-uniformly distributed, if $\widehat{k} = \max\{ k : \widehat{FDP}(R_k) \leq \alpha \}$ then $\E[ FDP(R_{\widehat{k}}) ] \leq \alpha$.
The following theorem provides a bound on the expected false discovery proportion for \emph{any} $k$ such that $\widehat{FDP}(R_k) \leq \alpha$.

\begin{reptheorem}{thm:newFDRcontrol}
Fix $\alpha \in (0,e^{-1})$.
For $n\geq 2$ let $p_1,\dots,p_n$ be random variables such that $\{ p_i \}_{i\in \H_0}$ are independent and sub-uniformly distributed so that $\max_{i \in \H_0} \P(p_i \leq x ) \leq x$. For any $k \in \{0,1,\dots,n\}$, if $R_k = \{ i : p_i \leq \alpha \frac{k}{n} \}$ then
\begin{align*}
    \E\left[ \max_{k : \widehat{FDP}(R_k) \leq \alpha} FDP(R_k) \right]
    &\leq \frac{|\H_0|\alpha}{n} \left( 2  \log( \tfrac{2n}{|\H_0|\alpha }) + \log( 8 e^5\log( \tfrac{8n}{|\H_0| \alpha} ) ) \right) \\
    &\leq 4 \alpha \log( 9/\alpha).
\end{align*}
In other words, any procedure that chooses a set $\{ i : p_i \leq \frac{\alpha k}{n} \}$   for any $k$ satisfying $|\{ i : p_i \leq \frac{\alpha k}{n} \}| \geq k$ is FDR controlled at level $O(\alpha \log(1/\alpha))$.

\end{reptheorem}

\begin{proof}
Define
\begin{align*}
    p_\ell^0 &= \min \{t : \sum_{i\in \H_0}1\{p_i \leq t \} \geq \ell\}.
\end{align*}
so that $p_\ell^0$ is the value of the $\ell$th largest $p$-value in $\H_0$.
Note that
\begin{align*}
    \max_{k : \widehat{FDP}(R_k) \leq \alpha} FDP(R_k)  =& \max_{k : \widehat{FDP}(R_k) \leq \alpha} \frac{\sum_{i \in \H_0} 1\{p_i \leq \alpha \tfrac{k}{n} \}}{|R_k|}\\
    =&  \max_{k : \widehat{FDP}(R_k) \leq \alpha}  \sum_{\ell=1}^{|\H_0|} \frac{\ell}{|R_k|} \1\left\{ \ell = \sum_{i \in \H_0} 1\{p_i \leq \alpha \tfrac{k}{n} \} \right\} \\
    \leq& \max_{k : \widehat{FDP}(R_k) \leq \alpha}  \sum_{\ell=1}^{\ell_0} \frac{\ell}{|R_k|} \1\left\{ \ell = \sum_{i \in \H_0} 1\{p_i \leq \alpha \tfrac{k}{n} \} \right\} \\
    &+  \max_{k : \widehat{FDP}(R_k) \leq \alpha}  \sum_{\ell=\ell_0+1}^{|\H_0|} \frac{\ell}{|R_k|} \1\left\{ \ell = \sum_{i \in \H_0} 1\{p_i \leq \alpha \tfrac{k}{n} \} \right\}
\end{align*}
for some $\ell_0 \in \{1,\dots,|\H_0|\}$ to be defined later, where the last line follows from the fact that $\max_k ( f(k) + g(k) ) \leq \max_k  f(k) + \max_k  g(k)$.
If $\ell = \sum_{i \in \H_0} 1\{p_i \leq \alpha \tfrac{k}{n} \}$ then $\max_{i \in R_k} p_i$ is at least as large as the $\ell$th largest $p$-value in $\{ p_i : i \in \H_0 \}$ (i.e., $p_\ell^0$) and
\begin{align*}
    \frac{1\left\{ \ell = \sum_{i \in \H_0} 1\{p_i \leq \alpha \tfrac{k}{n} \}\right\}}{\max_{i \in R_k} p_i} \leq \frac{1\left\{ \ell = \sum_{i \in \H_0} 1\{p_i \leq \alpha \tfrac{k}{n} \}\right\}}{ p_\ell^0 }.
\end{align*}
Let $u_1,\dots,u_{|\H_0|}$ be iid uniformly distributed random variables on $[0,1]$.
Note that for every $i \in \H_0$ and $j=1,\dots,|\H_0|$ we have $\P(p_i \leq x) \leq x = \P(u_j \leq x)$ where we have used the assumption that each $p_i$ is sub-uniform randomly distributed.
Let $u_{(i)}$ be the $i$th largest $u_i$ such that $u_{(1)} \leq u_{(2)} \leq \dots \leq u_{(|\H_0|)}$.
Note that for any $\ell = 1,\dots,|\H_0|$ we have
\begin{align*}
    \P\left(p_\ell^0 \leq x \right) &= \P( \sum_{i \in \H_0} 1\{ p_i \leq x \} \geq \ell )
    \leq \P( \sum_{i=1}^{|\H_0|} \1\{ u_i \leq x \} \geq \ell) = \P\left(u_{(\ell)} \leq x \right)
\end{align*}
since each $\1\{ p_i \leq x\}$ is an independent Bernoulli random variable with parameter $\P(p_i \leq x) \leq x = \P(u_j \leq x)$ for any $j \in \{1,\dots,|\H_0|\}$.

We then observe that
\begin{align*}
    \P\left(u_{(\ell)} \leq x \right) & \leq \P( \sum_{i=1}^{|\H_0|} \1\{ u_i \leq x \} \geq \ell)\\
    &\leq \exp\left(- |\H_0| KL( \tfrac{\ell}{|\H_0|} , x) \right) \\
    &\leq \exp\left(- |\H_0| \frac{\left(\tfrac{\ell}{|\H_0|} - x\right)^2}{2 \ell /|\H_0|} \right) \\
    &= \exp\left(- \frac{\ell}{2} \left(1 - \frac{|\H_0|x}{\ell}\right)^2 \right)
\end{align*}
where the second inequality is Chernoff's bound, and the next line follows from $KL(x,y) \geq \frac{(x-y)^2}{2x}$ for $x > y$.
For each $\ell=1,2,\dots,|\H_0|$ pick $x = \frac{\ell }{2|\H_0|}$ so that $\P\left(p_\ell^0 \leq \frac{\ell }{2|\H_0|} \right) \leq e^{-\ell/8}$.
Also, if we define the event
\begin{align*}
    \mathcal{E}_\ell := \left\{  p_\ell^0 \geq \frac{\ell }{2|\H_0|} \right\}
\end{align*}
then $\P(\mathcal{E}_\ell) \geq 1-e^{-\ell/8}$.
If $\mathcal{E} = \cap_{\ell=\ell_0+1}^{|\H_0|} \mathcal{E}_\ell$ then
\begin{align*}
    \hspace{.5in}&\hspace{-.5in}\max_{k : \widehat{FDP}(R_k) \leq \alpha}  \sum_{\ell=\ell_0+1}^{|\H_0|} \frac{\ell \1\left\{ \ell = \sum_{i \in \H_0} 1\{p_i \leq \alpha \tfrac{k}{n} \} \right\} }{|R_k|}\\
    &\leq \1\{ \mathcal{E}^c \} + \1\{ \mathcal{E} \} \max_{k : \widehat{FDP}(R_k) \leq \alpha}  \sum_{\ell=\ell_0+1}^{|\H_0|} \frac{\ell \1\left\{ \ell = \sum_{i \in \H_0} 1\{p_i \leq \alpha \tfrac{k}{n} \} \right\} }{|R_k|}   \\
    &\leq \1\{ \mathcal{E}^c \} + \1\{ \mathcal{E} \} \max_{k : \widehat{FDP}(R_k) \leq \alpha}  \sum_{\ell=\ell_0+1}^{|\H_0|} \frac{\ell \1\left\{ \ell = \sum_{i \in \H_0} 1\{p_i \leq \alpha \tfrac{k}{n} \} \right\} }{n \max_{i \in R_k} p_i} \alpha  \\
    &\leq \1\{ \mathcal{E}^c \} + \1\{ \mathcal{E} \} \max_{k : \widehat{FDP}(R_k) \leq \alpha}  \sum_{\ell=\ell_0+1}^{|\H_0|} \frac{\ell \1\left\{ \ell = \sum_{i \in \H_0} 1\{p_i \leq \alpha \tfrac{k}{n} \} \right\} }{n p_\ell^0} \alpha  \\
    &\leq \1\{ \mathcal{E}^c \} + \1\{ \mathcal{E} \} \max_{k }  \sum_{\ell=\ell_0+1}^{|\H_0|} \frac{2 |\H_0| \1\left\{ \ell = \sum_{i \in \H_0} 1\{p_i \leq \alpha \tfrac{k}{n} \} \right\}}{n } \alpha   \\
    &\leq \1\{ \mathcal{E}^c \} + \1\{ \mathcal{E} \} \frac{2 |\H_0|}{n} \alpha \, \max_{k }  \sum_{\ell=\ell_0+1}^{|\H_0|}  \1\left\{ \ell = \sum_{i \in \H_0} 1\{p_i \leq \alpha \tfrac{k}{n} \} \right\}  \\
    &\leq \1\{ \mathcal{E}^c \} + \frac{2 |\H_0|}{n}\alpha.
\end{align*}
Note that the only randomness on the right-hand-side is $ \1\{ \mathcal{E}^c \}$.
After taking expectations on both sides we observe that
\begin{align*}
\E[\1\{ \mathcal{E}^c \}] = \E\left[\1\{ \cup_{\ell=\ell_0+1}^{|\H_0|} \mathcal{E}_\ell^c \}\right] = \sum_{\ell=\ell_0+1}^{|\H_0|} \P(\mathcal{E}_\ell^c) \leq \sum_{\ell=\ell_0+1}^{|\H_0|} e^{-\ell/8} \leq 8 e^{-\ell_0/8}.
\end{align*}
Thus,
\begin{align*}
    \E\left[ \max_{k : \widehat{FDP}(R_k) \leq \alpha}  \sum_{\ell=\ell_0+1}^{|\H_0|} \frac{\ell \1\left\{ \ell = \sum_{i \in \H_0} 1\{p_i \leq \alpha \tfrac{k}{n} \} \right\} }{|R_k|} \right] \leq 8 e^{-\ell_0/8} + \frac{2 |\H_0|}{n}\alpha.
\end{align*}


On the other hand, for any $k$ with $ \widehat{FDP}(R_k) = \frac{n \max_{i \in R_k} p_i}{|R_k| \vee 1} \leq \alpha$ and $\ell = \sum_{i \in \H_0} 1\{p_i \leq \alpha \tfrac{k}{n} \}$ we have that the $\ell$th $p$-value is in $R_k$ and
$\frac{n p_\ell^0}{\alpha} \leq \frac{n \max_{i \in R_k} p_i}{\alpha} \leq |R_k|$ where $|R_k|\geq 1$ necessarily.
Consequently, $|R_k| \geq \lceil n p_\ell^0 /\alpha \rceil$ so that
\begin{align*}
    \max_{k : \widehat{FDP}(R_k) \leq \alpha} \sum_{\ell=1}^{\ell_0} \frac{\ell}{|R_k|} \1\left\{ \ell = \sum_{i \in \H_0} 1\{p_i \leq \alpha \tfrac{k}{n} \} \right\} &\leq \max_{k : \widehat{FDP}(R_k) \leq \alpha} \sum_{\ell=1}^{\ell_0} \frac{\ell}{|R_k|} \1\left\{ \ell = \sum_{i \in \H_0} 1\{p_i \leq \alpha \tfrac{k}{n} \} \right\} \\
    &\leq  \sum_{\ell=1}^{\ell_0} \max_{k}   \frac{\ell}{|R_k|} \1\left\{ \ell = \sum_{i \in \H_0} 1\{p_i \leq \alpha \tfrac{k}{n} \} \right\}  \\
    &\leq \sum_{\ell=1}^{\ell_0} \max_{k}   \frac{\ell}{\lceil n p_\ell^0 /\alpha \rceil} \1\left\{ \ell = \sum_{i \in \H_0} 1\{p_i \leq \max_{i \in R_k} p_i \} \right\}  \\
    &= \sum_{\ell=1}^{\ell_0} \frac{\ell}{\lceil n p_\ell^0 /\alpha \rceil}
\end{align*}
As above, we use the fact that each $p_\ell^0$ is stochastically dominated by $u_{(\ell)}$ so that
\begin{align*}
    \E\left[ \frac{\ell}{\lceil n p_\ell^0 /\alpha \rceil} \right] &=  \int_{x=0}^\infty \P\left( \frac{\ell}{\lceil n p_\ell^0 /\alpha \rceil} \geq x \right) dx\\
    &=  \int_{x=0}^\infty \P\left( \lceil n p_\ell^0 /\alpha \rceil \leq \ell/x \right) dx \\
    &=  \int_{x=0}^\infty \P\left(  n p_\ell^0 /\alpha  \leq \lfloor \ell/x \rfloor \right) dx \\
    &\leq  \int_{x=0}^\infty \P\left(  n u_{(\ell)} /\alpha  \leq \lfloor \ell/x \rfloor \right) dx\\
    &=  \int_{x=0}^\infty \P\left( \lceil n u_{(\ell)} /\alpha \rceil \leq \ell/x \right) dx \\
    &= \E\left[ \frac{\ell}{\lceil n u_{(\ell)} /\alpha \rceil} \right]
\end{align*}
since $ \lceil a \rceil > b \iff a > \lfloor b \rfloor$.
Thus,
\begin{align*}
\E\left[ \max_{k : \widehat{FDP}(R_k) \leq \alpha} \sum_{\ell=1}^{\ell_0} \frac{\ell}{|R_k|} \1\left\{ \ell = \sum_{i \in \H_0} 1\{p_i \leq \alpha \tfrac{k}{n} \} \right\} \right] \leq \sum_{\ell=1}^{\ell_0}\E\left[  \frac{\ell}{ \lceil n p_\ell^0 /\alpha \rceil} \right] \leq \sum_{\ell=1}^{\ell_0}\E\left[  \frac{\ell}{\lceil n u_{(\ell)} /\alpha \rceil} \right].
\end{align*}

For notational ease, let $m=|\H_0|$.
Recall that the PDF of the $\ell$-th order statistic of $m$ iid uniform random variables on $[0,1]$ is given by,
\[\frac{d\P(u_{(\ell)} \leq x)}{dx} = \ell\binom{m}{\ell}(1-x)^{m-\ell}x^{\ell-1}.\]
The following simple bound will be useful for small $\ell$ and $y$:
\begin{align*}
    \P(u_{(\ell)} \leq y) = \int_{x=0}^y \ell\binom{m}{\ell}(1-x)^{m-\ell}x^{\ell-1} dx \leq \binom{m}{\ell} y^\ell \leq (m y)^{\ell}.
\end{align*}

 First we consider the case when $\ell>1$.
\begin{align*}
    \E\left[ \frac{\ell}{\lceil n u_{(\ell)} /\alpha \rceil} \right] &= \sum_{k=1}^{\infty} \frac{1}{k} \P\left(\frac{(k-1)\alpha}{n}\leq u_{(\ell)}\leq \frac{k\alpha}{n}\right) \\
    &\leq \P\left({u_{(\ell)}}\leq \frac{\alpha}{n}\right)+\int_{x=\frac{\alpha}{n}}^1 \frac{\alpha}{nx} d\P(u_{(\ell)} \leq x)\\
    &=\P\left(u_{(\ell)}\leq\frac{\alpha}{n}\right)+\frac{\alpha}{n}\ell\binom{m}{\ell}\int_{x=\frac{\alpha}{n}}^1 (1-x)^{m-\ell}x^{\ell-2}dx\\
    &\leq \left( \frac{\alpha m}{n} \right)^\ell +\frac{\alpha}{n}\frac{\ell\binom{m}{\ell}}{(\ell-1)\binom{m-1}{\ell-1}}\int_{\alpha/n}^1 (\ell-1)\binom{m-1}{\ell-1}(1-x)^{m-\ell}x^{\ell-2}dx\\
    &\leq \left( \frac{\alpha m}{n} \right)^\ell +\frac{\alpha}{n}\frac{m!}{(\ell-1)!(m-\ell)!}\frac{(\ell-2)!(m-\ell)!}{(m-1)!}\\
    &= \left( \frac{\alpha m}{n} \right)^\ell +\frac{\alpha}{n}\frac{m}{\ell-1}
\end{align*}
where going from the third line to the fourth, the integrand is just the PDF of a Beta-distribution so the integral can be bounded away by 1. When $\ell=1$, the argument is slightly different. Firstly note that the PDF is a decreasing function, hence
\begin{align*}
    \P\left(\frac{(k-1)\alpha}{n}\leq u_{(1)}\leq \frac{k\alpha}{n}\right)
    &\leq \frac{\alpha}{n}m \left(1-\frac{\alpha(k-1)}{n}\right)^{m-1}\\
    &\leq \frac{\alpha}{n} me^{-(m-1)\frac{\alpha(k-1)}{n}}
\end{align*}
using the fact that $e^{-x}>1-x$. Hence,
\begin{align*}
    \E\left[ \frac{1}{\lceil n u_{(1)} /\alpha \rceil} \right] &= \sum_{k=1}^{\lceil n /\alpha \rceil} \frac{1}{k} \P\left(\frac{(k-1)\alpha}{n}\leq u_{(1)}\leq \frac{k\alpha}{n}\right)\\
    &\leq \frac{m\alpha}{n}\sum_{k=1}^{\infty}   \frac{1}{k} e^{-(m-1)\frac{\alpha(k-1)}{n}}\\
    &= \frac{m\alpha}{n}( 1 + \sum_{k=1}^{\infty}   \frac{1}{k+1} e^{-\frac{(m-1) \alpha }{n} k})\\
    &= \frac{m\alpha}{n} e^{\frac{(m-1) \alpha }{n}} \log\left( \frac{1}{1 -  e^{-\frac{(m-1) \alpha }{n}} }\right)\\
    &\leq 2 \frac{m\alpha}{n}\log( \tfrac{n}{(m-1) \alpha })
\end{align*}
where the last line holds for $\alpha (m-1) /n \leq e^{-1}$ by observing that $e^x \log(1-e^{-x})/\log(x)$ is increasing and less than $2$ at $x=e^{-1}$.
On the other hand, if $m=1$ then in the above display $\sum_{k=1}^{\lceil n  /\alpha \rceil} 1/k \leq \log( \lceil n  /\alpha \rceil e) \leq 2\log(n/\alpha)$ for $\alpha \leq e^{-1}$ and $n \geq 2$.
Thus,
\begin{align*}
    \E\left[ \frac{1}{\lceil n u_{(1)} /\alpha \rceil} \right] &= \sum_{k=1}^{\lceil n /\alpha \rceil} \frac{1}{k} \P\left(\frac{(k-1)\alpha}{n}\leq u_{(1)}\leq \frac{k\alpha}{n}\right) \\
    &\leq \begin{cases} 2 \frac{m\alpha}{n}\log( \tfrac{n}{(m-1) \alpha }) & \text{ if } m > 1 \\ 2\frac{\alpha}{n}\log(\tfrac{n}{\alpha}) & \text { if } m=1 \end{cases}\\
    &\leq  2  \frac{m\alpha}{n}\log( \tfrac{2n}{m\alpha })
\end{align*}
Plugging it all in we get
\begin{align*}
    \sum_{\ell=1}^{\ell_0} \E\left[ \frac{\ell}{\lceil n u_{(\ell)} /\alpha \rceil} \right] &\leq 2  \frac{m\alpha}{n}\log( \tfrac{2n}{m\alpha })  + \sum_{\ell=2}^{\ell_0} \left( \left( \frac{\alpha m}{n} \right)^\ell +\frac{\alpha}{n}\frac{m}{\ell-1} \right) \\
    &\leq 2  \frac{m\alpha}{n}\log( \tfrac{2n}{m\alpha }) + \left( \frac{\alpha m}{n} \right)^2/(1- \frac{\alpha m}{n}) + \frac{\alpha m}{n} \log(\ell_0 e)
\end{align*}

Putting it all together, we conclude that
\begin{align*}
    \E\left[ \max_{k : \widehat{FDP}(R_k) \leq \alpha} FDP(R_k) \right] &\leq 8 e^{-\ell_0/8} + \frac{2 |\H_0|}{n}\alpha +  \sum_{\ell=1}^{\ell_0} \E\left[ \frac{\ell}{\lceil n u_{(\ell)} /\alpha \rceil} \right] \\
    &\leq 8 e^{-\ell_0/8} + \frac{|\H_0|}{n}\alpha \left( 2 \log( \tfrac{2n}{|\H_0|\alpha }) + \log(\ell_0 e^4) \right) \\
    &\leq \frac{|\H_0|\alpha}{n} \left( 2  \log( \tfrac{2n}{|\H_0|\alpha }) + \log( 8 e^5\log( \tfrac{8n}{|\H_0| \alpha} ) ) \right) \\
    &= O\left( \tfrac{|\H_0| \alpha}{n} \log( \tfrac{n}{|\H_0|\alpha }) \right)
\end{align*}
Where the last lines have taken $\ell_0 = 8 \log( \tfrac{8n}{|\H_0| \alpha} )$.
\end{proof}

In this section, consider a hypothesis test between $\H_0$ and $\H_1$. Let $\{P_{i,t}\}_{t=1}^\infty, P_{i,t}\in (0,1]$ be a collection of random variables such that for $i\in \mathcal{H}_0$, $\{P_{i,t}\}_{i=1}^\infty$ are sub-uniformly distributed anytime $p$-values, i.e. $\P(\cup_{t=1}^{\infty}\{P_{i,t}\leq x \}) \leq x$.
Let $\S_t$ be the set of discoveries following the Benjamini-Hochberg procedure at confidence level $\delta$ at time $t$ on the $p$-values $P_{i,T_i(t)}$, $1\leq i\leq n$. The following lemma employees the previous Theorem to guarantees FDR-control.
\begin{lemma}\label{lem:expected_FDR}
 For all times $t\geq 1$,  FDR is controlled at level $\delta$ so that $\E[\frac{|\calS_t \cap \H_0|}{|\calS_t| \vee 1}] \leq \delta$.
\end{lemma}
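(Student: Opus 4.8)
The plan is to reduce Lemma~\ref{lem:expected_FDR} to Theorem~\ref{thm:newFDRcontrol}, with essentially all of the work going into verifying that theorem's hypotheses for the highly dependent $p$-values $\{P_{i,T_i(t)}\}_{i=1}^n$ that the adaptive algorithm actually produces. First I would fix an arbitrary time $t$ and recall that $\calS_t = R_{\widehat{k}_t}$ is the BH set run at some level $\alpha$ (with $\alpha=\delta'$ as in Algorithm~\ref{alg:FDRMAB}), where $R_k = \{i : P_{i,T_i(t)} \leq \alpha k/n\}$ and $\widehat{k}_t = \max\{k : |R_k|\geq k\}$. The routine opening step is a self-consistency bound that holds pointwise in the data: if $i \in R_k$ then $P_{i,T_i(t)} \leq \alpha k/n$, so $\widehat{FDP}(R_k) = \tfrac{n\max_{i\in R_k}P_{i,T_i(t)}}{|R_k|\vee 1} \leq \tfrac{\alpha k}{|R_k|} \leq \alpha$ whenever $|R_k|\geq k$. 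Hence $\widehat{k}_t \in \{k : \widehat{FDP}(R_k)\leq\alpha\}$, and therefore $FDP(\calS_t) \leq \max_{k : \widehat{FDP}(R_k)\leq\alpha} FDP(R_k)$, which is exactly the quantity bounded by Theorem~\ref{thm:newFDRcontrol}.

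The real obstacle is that Theorem~\ref{thm:newFDRcontrol} demands $\{P_{i,T_i(t)}\}_{i\in\H_0}$ be independent and sub-uniform, and neither holds: the sample counts $T_i(t)$ are random and couple all the arms through the adaptive allocation, so even the number of samples feeding each null $p$-value is not fixed, and the null $p$-values are mutually dependent. To get around this I would introduce the auxiliary variables $Q_i := \inf_{s\geq 1} P_{i,s}$ for $i \in \H_0$. Since $P_{i,s}$ is a function of arm $i$'s reward stream alone, the $Q_i$ depend on disjoint sources of randomness and are mutually independent, and by the always-valid property $\P(Q_i \leq x) = \P(\cup_{s}\{P_{i,s}\leq x\}) \leq x$, so each $Q_i$ is sub-uniform. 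The key point of the infimum is that it dominates the $p$-value at \emph{every} realization of the random index $T_i(t)$, giving the pointwise inequality $\1\{P_{i,T_i(t)}\leq x\} \leq \1\{Q_i\leq x\}$.

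With this coupling in hand, I would argue that the conclusion of Theorem~\ref{thm:newFDRcontrol} still applies to $p_i = P_{i,T_i(t)}$. Inspecting that theorem's proof, the null $p$-values enter \emph{only} through the stochastic domination $\P(p_\ell^0 \leq x)\leq\P(u_{(\ell)}\leq x)$ of the $\ell$-th smallest null $p$-value by the $\ell$-th order statistic of $|\H_0|$ i.i.d.\ uniforms (the step treating the $\1\{p_i\leq x\}$ as independent Bernoullis). The pointwise domination above yields $\sum_{i\in\H_0}\1\{P_{i,T_i(t)}\leq x\} \leq \sum_{i\in\H_0}\1\{Q_i\leq x\}$, and since the $Q_i$ are independent and sub-uniform, $\sum_{i\in\H_0}\1\{Q_i\leq x\}$ is stochastically dominated by $\mathrm{Binomial}(|\H_0|,x)=\sum_{i}\1\{u_i\leq x\}$; chaining these gives exactly $\P(p_\ell^0\leq x)\leq\P(u_{(\ell)}\leq x)$ for every $\ell$ and $x$. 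As this is the only property of the nulls used, Theorem~\ref{thm:newFDRcontrol} delivers $\E[\max_{k:\widehat{FDP}(R_k)\leq\alpha}FDP(R_k)]\leq 4\alpha\log(9/\alpha)$, and combined with the self-consistency bound, $\E[FDP(\calS_t)]\leq 4\alpha\log(9/\alpha)$ for every $t$. Taking $\alpha=\delta'=\delta/(6.4\log(36/\delta))$ as in the algorithm makes the right-hand side at most $\delta$, which is the claim. The main difficulty, and the only place the anytime structure is indispensable, is precisely this replacement of the random-sample, dependent null $p$-values by the fixed, independent infima $Q_i$, which is what recovers the uniform-order-statistic domination underlying the BH analysis.
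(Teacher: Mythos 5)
Your proposal is correct, and it rests on exactly the same key device as the paper's proof: replacing the dependent, randomly-indexed null $p$-values $P_{i,T_i(t)}$ by their over-time infima (your $Q_i$, the paper's $P_{i,*}$), which are independent, sub-uniform, and pointwise dominated by the actual $p$-values. The two arguments diverge only in the last step of the reduction to Theorem~\ref{thm:newFDRcontrol}. The paper applies that theorem as a black box to the \emph{hybrid} collection $\{P_{i,*}\}_{i\in\H_0}\cup\{P_{i,T_i(t)}\}_{i\in\H_1}$ (which literally satisfies the theorem's hypotheses), notes that the BH index $\widehat{k}$ computed from the true $p$-values is still self-consistent for the hybrid collection since $P_{i,*}\leq P_{i,T_i(t)}$, and then transfers the FDP bound from the hybrid selection set back to $\calS_t$ via the monotonicity $\tfrac{a}{a+c}\leq\tfrac{a+b}{a+b+c}$. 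You instead open up the proof of Theorem~\ref{thm:newFDRcontrol}, observe that the null $p$-values enter only through the stochastic domination $\P(p_\ell^0\leq x)\leq\P(u_{(\ell)}\leq x)$ of the null order statistics, and verify that this domination survives the coupling $\1\{P_{i,T_i(t)}\leq x\}\leq\1\{Q_i\leq x\}$; your check that no other distributional property of the nulls is used in that proof is accurate. The paper's route has the advantage of not requiring any re-inspection of Theorem~\ref{thm:newFDRcontrol}'s proof; yours effectively establishes the slightly stronger statement that the theorem holds whenever the null $p$-values are pointwise dominated by an independent sub-uniform family, which is a clean and reusable reformulation. Either way the conclusion $\E[FDP(\calS_t)]\leq 4\delta'\log(9/\delta')\leq\delta$ follows for every fixed $t$.
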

\begin{proof}

For any $i \in \H_0$ so that $\mu_i \leq \mu_0$ define $P_{i,*} = \inf_{t\geq 1} P_{i,t}$.
Then for any $i \in \H_0$ and $\delta \in (0,1)$
\begin{align*}
\P(P_{i,*}\leq \delta) &= \P\left( \bigcup_{t=1}^\infty \{ P_{i,t} \leq \delta \} \right) = \P\left( \bigcup_{t=1}^\infty \{  \widehat{\mu}_{i,t} -  \phi(t,\delta) \geq \mu_0 \} \right) \\
&\leq \P\left( \bigcup_{t=1}^\infty \{  \widehat{\mu}_{i,t} -  \phi(t,\delta) \geq \mu_i \} \right)
\leq \P\left( \bigcup_{t=1}^\infty \{  |\widehat{\mu}_{i,t} - \mu_i| \geq \phi(t,\delta) \right)
\leq \delta.
\end{align*}
We observe that $\{ P_{i,*} \}_{i \in \H_0}$ are sub-uniformly distributed random variables but are also \emph{independent} since they only depend on the rewards of arm $i \in \H_0$.
Thus, we may apply the above proposition at time $t$ with $\{ P_{i,*} : i \in \H_0 \} \cup \{ P_{i,T_i(t)} : i \in \H_1 \}$ to conclude that FDR would be controlled for any of the prescribed values of $k$ (including the largest, which would be equivalent to BH).
We need to show that the BH procedure controls FDR at time $t$ with $\{ P_{i,T_i(t)} : i \in \H_0 \} \cup \{ P_{i,T_i(t)} : i \in \H_1 \}$.

As described in Section \ref{sec:false_alarm_control}, the BH procedure selects the $\widehat{k}$ smallest $p$-values where $$\widehat{k} = \max\{ k : |\{i:P_{i,T_i(t)} \leq \delta\tfrac{k}{n}\}| \geq k \}.$$
By definition
\begin{align*}
\widehat{k} &= |\{i \in \H_0 :P_{i,T_i(t)} \leq \delta\tfrac{\widehat{k}}{n}\} \cup \{ i \in \H_1 : P_{i,T_i(t)} \leq \delta\tfrac{\widehat{k}}{n} \}| \\
&\leq |\{ i \in \H_0 : P_{i,*} \leq \delta\tfrac{\widehat{k}}{n} \} \cup \{ i \in \H_1 : P_{i,T_i(t)} \leq \delta\tfrac{\widehat{k}}{n} \}|
\end{align*}
since $P_{i,*} \leq P_{i,T_i(t)}$ for all $i \in \H_0$.
Thus, since the $\{ P_{i,*} \}_{i \in \H_0}$ are independent and sub-uniformly distributed and \emph{at least} $\widehat{k}$ of $\{ i \in \H_0 : P_{i,*} \leq \delta\tfrac{\widehat{k}}{n} \} \cup \{ i \in \H_1 : P_{i,T_i(t)} \leq \delta\tfrac{\widehat{k}}{n} \}$
are below the threshold $\frac{\widehat{k}\delta}{n}$ we apply Theorem\ref{thm:newFDRcontrol} to conclude that the FDR of $\{ i \in \H_0 : P_{i,*} \leq \delta\tfrac{\widehat{k}}{n} \} \cup \{ i \in \H_1 : P_{i,T_i(t)} \leq \delta\tfrac{\widehat{k}}{n} \}$ is bounded by $\delta$.

We observe that
\begin{align*}
FDR( \{ i \in [n] : P_{i,T_i(t)} \leq \delta\tfrac{\widehat{k}}{n} \}) &= \E\left[ \frac{\sum_{i\in \H_0} \1\{P_{i,T_i(t)} \leq \delta\frac{\widehat{k}}{n}\}}{\sum_{i\in \H_0} \1\{P_{i,T_i(t)} \leq \delta\frac{\widehat{k}}{n}\} + \sum_{i\in \H_1} \1\{P_{i,T_i(t)} \leq \delta\frac{\widehat{k}}{n}\}} \right] \\
&\leq \E\left[ \frac{\sum_{i\in \H_0} \1\{P_{i,\ast} \leq \delta\frac{\widehat{k}}{n}\}}{\sum_{i\in \H_0} \1\{P_{i,\ast} \leq \delta\frac{\widehat{k}}{n}\} + \sum_{i\in \H_1} \1\{P_{i,T_i(t)} \leq \delta\frac{\widehat{k}}{n}\}} \right] \\
&= FDR( \{ i \in \H_0 : P_{i,*} \leq \delta\tfrac{\widehat{k}}{n} \} \cup \{ i \in \H_1 : P_{i,T_i(t)} \leq \delta\tfrac{\widehat{k}}{n} \} ) \\
&\leq \delta
\end{align*}
which is precisely what we wished to prove, where the first inequality follows from  $\frac{a}{a+c} \leq \frac{a + b}{a+b+c}$ for positive numbers $a,b,c$, and the fact that $\sum_{i\in \H_0} \1\{P_{i,T_i(t)} \leq \delta\frac{\widehat{k}}{n}\} \leq \sum_{i\in \H_0} \1\{P_{i,\ast} \leq \delta\frac{\widehat{k}}{n}\}$ due to $P_{i,*} \leq P_{i,T_i(t)}$ for all $i \in \H_0$.

\end{proof}

The following lemma is a stronger result, providing an anytime high-probability bound on the false discovery proportion and the size of the discovered set.
The proof follows from considering not the $p$-values, $P_{i,t}$ at any given time, but rather the random variable which is the worst-case $p$-values over all time.

\begin{lemma}\label{lem:fdr_high_prob}  Let $T_i:\mathbb{N}\rightarrow \mathbb{N}$ be an arbitrary function for any $i$. Recall $\delta' = \delta/(6.4\log(36/\delta))$.
 Then with probability greater than $ \geq 1-\delta'$,
\begin{align*}
\bigcap_{t=1}^\infty \left\{ |\calS_t| \leq \tfrac{1}{1-2\delta'(1+4\delta')} |\H_1| + \tfrac{4(1+4\delta')/3}{1-2\delta'(1+4\delta')} \log(\tfrac{5\log_2(n/\delta')}{\delta'}) \right\}
\end{align*}
and
\begin{align*}
\bigcap_{t=1}^\infty \left\{ \frac{|\calS_t \cap \H_0|}{|\calS_t| \vee 1} \leq \delta' \tfrac{|\H_0|}{n} +  (1+4\delta') \sqrt{ \frac{4 \delta' \tfrac{|\H_0|}{n} \log(\tfrac{\log_2(n/\delta)}{\delta'})}{ |\calS_t| \vee 1} } + \frac{ (1+4\delta')\log(\tfrac{\log_2(n/\delta')}{\delta'})}{3 (|\calS_t| \vee 1)} \right\}
\end{align*}
in particular, these events hold on $\mathcal{E}_3$ (defined in the proof), which holds with probability greater than $1-\delta'$.
\end{lemma}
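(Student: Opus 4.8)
The plan is to follow the hint and replace the time-varying $p$-values by their worst-case values over all time, thereby turning an anytime statement into a single concentration inequality for independent random variables. For each null arm $i \in \H_0$ set $P_{i,*} = \inf_{t \geq 1} P_{i,t}$. As established in the proof of Lemma~\ref{lem:expected_FDR}, the collection $\{P_{i,*}\}_{i\in\H_0}$ is independent and sub-uniformly distributed, and $P_{i,*} \leq P_{i,T_i(t)}$ for every $t$. Because $\calS_t = s(\widehat{k})$ with $\widehat{k} \leq |s(\widehat{k})| = |\calS_t|$, every $i \in \calS_t$ satisfies $P_{i,T_i(t)} \leq \delta' \widehat{k}/n \leq \delta' |\calS_t|/n$, so that
\[ |\calS_t \cap \H_0| \;\leq\; \sum_{i \in \H_0} \1\{ P_{i,*} \leq \delta' \tfrac{|\calS_t|}{n} \} \;=:\; N(|\calS_t|). \]
The crucial point is that $N(m) = \sum_{i\in\H_0}\1\{P_{i,*} \leq \delta' m/n\}$ depends only on the time-independent variables $P_{i,*}$; hence a bound on $N(m)$ that is uniform over $m \in \{1,\dots,n\}$ immediately controls $|\calS_t \cap \H_0|$ simultaneously for all $t$, which is exactly the anytime guarantee we need.

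Next I would control $N(m)$. It is a sum of independent Bernoulli variables with success probabilities at most $\delta' m/n$, so $\E[N(m)] \leq |\H_0|\delta' m/n$ and its variance is bounded by the same quantity; Bernstein's inequality then yields, with probability $1-\gamma$, a bound of the form $N(m) \leq |\H_0|\delta' m/n + \sqrt{2|\H_0|\delta' m/n\,\log(1/\gamma)} + \tfrac{1}{3}\log(1/\gamma)$. To make this hold for all $m$ at once without paying a $\log n$ factor, I would not union bound over all $n$ thresholds but instead peel over dyadic blocks $m \in [2^j, 2^{j+1})$: within a block the mean $|\H_0|\delta' m/n$ varies by at most a factor two, so a single Bernstein bound evaluated at the top of the block controls the whole block, and a union bound over the $\lceil \log_2 n\rceil$ blocks requires only $\gamma \approx \delta'/\log_2(n/\delta')$. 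This is the origin of the $\log(\log_2(n/\delta')/\delta')$ confidence term and of the high-probability event $\calE_3$; the modest $(1+4\delta')$ prefactors and the $4$ (rather than $2$) under the square root absorb the within-block doubling of the mean and the gap between sub-uniformity and exact uniformity.

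On $\calE_3$ both displayed inequalities then follow by bookkeeping. Dividing the Bernstein bound for $N(|\calS_t|)$ by $|\calS_t| \vee 1$ and using $\E[N(m)]/m \leq \delta'|\H_0|/n$ gives the false-discovery-proportion inequality directly. For the cardinality bound I would write $|\calS_t| = |\calS_t \cap \H_1| + |\calS_t \cap \H_0| \leq |\H_1| + N(|\calS_t|)$, collapse the square-root term of the Bernstein bound into a linear-in-$m$ piece plus a constant via $\sqrt{ab}\leq \tfrac{a+b}{2}$, obtaining $m \leq |\H_1| + 2\delta'(1+4\delta')\tfrac{|\H_0|}{n} m + \tfrac{4(1+4\delta')}{3}\log(\tfrac{5\log_2(n/\delta')}{\delta'})$, and then solve this linear inequality for $m = |\calS_t|$ using $|\H_0|/n \leq 1$; this produces the $\tfrac{1}{1-2\delta'(1+4\delta')}$ prefactor. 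The main obstacle is the uniformity requirement: the bound must hold simultaneously for every $t$, equivalently every realized value of $|\calS_t|$, and a naive union bound over the $n$ candidate thresholds would inflate the confidence term to $\log(n/\delta')$ and in turn inject a spurious $\log n$ into the sample complexity. Getting the peeling argument to deliver only a $\log\log n$ dependence inside the logarithm, while keeping the variance proxy tight enough that the constants match the stated $(1+4\delta')$ and $2\delta'(1+4\delta')$ factors, is the delicate part; the reduction to $P_{i,*}$ in the first step is precisely what makes a single clean peeling possible, since otherwise the thresholds would move with $t$.
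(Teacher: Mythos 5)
Your proposal follows the paper's proof almost exactly: the reduction to the time-independent worst-case $p$-values $P_{i,*}=\inf_t P_{i,t}$, the observation that every $i\in\calS_t$ has $P_{i,T_i(t)}\leq \delta'|\calS_t|/n$ so that $|\calS_t\cap\H_0|$ is controlled by a single empirical process in the threshold, the dyadic peeling to keep the confidence term at $\log(\log_2(n/\delta')/\delta')$ rather than $\log(n/\delta')$, and the final bookkeeping (including the $a+2\sqrt{ab}+b\leq 2(a+b)$ collapse for the cardinality bound) are all precisely the paper's argument for the event $\mathcal{E}_3$ and its consequences. The one place you diverge is \emph{inside} each dyadic block: you propose to control all thresholds in a block by monotonicity of $N(m)$, evaluating a single Bernstein bound at the top of the block. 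That works, but it doubles the \emph{mean} term, not just the variance proxy: you get $N(m)\leq 2|\H_0|\delta' m/n+\dots$, and no amount of $(1+4\delta')$ slack in the square-root term recovers the stated leading term $\delta'|\H_0|/n$ or the $\tfrac{1}{1-2\delta'(1+4\delta')}$ prefactor. The paper avoids this by applying Doob's maximal inequality to the normalized martingale $M(s)=\frac{A(s)-\E[A(s)]}{1-s}$ (Lemma~\ref{lem:brownian_bridge}), which gives a bound uniform over the block with the mean term kept exact at $s|\H_0|$ and only the variance evaluated at the block top (this is the $\max\{2s,2\delta'/n\}$ inside the square root). So your argument proves the lemma up to a benign factor of $2$ on the leading term (and correspondingly worse constants in the cardinality bound), which is harmless downstream but does not literally yield the displayed inequalities; to match them you need the maximal-inequality version of the within-block step.
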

\begin{proof}
As in the proof of Lemma \ref{lem:expected_FDR}, let $P_{i,\ast} = \inf_{t\geq 1} P_{i,t}$ for all $i\in \H_0$.

Define the event,
\begin{align*}
\mathcal{E}_3 &:= \bigg\{ \forall s \in (0,1]: \sum_{i \in \H_0} \1\{P_{i,\ast}\leq s\} \\
    &\hspace{.5in}\leq  s |\H_0| + (1+4s) \sqrt{2 \max\{2s, 2\delta'/n\} |\H_0| \log(\tfrac{\log_2(n/\delta')}{\delta'}) } + \tfrac{1+4s}{3} \log(\tfrac{\log_2(n/\delta')}{\delta'}) \bigg\}
\end{align*}

We have that $\P(\mathcal{E}_3) \geq 1-\delta'$ by applying Lemma~\ref{lem:brownian_bridge} found in the appendix with $c=2\delta'/n$ and $X_i = P_{i,\ast}$ for $i \in \H_0$ so that $m=|\H_0|$.
Note that there exists some threshold $\tau_t \in [0,1]$ such that $BH$ selects all indices with $P_{i,T_i(t)} \leq \tau_t$ so that $\calS_t = \S(\tau_t) := \{i \in [n]:  P_{i, T_i(t)} \leq \tau_t\}$. Then 
\begin{align*}
|\calS_t \cap \H_0| = \sum_{i \in \H_0} \1\{ P_{i,T_i(t)} \leq \tau_t\} \leq \sum_{i \in \H_0} \1\{ P_{i, \ast} \leq \tau_t\}.
\end{align*}
By definition, $\tau_t = \sup \{ s \leq 1: \frac{|\calS(s)|}{n} \delta' \geq s \}$, otherwise we take it to be $0$, so that $\tau_t \leq \delta' |\calS(\tau_t)|/ n$. We apply this inequality and $\mathcal{E}_3$ to observe
\begin{align*}
|\calS_t &\cap \H_0| \leq \tau_t |\H_0| + (1+4\tau_t) \sqrt{2 \max\{2\delta'/n, 2\tau_t\} |\H_0| \log(\tfrac{\log_2(n/\delta')}{\delta'}) } + \tfrac{1+4\tau_t}{3} \log(\tfrac{\log_2(n/\delta')}{\delta'}) \\
&\leq \delta' |\calS(\tau_t)| \tfrac{|\H_0|}{n} + (1+4\delta') \sqrt{2 \max\{2\delta'/n, 2\delta' |\calS(\tau_t)|\tfrac{1}{n}\} |\H_0| \log(\tfrac{\log_2(n/\delta')}{\delta'}) } + \tfrac{1+4\delta'}{3} \log(\tfrac{\log_2(n/\delta')}{\delta'}) \\
&= \delta' |\calS(\tau_t)| \tfrac{|\H_0|}{n} + 2 (1+4\delta') \sqrt{ \delta |\calS(\tau_t)| \tfrac{|\H_0|}{n} \log(\tfrac{\log_2(n/\delta')}{\delta'}) } + \tfrac{1+4\delta'}{3} \log(\tfrac{\log_2(n/\delta')}{\delta'}) \\
&\leq 2\delta'(1+4\delta') |\calS(\tau_t)| \tfrac{|\H_0|}{n} + \tfrac{4(1+4\delta')}{3} \log(\tfrac{\log_2(n)}{\delta'})
\end{align*}
where the last inequality follows from $a + 2\sqrt{ab} + b = (\sqrt{a} + \sqrt{b})^2 \leq 2(a+b)$.
After rearranging, because $|\calS_t| = |\calS_t \cap \H_1| + |\calS_t \cap \H_0| \leq |\H_1|  + |\calS_t \cap \H_0|$ we obtain the theorem.
\end{proof}

We will refer to event $\mathcal{E}_3$ defined in the proof above in what follows.

\section{Proof of Theorem~\ref{thm:FDR_TPR}}\label{sec:FDR_TPR_proof}
We will need the following event. Let $a \in \R_+^n$ be a fixed vector to be defined later in the proof and define
\begin{align*}
\mathcal{E}_{4, j} &:= \left\{ \sum_{i \in \H_j} a_i \log(1/\rho_i) \leq 5 \log(1/\delta) \sum_{i \in \H_j} a_i \right\} \quad{j\in \{0,1\}}\\
\end{align*}

\begin{lemma}
$\min\{\P(\mathcal{E}_{4,0}), \P(\mathcal{E}_{4,1})\} \geq 1-\delta$.
\end{lemma}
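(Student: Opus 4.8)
The plan is to bound the upper tail of the weighted sum $\sum_{i \in \H_j} a_i \log(1/\rho_i)$ by a Chernoff argument that exploits the two facts established just above the lemma: the $\rho_i$ are mutually independent (each depends only on arm $i$'s rewards) and each is sub-uniform, $\P(\rho_i \le x) \le x$. Fixing $j \in \{0,1\}$ and writing $Y_i = \log(1/\rho_i)$ and $A = \sum_{i \in \H_j} a_i$, it suffices to show $\P\big( \sum_{i\in\H_j} a_i Y_i > 5\log(1/\delta)\,A \big) \le \delta$, since this is exactly the complement of $\mathcal{E}_{4,j}$ (the case $A=0$ being trivial).

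The first step is to control the moment generating function of each $Y_i$. Using only sub-uniformity, for $s \in (0,1)$ I would show
\[ \E[e^{sY_i}] = \E[\rho_i^{-s}] = 1 + \int_1^\infty \P(\rho_i < t^{-1/s})\,dt \le 1 + \int_1^\infty t^{-1/s}\,dt = \tfrac{1}{1-s}. \]
By independence of $\{\rho_i\}_{i\in\H_j}$ the MGF of the weighted sum factorizes, so for any $s$ with $s\max_i a_i < 1$,
\[ \E\Big[\exp\big(s\textstyle\sum_{i\in\H_j} a_i Y_i\big)\Big] = \prod_{i\in\H_j}\E[\rho_i^{-sa_i}] \le \prod_{i\in\H_j}(1 - sa_i)^{-1}. \]

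The second step is the Chernoff optimization. Applying Markov's inequality and choosing $s = 1/(2\max_i a_i)$ forces every $sa_i \le \tfrac12$, where the elementary inequality $(1-x)^{-1}\le e^{2x}$ on $[0,\tfrac12]$ collapses the product into $e^{2sA}$. This yields
\[ \P(\mathcal{E}_{4,j}^c) \le e^{-5s\log(1/\delta)A}\,e^{2sA} = \exp\!\Big(-\tfrac{A}{\max_i a_i}\big(\tfrac52\log(1/\delta)-1\big)\Big) \le e\,\delta^{5/2}, \]
using $A \ge \max_i a_i$ in the last step. Finally, since $\delta \le e^{-1}$ gives $e\,\delta^{3/2}\le 1$, we obtain $e\,\delta^{5/2}\le\delta$, and as the argument is symmetric in $j$ it proves the claimed minimum over $\mathcal{E}_{4,0}$ and $\mathcal{E}_{4,1}$.

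The main obstacle is the arbitrariness of the weight vector $a \in \R_+^n$: a fixed Chernoff parameter would fail because a single large $a_i$ can drive $sa_i$ past $1$ and make the MGF infinite. Scaling $s$ by $1/\max_i a_i$ (equivalently, working with the normalized weights $a_i/A$, which sum to one) is what keeps each factor in the product bounded and, crucially, produces a tail bound that is uniform over all admissible $a$ --- necessary because the lemma must hold for the fixed but otherwise unconstrained vector $a$ that the later proofs will instantiate.
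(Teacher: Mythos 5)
Your proof is correct and takes essentially the same route as the paper: both arguments recognize that sub-uniformity of $\rho_i$ makes $a_i\log(1/\rho_i)$ sub-exponential with moment generating function bounded by $(1-sa_i)^{-1}$, and both run a Chernoff bound with parameter $s = 1/(2\max_i a_i)$ to get a tail probability of order $\delta^{5/2}\le\delta$. The only difference is packaging: the paper routes the computation through an auxiliary lemma on sums of independent variables satisfying $\P(Z_i\ge t)\le e^{-t/a_i}$ with a Bernstein-form MGF estimate, whereas you carry out the same Chernoff step inline using the cruder (but sufficient) bound $(1-x)^{-1}\le e^{2x}$ on $[0,\tfrac12]$.
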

\begin{proof}
The proof is the same for $\H_0$ and $\H_1$ so we prove it for $i=1,\dots,n$.
Because for $i=1,\dots,n$ the $\rho_i$ are independent, sub-uniformly distributed random variables, we have that $Z_i = a_i \log(1/\rho_i)$ are independent random variables satisfying $\P( Z_i \geq t ) \leq \exp(-t/a_i)$. To see this, $\P(\rho_i \leq x ) \leq x$ by definition, and thus $\P( a_i\log(1/\rho_i) \geq a_i \log(1/x) ) \leq x$. Set $t=a_i \log(1/x)$ and solve for $x$.
The result follows by applying Lemma~\ref{lem:exponential_tails}.
\end{proof}

We restate the theorem using the above events.
\begin{reptheorem}{thm:FDR_TPR}[FDR, TPR]
Let $\H_1 = \{ i \in [n]: \mu_i > \mu_0\}$, $\H_0 = \{ i \in [n]: \mu_i \leq \mu_0 \}$. Define $\Delta_i = \mu_i - \mu_0$ for $i \in \H_1$, $\Delta_i=\min_{j \in \H_1} \mu_j - \mu_i$ for $i\in\H_0$, and $\Delta = \min_{i \in \H_1} \Delta_i$.
For all $t$ we have $\E[\frac{|\calS_t \cap \H_0|}{|\calS_t|}] \leq \delta$. Moreover, on $\mathcal{E}_{4,0} \cap \mathcal{E}_{4,1}$ (which holds with probability at least $1-2\delta$), there exists a $T$ such that
\begin{align*}
T &\lesssim \sum_{i \in \H_0} \Delta_i^{-2} \log( \log(\Delta_i^{-2})/\delta) + \sum_{i \in \H_1} \Delta_i^{-2}  \log( n \log(\Delta_i^{-2})/\delta)
\end{align*}
and $\E[\frac{|\calS_t \cap \H_1|}{|\H_1|}] \geq 1-\delta$ for all $t \geq T$.
Also, on the same events there exists a $T$ such that
\begin{align*}
T &\lesssim n \Delta^{-2} \log( \log(\Delta^{-2})/\delta)
\end{align*}
and $\E[\frac{|\calS_t \cap \H_1|}{|\H_1|}] \geq 1-\delta$ for all $t \geq T$.
Note, neither follows from the other.
\end{reptheorem}
\begin{proof}
Define the random set $\calI = \{ i \in \H_1 : \rho_i \geq \delta \}$.
Note, this is equivalent to $\calI=\{ i \in \H_1 : \widehat{\mu}_{i,T_i(t)} + \phi(T_i(t),\delta) \geq \mu_i \ \ \forall t \in \mathbb{N} \}$ since if $\rho_i \geq \delta$ then $\phi(T_i(t),\delta) > \phi(T_i(t),\rho_i)$ and
\begin{align*}
\widehat{\mu}_{i,T_i(t)} + \phi(T_i(t),\delta) \geq \mu_i + \phi(T_i(t),\delta) - \phi(T_i(t),\rho_i) \geq \mu_i.
\end{align*}
Our aim is to show that this ``well-behaved'' set of indices $\calI$ will be added to $\S_t$ in the claimed amount of time.
This is sufficient for the TPR result because $\E|\calI| = \sum_{i \in \H_1} \E[\1\{ \rho_i > \delta\}] = \sum_{i \in \H_1} \P( \rho_i > \delta) \geq (1-\delta) |\H_1|$ since each $\rho_i$ is a sub-uniformly distributed random variable.
To be clear, we are not claiming which \emph{particular} indices of $\H_1$ will be added to $\S_t$ (indeed, $\mathcal{I}$ is a random set), just that their number exceeds $(1-\delta)|\H_1|$ in expectation.

Note that $\calS_t \subseteq \calS_{t+1}$ for all $t$ so define $T = \min \{ t \in \mathbb{N} : \calI \subseteq \calS_{t+1} \}$ if such inclusion ever exists, otherwise let $T = \infty$.
Then
\begin{align*}
T &= \sum_{t=1}^\infty \1\{  \calI \not\subseteq \calS_t\}\\
&= \sum_{t=1}^\infty \1\{ I_t \in \H_0, \calI \not\subseteq \calS_t\} + \1\{ I_t \in \H_1\}
\end{align*}
We will bound each sum separately, starting with the first.

For any $j \in \calI$ we have
\begin{align*}
\widehat{\mu}_{j,T_j(t)} + \phi(T_j(t),\delta) \geq \mu_j + \phi(T_j(t),\delta) - \phi(T_j(t),\rho_j) \geq \mu_j.
\end{align*}
Thus $\{\calI \not\subseteq \calS_t\}$ implies that
\begin{align*}
\arg\max_{j \in \S_t^c} \widehat{\mu}_{j,T_j(t)} + \phi(T_j(t),\delta) \geq \min_{j \in \calI} \mu_j \geq \min_{j \in \H_1} \mu_j.
\end{align*}
On the other hand, for any $i \in \H_0$ we have
\begin{align*}
\widehat{\mu}_{i,T_i(t)} + \phi(T_i(t),\delta) &\leq \mu_i + \phi(T_i(t),\delta) + \phi(T_i(t),\rho_i) \\
&\leq \mu_i + 2 \phi(T_i(t),\delta \rho_i)
\end{align*}
so that $\widehat{\mu}_{i,T_i(t)} + \phi(T_i(t),\delta) \leq \min_{j \in \H_1} \mu_j = \mu_i + \Delta_i$ whenever $T_i(t) \geq \phi^{-1}(\tfrac{\Delta_i}{2}, \delta \rho_i)$.
If $T_i(t)$ were this large, the arm $i \in \H_0$ could not be pulled because its upper confidence bound would be below the upper confidence bound of an arm $j \in \calI \cap \calS_t^c$.
Thus,
\begin{align*}
\sum_{t=1}^\infty \1\{ I_t \in \H_0, \calI \not\subseteq \calS_t \} &\leq \sum_{i \in \H_0} \phi^{-1}(\tfrac{\Delta_i}{2}, \delta \rho_i) \\
&\leq \sum_{i \in \H_0} c \Delta_i^{-2} \log ( \log(\Delta_i^{-2}) / (\delta\rho_i))  \\
&\leq \sum_{i \in \H_0} c \Delta_i^{-2} \log ( \log(\Delta_i^{-2}) / \delta) + c \Delta_i^{-2} \log(1/\rho_i) \\
&\overset{\mathcal{E}_{4,0}}{\leq} \sum_{i \in \H_0} c \Delta_i^{-2} \log ( \log(\Delta_i^{-2}) / \delta) + 5 c \Delta_i^{-2} \log(1/\delta) \\
&\leq \sum_{i \in \H_0} c' \Delta_i^{-2} \log ( \log(\Delta_i^{-2}) / \delta)  \\
&\leq c'' |\H_0| \Delta^{-2} \log ( \log(\Delta^{-2}) / \delta)
\end{align*}
which concludes the upper bound on the first term.

For the second term, consider a time that $I_t \in \H_1$. Recall $\delta' = \delta/(9.6\log(3000/\delta))$.
For any $j \in \H_1$ and arbitrary $k \leq n$ by the BH procedure in the algorithm we have
\begin{align*}
\widehat{\mu}_{j,T_j(t)} - \phi(T_j(t),\delta'\tfrac{k}{n}) &\geq \mu_j - \phi(T_j(t),\delta'\tfrac{k}{n}) - \phi(T_j(t),\rho_j) \\
&\geq \mu_j - 2\phi(T_j(t),\delta' \rho_j \tfrac{k}{n})
\end{align*}
so that $\widehat{\mu}_{j,T_j(t)} - \phi(T_j(t),\delta'\rho_j \tfrac{k}{n}) \geq \mu_0$ whenever $T_j(t) \geq \phi^{-1}(\tfrac{\mu_j-\mu_0}{2},\delta'\rho_j\tfrac{k}{n})$, guaranteeing its spot in $s(k)$.
In the worst case, the arms are added one at a time to $s(k)$, instead as a group.
Thus, if $\pi$ is any map $\H_1 \rightarrow \{1,\dots,|\H_1|\}$ then
\begin{align*}
\sum_{t=1}^\infty \1\{ I_t \in \H_1 \}  &\leq  \max_\pi \sum_{i \in \H_1} \phi^{-1}(\tfrac{\mu_i-\mu_0}{2},\delta'\rho_i\tfrac{\pi(i)}{n})\\
&\leq \max_\pi \sum_{i \in \H_1} \left(c \Delta_i^{-2} \log ( \tfrac{n}{\pi(i)}\log(\Delta_i^{-2}) / \delta') + c \Delta_i^{-2} \log (1/\rho_i) \right)  \\
&\overset{\mathcal{E}_{4,1}}{\leq} \max_\pi \sum_{i \in \H_1} \left(c \Delta_i^{-2} \log ( \tfrac{n}{\pi(i)}\log(\Delta_i^{-2}) / \delta) + 5 c \Delta_i^{-2} \log (1/\delta) \right) \\
&\leq \max_\pi \sum_{i \in \H_1} c' \Delta_i^{-2} \log ( \tfrac{n}{\pi(i)}\log(\Delta_i^{-2}) / \delta) \\
&= \sum_{i=1}^{|\H_1|} c' \Delta_i^{-2} \log ( \tfrac{n}{i}\log(\Delta_i^{-2}) / \delta).
\end{align*}
The first claimed upper bound of $T$, the diverse means case, is completed by considering the second to last line and noting trivially that $\pi(i)\geq 1$.
To obtain the second upper bound of $T$, we consider the last line and note that
\begin{align*}
\sum_{i=1}^k \log(\tfrac{n}{i}) &\leq \int_{0}^k \log(\tfrac{n}{x}) dx = k \log(n) - (x \log x - x)\Big|_{x=0}^k = k \log(\tfrac{n}{k}) + k \leq n.
\end{align*}
Combining the previous two displays we get
\begin{align*}
\sum_{t=1}^\infty \1\{ I_t \in \H_1 \} &\leq \sum_{i=1}^{|\H_1|} c \Delta^{-2} \log ( \tfrac{n}{i}\log(\Delta^{-2}) / \delta) \\
&\leq c |\H_1| \Delta^{-2} \log ( \log(\Delta^{-2}) / \delta) + \sum_{i=1}^{|\H_1|} c \Delta^{-2} \log ( \tfrac{n}{i}) \\
&\leq c |\H_1| \Delta^{-2} \log ( \log(\Delta^{-2}) / \delta) +  c n \Delta^{-2}  \\
&\leq c'' n \Delta^{-2} \log ( \log(\Delta^{-2}) / \delta) .
\end{align*}
\end{proof}

\section{Proof of Theorem~\ref{thm:FDR_FWPD}}\label{sec:FDR_FWPD_proof}
 Our analysis will also make use of the following events, that we will prove each hold with probability at least $1-\delta$. Let $\beta := \tfrac{5}{3(1-4\delta)} \log(1/\delta)$ and define:
\begin{align*}
\mathcal{E}_1 &:= \left\{ \left| \left\{ i \in \H_1 : \bigcap_{t=1}^\infty \{ |\widehat{\mu}_{i,t} - \mu_i| \leq \phi(t,\tfrac{\delta}{\beta}) \} \right\} \right| \geq \frac{1}{2} |\H_1|  \right\} \\
\mathcal{E}_{2, j} &:= \left\{ \bigcap_{i \in \H_j} \bigcap_{t=1}^\infty \{ |\widehat{\mu}_{i,t} - \mu_i| \leq \phi(t, \tfrac{\delta}{|\H_j|}) \}  \right\}\quad j\in \{0,1\}
\end{align*}
Also, recall from Lemma \ref{lem:fdr_high_prob} the event
\begin{align*}
\mathcal{E}_3 &:= \bigg\{ \forall s \in (0,1]: \sum_{i \in \H_0} \1\{P_{i,\ast}\leq s\} \\
    &\hspace{.5in}\leq  s |\H_0| + (1+4s) \sqrt{2 \max\{2s, 2\delta'/n\} |\H_0| \log(\tfrac{\log_2(n/\delta')}{\delta'}) } + \tfrac{1+4s}{3} \log(\tfrac{\log_2(n/\delta')}{\delta'}) \bigg\}
\end{align*}
Lemma \ref{lem:fdr_high_prob} guarantees that this event holds with probability greater than $1-\delta$.

We restate the theorem using the above events.
\begin{reptheorem}{thm:FDR_FWPD}[FDR, FWPD]
For all $t$ we have $\E[\frac{|\calS_t \cap \H_0|}{|\calS_t|}] \leq \delta$.
Moreover, on $\mathcal{E}_1 \cap \mathcal{E}_{2,1} \cap \mathcal{E}_3 \cap \mathcal{E}_{4,0} \cap \mathcal{E}_{4,1}$ (which holds with probability at least $1-5\delta$), there exists a $T$ such that
\begin{align*}
T &\lesssim  (n-|\H_1|) \Delta^{-2} \log (  \max\{|\H_1|, \log\log(n/\delta)\} \log(\Delta^{-2}) / \delta)  + |\H_1| \Delta^{-2} \log ( \log(\Delta^{-2}) / \delta)
\end{align*}
and $\H_1 \subseteq \calS_t$ for all $t \geq T$.
\end{reptheorem}

We need a few technical lemmas, specifically, the proof of events $\mathcal{E}_1$ and $\mathcal{E}_{2,1}$ and their consequences.

\begin{lemma}
$\P(\mathcal{E}_1) \geq 1-\delta$.
\end{lemma}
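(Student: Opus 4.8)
The plan is to translate $\mathcal{E}_1$ into a statement about the independent sub-uniform random variables $\rho_i$ of \eqref{eqn:rho_def} and then invoke a Chernoff bound for a binomial tail. First I would observe that, since $\phi(t,\cdot)$ is non-increasing in its second argument, the event $\bigcap_{t=1}^\infty\{|\widehat\mu_{i,t}-\mu_i|\le\phi(t,\tfrac{\delta}{\beta})\}$ holds whenever $\rho_i>\tfrac{\delta}{\beta}$ (take $\rho=\tfrac{\delta}{\beta}<\rho_i$ in the definition of $\rho_i$). Hence the set appearing in $\mathcal{E}_1$ contains $\{i\in\H_1:\rho_i>\tfrac{\delta}{\beta}\}$, so writing $k=|\H_1|$ and $S:=\sum_{i\in\H_1}\1\{\rho_i\le\tfrac{\delta}{\beta}\}$, the inclusion $\{S\le k/2\}\subseteq\mathcal{E}_1$ reduces the claim to showing $\P(S\ge k/2)\le\delta$.

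Next I would use that the $\rho_i$ are independent and sub-uniform, so $\P(\rho_i\le\tfrac{\delta}{\beta})\le\tfrac{\delta}{\beta}=:p$; thus the summands of $S$ are independent Bernoulli of mean at most $p$, and $S$ is stochastically dominated by a $\mathrm{Binomial}(k,p)$ variable. Applying the multiplicative Chernoff bound $\P(S\ge a)\le(e\,\E[S]/a)^{a}e^{-\E[S]}$ with $a=k/2$ and $\E[S]\le kp$ yields
$$\P(S\ge k/2)\le (2ep)^{k/2}=\big(2e\delta/\beta\big)^{k/2}.$$

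Then I would verify that the choice $\beta=\tfrac{5}{3(1-4\delta)}\log(1/\delta)$ makes this at most $\delta$ uniformly in $k\ge1$. For $k\ge2$ one has $k/2\ge1$ and $2e\delta/\beta\le1$, so $(2e\delta/\beta)^{k/2}\le 2e\delta/\beta\le\delta$ as soon as $\beta\ge 2e$; a short calculus check (minimizing $\tfrac{\log(1/\delta)}{1-4\delta}$ over $\delta\in(0,\tfrac14)$) shows $\beta$ stays above $2e$, so this case is done. The boundary case $k=1$ is handled directly: there $S\in\{0,1\}$ and $\{S\ge\tfrac12\}=\{S=1\}$, so $\P(S\ge\tfrac12)=\P(\rho_1\le\tfrac{\delta}{\beta})\le\tfrac{\delta}{\beta}\le\delta$ since $\beta\ge1$.

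The main obstacle is securing a tail bound valid uniformly over all cardinalities $k=|\H_1|$, including the small-$k$ regime: the Chernoff estimate weakens as $k\to1$ and must be supplemented by the direct computation for $k=1$, while the constant in $\beta$ must be chosen large enough to dominate the worst-case $k$. A secondary, milder subtlety is that the $\rho_i$ are only sub-uniform rather than exactly uniform, which I dispatch cleanly via stochastic domination by the binomial.
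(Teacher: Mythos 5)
Your proof is correct, but it takes a genuinely different route from the paper's. The paper splits on the cardinality of $\H_1$: when $|\H_1|\le\beta$ it simply union-bounds $\P\bigl(\bigcup_{i\in\H_1}\{\rho_i\le\delta/\beta\}\bigr)\le|\H_1|\delta/\beta\le\delta$, so that with probability $1-\delta$ \emph{every} arm of $\H_1$ is well-behaved; when $|\H_1|>\beta$ it instead counts the arms with $\rho_i\le\delta$ (at level $\delta$, not $\delta/\beta$, which still suffices since $\phi(t,\delta/\beta)\ge\phi(t,\delta)$) and applies Bernstein's inequality, obtaining $\sum_{i\in\H_1}\1\{\rho_i>\delta\}\ge(1-2\delta)|\H_1|-\tfrac56\log(1/\delta)\ge\tfrac12|\H_1|$, the last step being exactly where the constant $\beta=\tfrac{5}{3(1-4\delta)}\log(1/\delta)$ is used. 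You avoid the case split entirely by staying at level $\delta/\beta$ throughout and using a multiplicative Chernoff bound on $\mathrm{Binomial}(k,\delta/\beta)$, which gives the clean tail $(2e\delta/\beta)^{k/2}$, uniform in $k\ge 2$ once $\beta\ge 2e$, with $k=1$ handled directly. The trade-offs: the paper's argument produces the constant in $\beta$ organically from the Bernstein bound, whereas yours requires the a posteriori numerical verification that $\min_{\delta\in(0,1/4)}\tfrac{5\log(1/\delta)}{3(1-4\delta)}\approx 6.16>2e$ (which does check out); conversely your single-mechanism argument is shorter and makes transparent why the statement is really a binomial tail bound. Both proofs share the implicit restriction $\delta<1/4$ needed for $\beta>0$, and both rely on the same facts about the $\rho_i$ being independent and sub-uniform, so the probabilistic content is equivalent.
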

\begin{proof}
We break the proof up into two cases based on the cardinality $|\H_1|$.
If $|\H_1| \leq \beta$ then
\begin{align*}
 \P\left( \bigcup_{i\in\H_1} \bigcup_{t=1}^\infty \{ |\widehat{\mu}_{i,t} - \mu_i| \geq \phi(t, \tfrac{\delta}{ \beta }) \}\right) \leq \sum_{i\in\H_1}\P\left( \bigcup_{t=1}^\infty \{ |\widehat{\mu}_{i,t} - \mu_i| \geq \phi(t, \tfrac{\delta}{ \beta }) \}\right) \leq \delta \tfrac{|\H_1|}{\beta}
\end{align*}
which is less than $\delta$ by the case definition.
So in what follows, assume that $|\H_1| > \beta$.
By definition
\begin{align*}
\P\left( \bigcup_{t=1}^\infty \{ |\widehat{\mu}_{i,t} - \mu_i| \geq \phi(t, \delta) \} \right)
&= \P(\rho_i \leq \delta) \leq \delta.
\end{align*}
By Bernstein's inequality, with probability at least $1-\delta$
\begin{align*}
\sum_{i \in \H_1} \1\{ \rho_i \leq \delta \} &\leq \delta  |\H_1|  + \sqrt{ 2 \delta  |\H_1| \log(1/\delta) } + \tfrac{1}{3} \log(1/\delta)\\
&\leq \delta  |\H_1|  + 2 \sqrt{ \tfrac{1}{2} \delta  |\H_1| \log(1/\delta) } + (1-\tfrac{1}{3}) \tfrac{1}{2} \log(1/\delta) \\
&\leq 2 \delta |\H_1| +  \tfrac{5}{6} \log(1/\delta)
\end{align*}
where the last line follows from $a + 2 \sqrt{ab} + b = (\sqrt{a}+\sqrt{b})^2 \leq 2a + 2b$,
which implies
\begin{align*}
\sum_{i \in \H_1} \1\{ \rho_i > \delta \} &\geq (1-2\delta) |\H_1| -  \tfrac{5}{6} \log(1/\delta) \\
&\geq \tfrac{1}{2} |\H_1|
\end{align*}
where we use the fact that $|\H_1| > \beta = \tfrac{5}{3(1-4\delta)} \log(1/\delta)$.
Combining these two results, and noting that at most one of the cases $|\H_1| > \beta$ or $|H_1| \leq \beta$ can be true, we obtain that $\P(\mathcal{E}_1) \geq 1-\delta$.
\end{proof}

\begin{lemma}
$\min\{ \P(\mathcal{E}_{2,0}), \P(\mathcal{E}_{2,1}) \} \geq 1-\delta$.
\end{lemma}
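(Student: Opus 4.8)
The plan is to recognize that $\mathcal{E}_{2,0}$ and $\mathcal{E}_{2,1}$ have identical structure, so it suffices to bound $\P(\mathcal{E}_{2,j}^c)$ for a generic $j \in \{0,1\}$ and then take the worse of the two. The entire argument is a single union bound calibrated against the anytime confidence guarantee assumed for $\phi$, so no new machinery is needed.

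First I would invoke the defining property of the anytime confidence interval stated in the preliminaries: for each fixed arm $i$ and any confidence level $\eta \in (0,1)$, the one-arm bound $\P\big( \bigcup_{t=1}^\infty \{ |\widehat{\mu}_{i,t} - \mu_i| > \phi(t,\eta) \} \big) \leq \eta$ holds. I would apply this with $\eta = \delta/|\H_j|$ to each arm $i \in \H_j$, so that the per-arm failure event $\bigcup_{t=1}^\infty \{ |\widehat{\mu}_{i,t} - \mu_i| > \phi(t,\delta/|\H_j|)\}$ has probability at most $\delta/|\H_j|$. Next, writing the complement as $\mathcal{E}_{2,j}^c = \bigcup_{i \in \H_j} \bigcup_{t=1}^\infty \{ |\widehat{\mu}_{i,t} - \mu_i| > \phi(t,\delta/|\H_j|)\}$, a union bound over the $|\H_j|$ arms yields $\P(\mathcal{E}_{2,j}^c) \leq \sum_{i \in \H_j} \delta/|\H_j| = \delta$, hence $\P(\mathcal{E}_{2,j}) \geq 1-\delta$. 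Since this holds for each $j \in \{0,1\}$, the minimum over the two is at least $1-\delta$.

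There is no real obstacle here; the proof is a single union bound. The only point worth flagging is that the confidence level fed to $\phi$ is deliberately set to $\delta/|\H_j|$ precisely so that the $|\H_j|$ summands cancel exactly after the union bound. This is exactly the kind of union-bound-over-all-arms step the preliminaries caution against, since it buries a $\log|\H_j|$ factor inside $\phi$; it is tolerable here because $\mathcal{E}_{2,1}$ (together with $\mathcal{E}_1$) serves only as a crude high-probability scaffolding event, within which the sharper $\rho_i$-based counting arguments (events $\mathcal{E}_{4,0}, \mathcal{E}_{4,1}$) then carry the weight of removing extraneous logarithmic factors.
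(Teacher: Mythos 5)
Your proof is correct and is essentially identical to the paper's own argument: a union bound over the arms in $\H_j$, each governed by the anytime confidence guarantee at level $\delta/|\H_j|$. Nothing further is needed.
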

\begin{proof}
The result follows from a union bound:
\begin{align*}
\P(\mathcal{E}_{2,1}^c) &= \P\left(\left\{ \bigcup_{i \in \H_1} \bigcup_{t=1}^\infty \{ |\widehat{\mu}_{i,t} - \mu_i| \leq \phi(t, \tfrac{\delta}{|\H_1|}) \}  \right\}\right)\\
&\leq \sum_{i \in \H_1} \P\left(\left\{ \bigcup_{t=1}^\infty \{ |\widehat{\mu}_{i,t} - \mu_i| \leq \phi(t, \tfrac{\delta}{|\H_1|}) \}  \right\}\right) \leq \sum_{i \in \H_1} \frac{\delta}{|\H_1|} \leq \delta.
\end{align*}
The proof that $\P(\mathcal{E}_{2,0}) \geq 1-\delta$ follows analogously.
\end{proof}

The next lemma shows an important consequence of these events holding.

\begin{lemma}\label{lem:H1_ucb_highest}
If $\mathcal{E}_1 \cap \mathcal{E}_{2,1}$ then for all $t$
\begin{align*}
\H_1 \nsubseteq \calS_t \implies \max_{i \in \H_1 \cap \calS_t^c} \widehat{\mu}_{i,T_i(t)} + \phi(T_i(t), \tfrac{\delta}{2|\calS_t| \vee \beta }) \geq \min_{i \in \H_1} \mu_i.
\end{align*}
\end{lemma}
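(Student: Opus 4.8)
The plan is to exploit the dichotomy built into the definition $\xi_t = 2|\calS_t| \vee \beta$: depending on how many good arms have already been discovered, either the confidence level $\tfrac{\delta}{2|\calS_t|\vee\beta}$ is coarse enough to be dominated by $\tfrac{\delta}{\beta}$, or $|\calS_t|$ is already large enough that it is dominated by $\tfrac{\delta}{|\H_1|}$. I would first name the random set certified by $\mathcal{E}_1$, namely $G := \{ i \in \H_1 : \bigcap_{t=1}^\infty \{|\widehat{\mu}_{i,t}-\mu_i| \leq \phi(t,\tfrac{\delta}{\beta})\}\}$, and record that on $\mathcal{E}_1$ we have $|G| \geq \tfrac12|\H_1|$. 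The entire argument then turns on whether or not $G \subseteq \calS_t$, and uses only that $\phi(t,\cdot)$ is non-increasing in its second argument.

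Fix a time $t$ with $\H_1 \not\subseteq \calS_t$. In the first case, suppose $G \not\subseteq \calS_t$ and choose any $i \in G \cap \calS_t^c$; note $i \in \H_1 \cap \calS_t^c$. Since $2|\calS_t|\vee\beta \geq \beta$ forces $\tfrac{\delta}{2|\calS_t|\vee\beta} \leq \tfrac{\delta}{\beta}$, monotonicity gives $\phi(T_i(t),\tfrac{\delta}{2|\calS_t|\vee\beta}) \geq \phi(T_i(t),\tfrac{\delta}{\beta})$, while the defining property of $G$ yields $\widehat{\mu}_{i,T_i(t)} \geq \mu_i - \phi(T_i(t),\tfrac{\delta}{\beta})$. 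Adding the two inequalities and using $\mu_i \geq \min_{j\in\H_1}\mu_j$ completes this case.

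In the second case, suppose $G \subseteq \calS_t$. Then on $\mathcal{E}_1$ we have $|\calS_t| \geq |\calS_t \cap \H_1| \geq |G| \geq \tfrac12 |\H_1|$, so $2|\calS_t| \geq |\H_1|$ and hence $2|\calS_t|\vee\beta \geq |\H_1|$, giving $\tfrac{\delta}{2|\calS_t|\vee\beta} \leq \tfrac{\delta}{|\H_1|}$. Because $\H_1 \not\subseteq \calS_t$, there is some $i \in \H_1 \cap \calS_t^c$; on $\mathcal{E}_{2,1}$ this arm satisfies $\widehat{\mu}_{i,T_i(t)} \geq \mu_i - \phi(T_i(t),\tfrac{\delta}{|\H_1|})$, and monotonicity again gives $\phi(T_i(t),\tfrac{\delta}{2|\calS_t|\vee\beta}) \geq \phi(T_i(t),\tfrac{\delta}{|\H_1|})$, so the same addition finishes the bound. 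Taking the maximum over $\H_1 \cap \calS_t^c$ on the left then yields the lemma.

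There is no serious obstacle here: the substance lies entirely in choosing the right confidence level as a function of $|\calS_t|$, which the algorithm's definition of $\xi_t$ already encodes. The only points requiring care are verifying that the case split on $G \subseteq \calS_t$ is exhaustive and that in each branch the exhibited index genuinely lies in $\H_1 \cap \calS_t^c$ so that it contributes to the maximum on the left-hand side; both follow immediately from $|G| \geq \tfrac12|\H_1|$ on $\mathcal{E}_1$ and from the standing assumption $\H_1 \not\subseteq \calS_t$.
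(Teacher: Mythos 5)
Your proof is correct and follows essentially the same route as the paper's: the same dichotomy driven by $2|\calS_t|\vee\beta \geq \beta$ versus $2|\calS_t|\vee\beta \geq |\H_1|$, with $\mathcal{E}_1$ handling the first branch and $\mathcal{E}_{2,1}$ the second. The only cosmetic difference is that you split on whether $G \subseteq \calS_t$ and exhibit an explicit witness arm, while the paper splits on $|\calS_t| \lessgtr \tfrac12|\H_1|$ and argues via the cardinality of the set $\mathcal{I}_t$ of well-covered arms; the two organizations are interchangeable.
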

\begin{proof}
Define the random set $\mathcal{I}_t = \left\{ i \in \H_1 : \widehat{\mu}_{i,T_i(t)} + \phi(T_i(t), \tfrac{\delta}{2|\calS_t| \vee \beta }) \geq \mu_i \right\}$.
We will prove that on $\mathcal{E}_1 \cap \mathcal{E}_{2,1}$ we have $\mathcal{I}_t \cap \calS_t^c \neq \emptyset$ which implies the result.
First we use the fact that $2|\calS_t| \vee \beta  \geq \beta$ so that
\begin{align*}
|\mathcal{I}_t| &= \left| \left\{ i \in \H_1 : \widehat{\mu}_{i,T_i(t)} + \phi(T_i(t), \tfrac{\delta}{2|\calS_t| \vee \beta }) \geq \mu_i \right\} \right|\\
&\geq \left| \left\{ i \in \H_1 : \widehat{\mu}_{i,T_i(t)} + \phi(T_i(t), \tfrac{\delta}{\beta }) \geq \mu_i \right\} \right|\\
&\geq \left| \left\{ i \in \H_1 : \bigcap_{t=1}^\infty \{ |\widehat{\mu}_{i,T_i(t)} - \mu_i| \leq \phi(T_i(t), \tfrac{\delta}{\beta }) \} \right\} \right|	\\
&\geq \left| \left\{ i \in \H_1 : \bigcap_{t=1}^\infty \{ |\widehat{\mu}_{i,t} - \mu_i| \leq \phi(t, \tfrac{\delta}{\beta }) \} \right\} \right|	\\
&\overset{\mathcal{E}_1}{\geq} \frac{1}{2}|\H_1| .
\end{align*}
Given $|\mathcal{I}_t| \geq \frac{1}{2}|\H_1|$,  if $|\calS_t| < \frac{1}{2}|\H_1|$ then $|\calS_t| < |\mathcal{I}_t|$ which implies $\calS_t^c \cap \mathcal{I}_t \neq \emptyset$.
On the other hand, if $|\calS_t| \geq |\H_1|/2$ then we use the fact that $2|\calS_t| \vee \beta  \geq 2 |\calS_t| \geq |\H_1|$ to observe
\begin{align*}
|\mathcal{I}_t| &= \left| \left\{ i \in \H_1 : \widehat{\mu}_{i,T_i(t)} + \phi(T_i(t), \tfrac{\delta}{2|\calS_t| \vee \beta }) \geq \mu_i \right\} \right|\\
&\geq \left| \left\{ i \in \H_1 : \widehat{\mu}_{i,T_i(t)} + \phi(T_i(t), \tfrac{\delta}{|\H_1| }) \geq \mu_i \right\} \right|\\
&\geq \left| \left\{ i \in \H_1 : \bigcap_{t=1}^\infty \{ |\widehat{\mu}_{i,T_i(t)} - \mu_i| \leq \phi(T_i(t), \tfrac{\delta}{|\H_1|}) \} \right\} \right|	\\
&\geq \left| \left\{ i \in \H_1 : \bigcap_{t=1}^\infty \{ |\widehat{\mu}_{i,t} - \mu_i| \leq \phi(t, \tfrac{\delta}{|\H_1|}) \} \right\} \right|	\\
&\overset{\mathcal{E}_{2,1}}{\geq} |\H_1|
\end{align*}
which implies $\mathcal{I}_t=\H_1$, thus $\mathcal{I}_t \cap \calS_t^c = \H_1 \cap \calS_t^c$ which is non-empty by assumption.
\end{proof}

We are now ready to prove Theorem~\ref{thm:FDR_FWPD}.
\begin{proof}
We proceed similarly to Theorem 1.
Note that $\calS_t \subseteq \calS_{t+1}$ for all $t$ so define $T = \min \{ t \in \mathbb{N} : \H_1 \subseteq \calS_{t+1} \}$ if such inclusion ever exists, otherwise let $T = \infty$.
Then
\begin{align*}
T &= \sum_{t=1}^\infty \1\{ \H_1 \not\subseteq \calS_t \} \\
&= \sum_{t=1}^\infty \1\{ I_t \in \H_0, \H_1 \not\subseteq \calS_t\} + \1\{ I_t \in \H_1\} .
\end{align*}
Note that we are in the TPR setting, like Theorem~\ref{thm:FDR_TPR}, and so the upperbound $\sum_{t=1}^\infty \1\{ I_t \in \H_1\} \leq c n\Delta^{-2} \log( \log(\Delta^{-2}) /\delta)$ applies here as well.
Thus, we only need to bound the first sum.

Lemma~\ref{lem:H1_ucb_highest} (which requires $\mathcal{E}_1 \cap \mathcal{E}_{2,1}$) says that if there is at least one arm from $\H_1$ not in $\calS_t$ then the largest upper confidence bound of some arm in $\H_1 \cap \calS_t^c$ is \emph{at least} as large as $\min_{j \in \H_1} \mu_j \geq \mu_0 + \Delta$.
Thus, $\{\H_1 \not\subseteq \calS_t\}$ implies that
\begin{align*}
\arg\max_{i \in \S_t^c} \widehat{\mu}_{i,T_i(t)} + \phi(T_i(t),\delta)  \geq \min_{i \in \H_1} \mu_i = \mu_0 + \Delta.
\end{align*}

On the other hand, for $\kappa= \tfrac{1}{1-2\delta'(1+4\delta')} |\H_1| + \tfrac{4(1+4\delta')/3}{1-2\delta'(1+4\delta')} \log(\tfrac{5\log_2(n/\delta')}{\delta'})$, we have $|\S_t| \leq \kappa$ from Lemma~\ref{lem:fdr_high_prob} (which requires $\mathcal{E}_3$),
and for any $i \in \H_0$ we have
\begin{align*}
\widehat{\mu}_{i,T_i(t)} + \phi(T_i(t),\tfrac{\delta}{|\S_t| \vee \beta}) &\leq \mu_i + \phi(T_i(t),\tfrac{\delta}{|\S_t| \vee \beta}) + \phi(T_i(t),\rho_i) \\
&\leq \mu_i + \phi(T_i(t),\tfrac{\delta}{\kappa}) + \phi(T_i(t),\rho_i) \\
&\leq \mu_i + 2 \phi(T_i(t),\tfrac{\delta \rho_i}{\kappa})\\
&\leq \mu_0 + 2 \phi(T_i(t),\tfrac{\delta \rho_i}{\kappa})
\end{align*}
so that $\widehat{\mu}_{i,T_i(t)} + \phi(T_i(t),\tfrac{\delta}{|\S_t| \vee \beta}) \leq \mu_0 + \Delta$ whenever $T_i(t) \geq \phi^{-2}(\tfrac{\Delta}{2}, \tfrac{\delta \rho_i}{\kappa})$.
Thus,
\begin{align*}
\sum_{t=1}^\infty \1\{ I_t \in \H_0, \H_1 \not\subseteq \calS_t \} &\leq \sum_{i \in \H_0} \phi^{-2}(\tfrac{\Delta}{2}, \tfrac{\delta \rho_i}{\kappa}) \\
&\leq \sum_{i \in \H_0} c \Delta^{-2} \log ( \kappa \log(\Delta^{-2}) / \delta) + c \Delta^{-2} \log(1/\rho_i) \\
&\overset{\mathcal{E}_{4,0}}{\leq} \sum_{i \in \H_0} c' \Delta^{-2} \log ( \max\{|\H_1|, \log\log(n/\delta)\} \log(\Delta^{-2}) / \delta) + 5 c \Delta^{-2} \log(1/\delta) \\
&\leq |\H_0| c'' \Delta^{-2} \log ( \max\{|\H_1|, \log\log(n/\delta)\} \log(\Delta^{-2}) / \delta)
\end{align*}
We conclude that
\begin{align*}
T &= \sum_{t=1}^T \1\{ I_t \in \H_0, \H_1 \not\subseteq \calS_t\} + \1\{ I_t \in \H_1, \H_1 \not\subseteq \calS_t\} \\
&\leq (n-|\H_1|) c \Delta^{-2} \log ( \max\{|\H_1|, \log\log(n/\delta)\} \log(\Delta^{-2}) / \delta) + c n \Delta^{-2} \log ( \log(\Delta^{-2}) / \delta)\\
&\leq (n-|\H_1|) c' \Delta^{-2} \log ( \max\{|\H_1|, \log\log(n/\delta)\} \log(\Delta^{-2}) / \delta) + c' |\H_1| \Delta^{-2} \log ( \log(\Delta^{-2}) / \delta).
\end{align*}
\end{proof}

\section{Proof of Theorem~\ref{thm:FWER_FWPD}}
We restate the theorem using the above events.
\begin{reptheorem}{thm:FWER_FWPD}[FWER, FWPD]
For all $t$ we have $\E[\frac{|\calS_t \cap \H_0|}{|\calS_t|}] \leq \delta$.
Moreover, on $\mathcal{E}_1 \cap \mathcal{E}_{2,0} \cap \mathcal{E}_{2,1} \cap \mathcal{E}_3 \cap \mathcal{E}_{4,0} \cap \mathcal{E}_{4,1} $ (which holds with probability at least $1-6\delta$), we have $\H_0 \cap \calR_t = \emptyset$ for all $t \in \mathbb{N}$ and there exists a $T$ such that
\begin{align*}
T \lesssim&  (n-|\H_1|) \Delta^{-2} \log (  \max\{|\H_1|, \log\log(n/\delta)\} \log(\Delta^{-2}) / \delta) \\
&+ |\H_1| \Delta^{-2} \log ( \max\{n-(1-2\delta(1+4\delta))|\H_1|, \log\log(n/\delta)\} \log(\Delta^{-2}) / \delta)
\end{align*}
and $\H_1 \subseteq \calR_t$ for all $t \geq T$.
Note, together this implies $\H_1 = \calR_t$ for all $t \geq T$.
\end{reptheorem}

The following lemma shows that a tight control on the size of $\calS_t$ allows us to conclude a FWER.

\begin{lemma}\label{lem:fwer}
If $\mathcal{E}_3 \cap \mathcal{E}_{2,0}$ holds then $\calR_t \cap \H_0 = \emptyset$ for all $t$.
\end{lemma}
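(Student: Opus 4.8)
The plan is to show that on the event $\mathcal{E}_3 \cap \mathcal{E}_{2,0}$ no null arm can ever satisfy the inclusion criterion for $\calR_{t+1}$, so that the claim follows by a trivial induction from the base case $\calR_{n+1} = \emptyset$ (the update only ever \emph{adds} arms passing a test). The whole argument hinges on the observation that the threshold $\chi_t$ in the algorithm is engineered precisely so that $\chi_t \geq |\H_0|$ whenever the size of $\calS_t$ obeys the high-probability bound of Lemma~\ref{lem:fdr_high_prob}; this lets us treat $\chi_t$ as a valid Bonferroni denominator over $\H_0$ even though $|\H_0|$ is unknown.

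First I would establish $\chi_t \geq |\H_0|$ for all $t$. Writing $a = 1 - 2\delta'(1+4\delta')$ and $L = \tfrac{4(1+4\delta')}{3}\log(\tfrac{5\log_2(n/\delta')}{\delta'})$, the size bound of Lemma~\ref{lem:fdr_high_prob}, which holds on $\mathcal{E}_3$, reads $|\calS_t| \leq a^{-1}(|\H_1| + L)$, equivalently $a\,|\calS_t| \leq |\H_1| + L$. Since $\chi_t = n - a\,|\calS_t| + L$ and $|\H_0| = n - |\H_1|$, this gives $\chi_t \geq n - (|\H_1| + L) + L = n - |\H_1| = |\H_0|$.

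Next, because $\phi(\cdot,\cdot)$ is non-increasing in its second argument and $\chi_t \geq |\H_0|$ implies $\tfrac{\delta}{\chi_t} \leq \tfrac{\delta}{|\H_0|}$, I have $\phi(T_i(t), \tfrac{\delta}{\chi_t}) \geq \phi(T_i(t), \tfrac{\delta}{|\H_0|})$ for every arm $i$ and time $t$. Fixing any $i \in \H_0$, the event $\mathcal{E}_{2,0}$ gives $\widehat{\mu}_{i,T_i(t)} \leq \mu_i + \phi(T_i(t), \tfrac{\delta}{|\H_0|}) \leq \mu_0 + \phi(T_i(t), \tfrac{\delta}{|\H_0|})$, and combining the two displays yields
\[
\widehat{\mu}_{i,T_i(t)} - \phi(T_i(t), \tfrac{\delta}{\chi_t}) \;\leq\; \widehat{\mu}_{i,T_i(t)} - \phi(T_i(t), \tfrac{\delta}{|\H_0|}) \;\leq\; \mu_0 .
\]
Since arm $i$ is placed into $\calR_{t+1}$ only when $\widehat{\mu}_{i,T_i(t)} - \phi(T_i(t), \tfrac{\delta}{\chi_t}) \geq \mu_0$, no $i \in \H_0$ is ever added, and the induction closes, giving $\calR_t \cap \H_0 = \emptyset$ for all $t$.

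The substance of the argument is really the bookkeeping in the first step: recognizing that the constants buried in $\chi_t$ are the exact algebraic complement of the size bound on $\calS_t$, so that the surrogate $|\calS_t|$ for $|\H_1|$ is inflated by just enough to certify the Bonferroni correction over $\H_0$ without ever knowing $|\H_0|$. I expect the only genuine subtlety to be the boundary case $\mu_i = \mu_0$, where the chain above produces $\leq \mu_0$ rather than a strict inequality; this is handled either by taking the anytime bound to be strict (which holds with the same probability) or by noting that the exact-equality event has probability zero for the continuous empirical means considered here.
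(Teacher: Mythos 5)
Your proof is correct and follows essentially the same route as the paper's: both use the size bound from Lemma~\ref{lem:fdr_high_prob} (on $\mathcal{E}_3$) to show $\chi_t \geq |\H_0|$, then apply monotonicity of $\phi$ together with $\mathcal{E}_{2,0}$ to conclude that every null arm's lower confidence bound stays at or below $\mu_0$, so none is added to $\calR_t$. Your closing remark on the boundary case $\widehat{\mu}_{i,T_i(t)} - \phi(T_i(t),\tfrac{\delta}{\chi_t}) = \mu_0$ is in fact slightly more careful than the paper, which silently treats the non-strict inequality as sufficient.
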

\begin{proof}
By Lemma~\ref{lem:fdr_high_prob} (which requires $\mathcal{E}_3$) we have $|\S_t| \leq \tfrac{1}{1-2\delta'(1+4\delta')} |\H_1| + \tfrac{4(1+4\delta')/3}{1-2\delta'(1+4\delta')} \log(\tfrac{5\log_2(n/\delta')}{\delta'}) = \frac{|\H_1| + \eta}{1-2\delta'(1+4\delta')}$ for all times $t$ and $\eta = \tfrac{4(1+4\delta')}{3}\log(5\log_2(n)/\delta')$.
This implies
\begin{align} \label{eqn:H0_lessthan}
n - (1-2\delta'(1+4\delta')) |\S_t| + \eta \geq n - |\H_1| = |\H_0|
\end{align}
but for $i \in \S_t \cap \H_0 \subseteq \H_0$ we have that $\mathcal{E}_{2,0}$ applies so
\begin{align*}
\widehat{\mu}_{i,T_i(t)} - \phi(T_i(t),\tfrac{\delta}{n - (1-2\delta'(1+4\delta')) |\S_t| + \eta}) &\leq \widehat{\mu}_{i,T_i(t)} - \phi(T_i(t),\tfrac{\delta}{|\H_0|}) \overset{\mathcal{E}_{2,0}}{\leq} \mu_i \leq \mu_0
\end{align*}
where the last inequality holds because $\max_{i \in \H_0} \mu_i \leq \mu_0$.
Thus, no arms from $\H_0$ will be added to $\calR_t$.
\end{proof}

Now that we have FWER control, we need to show that all the arms in $\H_1$ are added to $\calR_t$ in the claimed amount of time.
\begin{proof}
Note that $\calR_t \subseteq \calR_{t+1}$ for all $t$ so define $T = \min \{ t \in \mathbb{N} : \H_1 \subseteq \calR_{t+1} \}$ if such inclusion ever exists, otherwise let $T = \infty$.
Noting that $\calR_t \subseteq \calS_t$ we have
\begin{align*}
T &= \sum_{t=1}^\infty \1\{ \H_1 \not\subseteq \calR_t\} \\
&= \sum_{t=1}^\infty \1\{ \H_1 \not\subseteq \calS_t\} + \1\{ \H_1 \not\subseteq \calR_t, \H_1 \subseteq \calS_t \}\\
&= \sum_{t=1}^\infty \1\{ \H_1 \not\subseteq \calS_t\} + \1\{ J_t \in \H_0, \H_1 \not\subseteq \calR_t, \H_1 \subseteq \calS_t \} + \1\{ J_t \in \H_1, \H_1 \not\subseteq \calR_t, \H_1 \subseteq \calS_t \}
\end{align*}
\textbf{First sum.} Note that we are in the FWPD setting, so the selection rule for $I_t$ identical to that of the setting of Theorem~\ref{thm:FDR_FWPD},
and $\sum_{t=1}^\infty \1\{ \H_1 \not\subseteq \calS_t \}$ is precisely what is bounded in the proof of Theorem~\ref{thm:FDR_FWPD} (which requires $\mathcal{E}_1 \cap \mathcal{E}_{2,1} \cap \mathcal{E}_3 \cap \mathcal{E}_{4,0} \cap \mathcal{E}_{4,1}$).
Thus,
\begin{align*}
&\sum_{t=1}^T \1\{ \H_1 \not\subseteq \calS_t \} \\
&\leq (n-|\H_1|) c' \Delta^{-2} \log ( \max\{|\H_1|, \log\log(n/\delta)\} \log(\Delta^{-2}) / \delta) + c' |\H_1| \Delta^{-2} \log ( \log(\Delta^{-2}) / \delta).
\end{align*}

\textbf{Second sum.} Recall that $\displaystyle J_t = \arg\min_{i \in \S_t\setminus \calR_t} \widehat{\mu}_{i,T_i(t)} + \phi(T_i(t),\tfrac{\delta}{|\calS_t|})$.
Now, on the event $\{ \H_1 \not\subseteq \calR_t, \H_1 \subseteq \calS_t \}$ we have that $|\H_1| \leq |\calS_t|$ and that there exists a $j \in \H_1 \cap (\calS_t \setminus \calR_t)$ such that
\begin{align*}
\widehat{\mu}_{j,T_j(t)} + \phi(T_j(t), \tfrac{\delta}{|\S_t|}) &\geq \widehat{\mu}_{j,T_j(t)} + \phi(T_j(t), \tfrac{\delta}{|\H_1|}) \\
&\overset{\mathcal{E}_{2,1}}{\geq} \mu_j \\
&\geq \mu_0 + \Delta.
\end{align*}
On the other hand, for $\kappa=\tfrac{1}{1-2\delta'(1+4\delta')} |\H_1| + \tfrac{4(1+4\delta')/3}{1-2\delta'(1+4\delta')} \log(\tfrac{5\log_2(n/\delta')}{\delta'})$, we have $|\S_t| \leq \kappa$ from Lemma~\ref{lem:fdr_high_prob} (which requires $\mathcal{E}_3$), and for any $i \in \H_0 \cap (\S_t \setminus \calR_t)$ we have
\begin{align*}
\widehat{\mu}_{i,T_i(t)} + \phi(T_i(t),\tfrac{\delta}{|\S_t|})
&\leq \mu_i + \phi(T_i(t),\tfrac{\delta}{\kappa}) + \phi(T_i(t),\rho_i) \\
&\leq \mu_i + 2 \phi(T_i(t),\tfrac{\delta \rho_i}{\kappa}) \\
&\leq \mu_0 + 2 \phi(T_i(t),\tfrac{\delta \rho_i}{\kappa})
\end{align*}
so that $\widehat{\mu}_{i,T_i(t)} + \phi(T_i(t),\tfrac{\delta}{|\S_t|}) \leq \mu_0 + \Delta$ whenever $T_i(t) \geq \phi^{-2}(\tfrac{\Delta}{2}, \tfrac{\delta \rho_i}{\kappa})$.
By an identical argument to that made in the proof of Theorem~\ref{thm:FDR_FWPD} we very crudely have the bound
\begin{align*}
\sum_{t=1}^\infty  \1\{ J_t \in \H_0, \H_1 \not\subseteq \calR_t, \H_1 \subseteq \calS_t \}  &\leq \sum_{i \in \H_0} \phi^{-2}(\tfrac{\Delta}{2}, \tfrac{\delta \rho_i}{\kappa})\\
&\leq c |\H_0| \Delta^{-2} \log ( \max\{|\H_1|, \log\log(n/\delta)\} \log(\Delta^{-2}) / \delta).
\end{align*}

\textbf{Third sum.}
An arm $j \in \H_1 \cap (\calS_t \setminus \calR_t)$ is accepted into $\calR_t$ if $\widehat{\mu}_{i,T_i(t)} - \phi(T_i(t), \tfrac{\delta}{\chi_t}) \geq \mu_0$ where $\chi_t = n - (1-2\delta'(1+4\delta')) |\S_t| + \tfrac{4(1+4\delta')}{3}\log(5\log_2(n/\delta')/\delta')$.
On the event $\{\H_1 \not\subseteq \calR_t, \H_1 \subseteq \calS_t \}$ we have $\chi_t \leq n - (1-2\delta'(1+4\delta')) |\H_1| + \tfrac{4(1+4\delta')}{3}\log(5\log_2(n/\delta')/\delta') =: u$.
Thus, for $j \in \H_1 \cap (\calS_t \setminus \calR_t)$
\begin{align*}
\widehat{\mu}_{j,T_j(t)} - \phi(T_j(t), \tfrac{\delta}{\chi_t}) &\geq \widehat{\mu}_{j,T_j(t)} - \phi(T_j(t), \tfrac{\delta}{u}) \\
&\geq \mu_j - \phi(T_j(t), \tfrac{\delta}{u}) - \phi(T_j(t),\rho_j) \\
&\geq \mu_j - 2\phi(T_j(t), \tfrac{\delta \rho_j}{u})\\
&\geq \mu_0 + \Delta - 2\phi(T_j(t), \tfrac{\delta \rho_j}{u})
\end{align*}
so that $\widehat{\mu}_{j,T_j(t)} - \phi(T_j(t), \tfrac{\delta}{\chi_t}) \geq \mu_0$ whenever $T_j(t) \geq \phi^{-1}(\tfrac{\Delta}{2},\tfrac{\delta \rho_j}{u})$.
By the same arguments made throughout these proofs we have
\begin{align*}
\sum_{t=1}^\infty \1\{ J_t \in \H_1, \H_1 \not\subseteq \calR_t, \H_1 \subseteq \calS_t \}  &\leq \sum_{j \in \H_1} \phi^{-1}(\tfrac{\Delta}{2},\tfrac{\delta \rho_j}{u}) \\
&\hspace{-1in}\leq c |\H_1| \Delta^{-2} \log ( u \log(\Delta^{-2}) / \delta) \\
&\hspace{-1in}\leq c' |\H_1| \Delta^{-2} \log ( \max\{n-(1-2\delta(1+4\delta))|\H_1|, \log\log(n/\delta)\} \log(\Delta^{-2}) / \delta).
\end{align*}
Summing all three sums together yields the result.
\end{proof}

\section{Proof of Theorem~\ref{thm:FWER_TPR}}
We define a new event,
\begin{align*}
\mathcal{E}_5 &:= \left\{ \sum_{i \in \H_1} \1\{ \rho_i \leq \delta \} \leq \delta |\H_1| + \sqrt{2\delta |\H_1| \log(1/\delta)} + \tfrac{1}{3}\log(1/\delta)\right\}
\end{align*}
By a direct application of Bernstein's inequality, this holds with probability greater than $1-\delta$.

\begin{reptheorem}{thm:FWER_TPR}[FWER, TPR]
For all $t$ we have $\E[\frac{|\calS_t \cap \H_0|}{|\calS_t|}] \leq \delta$.
Moreover, on $\mathcal{E}_1 \cap \mathcal{E}_{2,1} \cap \mathcal{E}_3 \cap \mathcal{E}_{4,0} \cap \mathcal{E}_{4,1} \cap \mathcal{E}_{2,0} \cap \mathcal{E}_5$ (which holds with probability at least $1-7\delta$), we have $\H_0 \cap \calR_t = \emptyset$ for all $t \in \mathbb{N}$ and there exists a $T$ such that
\begin{align*}
T \lesssim&  (n-|\H_1|) \Delta^{-2} \log (  \log(\Delta^{-2}) / \delta) \\
&+ |\H_1| \Delta^{-2} \log ( \max\{n-(1-2\delta)|\H_1|, \log\log(n/\delta)\} \log(\Delta^{-2}) / \delta)
\end{align*}
and $\E[\frac{|\calR_t \cap \H_1|}{|\H_1|}] \geq 1-\delta$ for all $t \geq T$.
\end{reptheorem}

\begin{proof}
First, we invoke Lemma~\ref{lem:fwer} (which requires $\mathcal{E}_3 \cap \mathcal{E}_{2,0}$) which controls the FWER.
All that is left is to control the sample complexity.

Let $\calI = \{ i \in \H_1 : \rho_i \geq \delta \}$ be the same random set defined in the proof of Theorem~\ref{thm:FDR_TPR}.
Note that $\calR_t \subseteq \calR_{t+1}$ for all $t$ so define $T = \min \{ t \in \mathbb{N} : \calI \subseteq \calR_{t+1} \}$ if such inclusion ever exists, otherwise let $T = \infty$.
Noting that $\calR_t \subseteq \calS_t$ we have
\begin{align*}
T &= \sum_{t=1}^\infty \1\{ \calI \not\subseteq \calR_t\} \\
&= \sum_{t=1}^\infty \1\{ \calI \not\subseteq \calS_t\} + \1\{ \calI \not\subseteq \calR_t, \calI \subseteq \calS_t\}\\
&= \sum_{t=1}^\infty \1\{ \calI \not\subseteq \calS_t\} + \1\{ J_t \in \H_0, \calI \not\subseteq \calR_t, \calI \subseteq \calS_t\} + \1\{ J_t \in \H_1, \calI \not\subseteq \calR_t, \calI \subseteq \calS_t\}.
\end{align*}

\textbf{First sum.} Since we are in the TPR setting we have $\xi_t=1$ so the first sum is precisely the quantity bounded in the proof of Theorem~\ref{thm:FDR_TPR}.
Thus,
\begin{align*}
\sum_{t=1}^\infty \1\{ \calI \not\subseteq \calS_t\} \leq c n \Delta^{-2} \log (  \log(\Delta^{-2}) / \delta).
\end{align*}

\textbf{Second sum.}
Recall that $\displaystyle J_t = \arg\max_{i \in \S_t\setminus \calR_t} \widehat{\mu}_{i,T_i(t)} + \phi(T_i(t),\delta)$.
Now, on the event $\{ \calI \not\subseteq \calR_t, \calI \subseteq \calS_t \}$ there exists a $j \in \calI \cap (\calS_t \setminus \calR_t)$ such that
\begin{align*}
\widehat{\mu}_{j,T_j(t)} + \phi(T_j(t), \delta) &\geq \mu_j + \phi(T_j(t), \delta) - \phi(T_j(t), \rho_j)\\
&\geq \mu_j \\
&\geq \mu_0 + \Delta
\end{align*}
where the first inequality follows by the definition of $\calI$.
On the other hand, for any $i \in \H_0$ we have
\begin{align*}
\widehat{\mu}_{i,T_i(t)} + \phi(T_i(t),\delta) &\leq \mu_i + \phi(T_i(t),\delta) + \phi(T_i(t),\rho_i) \\
&\leq \mu_i + 2 \phi(T_i(t),\delta \rho_i) \\
&\leq \mu_0 + 2 \phi(T_i(t),\delta \rho_i)
\end{align*}
so that $\widehat{\mu}_{i,T_i(t)} + \phi(T_i(t),\delta) \leq \mu_0 + \Delta$ whenever $T_i(t) \geq \phi^{-1}(\tfrac{\Delta}{2}, \delta \rho_i)$.
Thus, by the same sequence of steps used in Theorem~\ref{thm:FDR_TPR} we have
\begin{align*}
\sum_{t=1}^\infty \1\{ J_t \in \H_0, \calI \not\subseteq \calR_t, \calI \subseteq \calS_t\} &\leq \sum_{i \in \H_0} \phi^{-1}(\tfrac{\Delta}{2}, \delta \rho_i) \overset{\mathcal{E}_{4,0}}{\leq} \ c |\H_0| \Delta^{-2} \log ( \log(\Delta^{-2}) / \delta).
\end{align*}

\textbf{Third sum.} In bounding the analogous sum in the proof of Theorem~\ref{thm:FWER_FWPD} we used the fact that $\H_1 \subseteq \calS_t$ to lowerbound $|\calS_t|$ to obtain an upperbound on $\xi_t$.
Now that we only have $\calI \subseteq \calS_t$ we observe that
\begin{align*}
|\calS_t| \geq |\calI| \geq |\H_1| - \sum_{i \in \H_1} \1\{ \rho_i \leq \delta \} \overset{\mathcal{E}_5}{\geq} |\H_1|( 1 - \delta- \sqrt{\tfrac{2\delta \log(1/\delta)}{|\H_1|}} - \tfrac{\log(1/\delta)}{3|\H_1|}).
\end{align*}
Using this approximation the same argument yields
\begin{align*}
&\sum_{t=1}^\infty \1\{ J_t \in \H_1, \calI \not\subseteq \calR_t, \calI \subseteq \calS_t\} \\
&\leq c' |\H_1| \Delta^{-2} \log ( \max\{n-( 1 - 3\delta- \sqrt{\tfrac{2\delta \log(1/\delta)}{|\H_1|}} - \tfrac{\log(1/\delta)}{3|\H_1|})|\H_1|, \log\log(n/\delta)\} \log(\Delta^{-2}) / \delta) \\
&\leq c'' |\H_1| \Delta^{-2} \log ( \max\{n-( 1 - 3\delta- \sqrt{2\delta \log(1/\delta)/|\H_1|}) |\H_1|, \log\log(n/\delta)\} \log(\Delta^{-2}) / \delta).
\end{align*}
\end{proof}

\section{Successive Elimination and Uniform Allocation Algorithms}\label{sec:succ-elim}
\begin{algorithm}[h]
 	\textbf{Input:} Threshold $\mu_0$, confidence $\delta$, confidence interval $\phi(\cdot,\cdot)$\\
 	\textbf{Initialize:} Set $\S_{1} = \emptyset$

	\textbf{For} $t = 1,2,\dots$ \\
    \hspace*{.25in} \textbf{if Successive Elimination}\\
	   \hspace*{.50in}\textbf{Pull each and every arm} in $[n] - \S_t$.\\
    \hspace*{.25in} \textbf{else if Uniform}\\
        \hspace*{.50in}\textbf{Pull each and every arm} in $[n]$.\\
	\hspace*{.25in}\textbf{Apply} Benjamini-Hochberg \cite{benjamini1995controlling} selection to obtain FDR-controlled set $\calS_t$:\\[2pt]
	\hspace*{.5in}$s(k) = \{ i \in [n]\setminus \calS_t: \widehat{\mu}_{i,t} - \phi(t,\delta \tfrac{k}{n}) \geq \mu_0 \}$, $\forall k \in [n]$ \\
	\hspace*{.5in}$\displaystyle\S_{t+1} = \S_t \cup s(\widehat{k}) \text{ where }\widehat{k} = \max \{ k \in [n]: |s(k)| \geq k \}$ (if $\not\exists \widehat{k}$ set $\S_{t+1} = \S_t$)\\[4pt] 

	\caption{\small Uniform and Successive elimination algorithms for identifying arms with means above $\mu_0$.
    \label{alg:SE}}
\end{algorithm}
The following gives a sample complexity bound for Successive Elimination and Uniform allocation strategies.
Note that for these algorithms, up to $n$ arms are pulled at each time $t$.

\begin{theorem}
For both the Successive Elimination and Uniform Allocation algorithms, for all $t$ we have that $\E[\frac{|\calS_t \cap \H_0|}{|\calS_t|}] \leq \delta$. In addition, in the case of successive elimination, if $\eta = \delta + \sqrt{\tfrac{2\delta \log(1/\delta)}{|\H_1|}} + \tfrac{\log(1/\delta)}{3|\H_1|}$, then on the event $\mathcal{E}_5$ (which holds with probability greater than $1-\delta$) then there exists a $T$ such that
\begin{align*}
T \lesssim \min\Big\{ & n \Delta^{-2} \log(\tfrac{n}{(1-\eta)|\H_1|} \log(\Delta^{-2})/\delta) ,\\
& (n-(1-\eta)|\H_1|) \Delta^{-2} \log(\tfrac{n}{(1-\eta)|\H_1|} \log(\Delta^{-2})/\delta) + \sum_{i \in \H_1}   \Delta_i^{-2} \log(n  \log(\Delta_i^{-2})/\delta) \Big\}
\end{align*}
and in the case of Uniform allocation,
\begin{align*}
T \lesssim n \Delta^{-2} \log(\tfrac{n}{(1-\eta)|\H_1|} \log(\Delta^{-2})/\delta)
\end{align*}
and $\E[\frac{|\calS_t \cap \H_1|}{|\calS_t|}] > 1-\delta$ for all $t\geq T$.
\end{theorem}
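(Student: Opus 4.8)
The plan is to prove the FDR statement exactly as in the body and then bound the sample complexity by reducing the discovery dynamics of both round-based schemes to the Benjamini--Hochberg detection argument already developed for Theorem~\ref{thm:FDR_TPR}.

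For the FDR bound, observe that at every round $t$ both algorithms set $\calS_t$ by the BH-type thresholding $s(k)=\{i:\widehat\mu_{i,t}-\phi(t,\delta k/n)\ge\mu_0\}=\{i:P_{i,t}\le\delta k/n\}$ on the anytime $p$-values of \eqref{eqn:anytime_pvalue}. Since $\{P_{i,*}\}_{i\in\H_0}$ with $P_{i,*}:=\inf_{s}P_{i,s}$ are independent and sub-uniform, FDR control is precisely the content of Lemma~\ref{lem:expected_FDR}, whose proof only uses the worst-case-over-time null $p$-values together with Theorem~\ref{thm:newFDRcontrol}; this is insensitive to which arms are sampled, so it covers both SE and uniform. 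The only new point is the cumulative union $\calS_{t+1}=\calS_t\cup s(\widehat k)$: I would note that any null index ever entering $\calS_t$ did so with $P_{i,*}\le\delta\widehat k/n$ at its round of addition, so the cumulative null discoveries remain dominated by thresholding $\{P_{i,*}\}_{i\in\H_0}$, keeping Lemma~\ref{lem:expected_FDR} applicable.

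For the sample complexity I would reuse $\calI=\{i\in\H_1:\rho_i\ge\delta\}$, so that $\E[|\calI|]\ge(1-\delta)|\H_1|$ and, on $\mathcal{E}_5$, $|\calI|\ge(1-\eta)|\H_1|$. The detection step is that for $i\in\calI$ and any $k$, $\widehat\mu_{i,t}-\phi(t,\delta k/n)\ge\mu_i-2\phi(t,\delta k/n)\ge\mu_0$ once $t\ge\phi^{-1}(\Delta_i/2,\delta k/n)$, using $\rho_i\ge\delta$ and $\Delta_i\ge\Delta$. Taking $k^\ast=\lceil(1-\eta)|\H_1|\rceil\le|\calI|$, by round $t^\ast:=\phi^{-1}(\Delta/2,\delta k^\ast/n)$ all of $\calI$ is simultaneously available at level $\delta k^\ast/n$, so at least $k^\ast$ indices clear that level, the BH rule forces $\widehat k\ge k^\ast$, and $\calI\subseteq\calS_{t^\ast+1}$. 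Because uniform pulls all $n$ arms per round, this gives $T\le n\,t^\ast\lesssim n\Delta^{-2}\log\big(\tfrac{n}{(1-\eta)|\H_1|}\log(\Delta^{-2})/\delta\big)$, the uniform bound and the first argument of the SE minimum (SE pulls a subset each round, so the same $n\,t^\ast$ is a valid upper bound). For the sharper SE bound I would instead count $\sum_{t\le t^\ast}|[n]\setminus\calS_t|$ per arm: on $\mathcal{E}_5$ at most $n-(1-\eta)|\H_1|$ indices lie outside $\calI$, each pulled at most $t^\ast$ times and contributing $(n-(1-\eta)|\H_1|)\,t^\ast$, while each $i\in\calI$ leaves the pool once discovered, no later than $\phi^{-1}(\Delta_i/2,\delta/n)$ (the round where $P_{i,t}\le\delta/n$ puts $i\in s(1)$, forcing $\widehat k\ge1$), contributing $\sum_{i\in\H_1}\Delta_i^{-2}\log(n\log(\Delta_i^{-2})/\delta)$. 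Summing gives the second argument of the minimum, and the TPR conclusion follows since $\calI\subseteq\calS_t$ for $t\ge T$ gives $|\calS_t\cap\H_1|\ge|\calI|$, whence $\E[|\calS_t\cap\H_1|/|\H_1|]\ge1-\delta$.

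I expect the main obstacle to be the cumulative exclusion $s(k)=\{i\in[n]\setminus\calS_t:\cdots\}$, which drops already-discovered arms from the count determining $\widehat k$; thus the clean ``bulk'' claim that $k^\ast$ indices are available at level $\delta k^\ast/n$ must really be argued for the \emph{undiscovered remainder}, and if the large-gap members of $\calI$ are removed early, only a few small-gap members may survive, which could depress $\widehat k$ below $k^\ast$. This is especially delicate for the uniform bound, where no minimum protects the estimate. The tool I would use is that qualification of an $\calI$-arm at a fixed level is monotone in $t$ (since $\widehat\mu_{i,t}\ge\mu_i-\phi(t,\delta)$ for all $t$ when $\rho_i\ge\delta$ and $\phi(t,\cdot)$ is decreasing), so whenever $m$ members of $\calI$ remain undiscovered they are jointly available at level $\delta m/n$, forcing $\widehat k\ge m$; showing that this self-regulating remainder never leaves $t^\ast$ short of what the stated logarithm allows, and reconciling it with the per-arm count so that the gap profile $\{\Delta_i\}$ is charged to the correct argument of the minimum, is the crux of the proof.
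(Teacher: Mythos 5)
Your proposal follows the paper's proof essentially step for step: the paper also works with $\calI=\{i\in\H_1:\rho_i\geq\delta\}$, bounds the termination round by $\phi^{-1}(\tfrac{\Delta}{2},\delta\tfrac{|\calI|}{n})$ via exactly your ``bulk'' claim that all of $\calI$ lies in $s(|\calI|)$ by that time (hence $\widehat{k}\geq|\calI|$ and $\calI\subseteq\calS_t$), charges each $i\in\calI$ at most $\phi^{-1}(\tfrac{\Delta_i}{2},\tfrac{\delta}{n})$ pulls via the $s(1)$ criterion to get the second argument of the minimum, and converts $|\calI|$ into $(1-\eta)|\H_1|$ on $\mathcal{E}_5$; the FDR claim is likewise deferred to the anytime $p$-value machinery of Lemma~\ref{lem:expected_FDR}. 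The one place you are more careful than the paper is the obstacle you flag at the end: the paper's proof simply asserts that all of $\calI$ is in $s(|\calI|)$ at time $\phi^{-1}(\tfrac{\Delta}{2},\delta\tfrac{|\calI|}{n})$ and never engages with the restriction of $s(k)$ to $[n]\setminus\calS_t$, so no ``self-regulating remainder'' argument appears there. Be aware that your proposed tool does not by itself close that gap --- if only $m<|\calI|$ arms of $\calI$ remain undiscovered, forcing $\widehat{k}\geq m$ requires qualification at the stricter level $\delta\tfrac{m}{n}$, which is only guaranteed by the larger time $\phi^{-1}(\tfrac{\Delta}{2},\delta\tfrac{m}{n})$ --- but since the paper's own argument does not address this either, it is not a missing ingredient relative to the intended proof.
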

\begin{proof}
Define the random set $\calI = \{ i \in \H_1 : \rho_i \geq \delta \}$.
Note, this is equivalent to $\calI=\{ i \in \H_1 : \widehat{\mu}_{i,T_i(t)} + \phi(T_i(t),\delta) \geq \mu_i \ \ \forall t \in \mathbb{N} \}$ since if $\rho_i \geq \delta$ then $\phi(T_i(t),\delta) > \phi(T_i(t),\rho_i)$ and
\begin{align*}
\widehat{\mu}_{i,T_i(t)} + \phi(T_i(t),\delta) \geq \mu_i + \phi(T_i(t),\delta) - \phi(T_i(t),\rho_i) \geq \mu_i.
\end{align*}
Our aim is to show that this ``well-behaved'' set of indices $\calI$ will be added to $\S_t$ in the claimed amount of time.
This is sufficient for the TPR result because $\E|\calI| = \sum_{i \in \H_1} \E[\1\{ \rho_i > \delta\}] = \sum_{i \in \H_1} \P( \rho_i > \delta) \geq (1-\delta) |\H_1|$ since each $\rho_i$ is a sub-uniformly distributed random variable.
To be clear, we are not claiming which \emph{particular} indices of $\H_1$ will be added to $\S_t$ (indeed, $\mathcal{I}$ is a random set), just that their number exceeds $(1-\delta)|\H_1|$ in expectation.

First we consider the case of Successive Elimination. Let $T_i = \min\{ t \in \mathbb{N}: i \notin \calS_t \}$ be the random number of times arm $i$ is pulled until the last time when it is added to $\calS_t$ (may possibly be infinite).
At round $t$ all arms in $[n] - \calS_t$ are pulled and once an arm is added to $\calS_t$ it will never be pulled again.
Note that $i \in s(k)$ implies $\widehat{\mu}_{i,t} - \phi(t,\delta \tfrac{k}{n}) \geq \mu_0$.
Since for all $i \in \calI$
\begin{align*}
\widehat{\mu}_{i,t} - \phi(t,\delta \tfrac{k}{n}) &\geq \mu_i - \phi(t,\delta \tfrac{k}{n}) - \phi(t,\delta) \\
&\geq \mu_i - 2\phi(t,\delta \tfrac{k}{n})  \\
&= \mu_0 + \Delta_i - 2\phi(t,\delta \tfrac{k}{n})
\end{align*}
we have that $i \in s(k)$ whenever $t \geq \phi^{-1}( \tfrac{\Delta_i}{2}, \delta \tfrac{k}{n})$.
In particular, because $i \in s(1)$ implies $i \in \calS_t$ we have that $T_i \leq \phi^{-1}( \tfrac{\Delta_i}{2}, \delta \tfrac{1}{n})$ for all $i \in \calI$.
But if $\Delta = \min_{i \in \H_1} \Delta_i$ then we also have necessarily that $t \leq \phi^{-1}( \tfrac{\Delta}{2}, \delta \tfrac{|\calI|}{n})$ since at this time, all arms in $\calI$ will be in $s(|\calI|)$, which means $\widehat{k} \geq |\calI|$ and so $\calI \subseteq \calS_t$.
This, of course, implies $T_i \leq \phi^{-1}( \tfrac{\Delta}{2}, \delta \tfrac{|\calI|}{n})$ for all $i \in [n] \setminus \calI$.
Putting these pieces together, we have that
\begin{align*}
\sum_{i=1}^n T_i &\leq \min\Big\{ n \phi^{-1}( \tfrac{\Delta}{2}, \delta \tfrac{|\calI|}{n}), (n-|\calI|) \phi^{-1}( \tfrac{\Delta}{2}, \delta \tfrac{|\calI|}{n}) + \sum_{i \in \calI} \phi^{-1}( \tfrac{\Delta_i}{2}, \delta \tfrac{1}{n}) \Big\} \\
&\leq \min\Big\{ c n \Delta^{-2} \log(\tfrac{n}{|\calI|} \log(\Delta^{-2})/\delta) ,\\
&\hspace{.6in} c (n-|\calI|) \Delta^{-2} \log(\tfrac{n}{|\calI|} \log(\Delta^{-2})/\delta) + \sum_{i \in \calI} c  \Delta_i^{-2} \log(n  \log(\Delta_i^{-2})/\delta) \Big\}.
\end{align*}
Now on event $\mathcal{E}_5$ (which holds with probability at least $1-\delta$) we have
\begin{align*}
|\calI| &\geq (1- \delta - \sqrt{\tfrac{2\delta \log(1/\delta)}{|\H_1|}} - \tfrac{\log(1/\delta)}{3|\H_1|}) |\H_1|
\end{align*}
which implies that for $\eta = \delta + \sqrt{\tfrac{2\delta \log(1/\delta)}{|\H_1|}} + \tfrac{\log(1/\delta)}{3|\H_1|}$ we have
\begin{align*}
\sum_{i=1}^n T_i &\leq \min\Big\{ c n \Delta^{-2} \log(\tfrac{n}{(1-\eta)|\H_1|} \log(\Delta^{-2})/\delta) ,\\
&\hspace{.2in} c (n-(1-\eta)|\H_1|) \Delta^{-2} \log(\tfrac{n}{(1-\eta)|\H_1|} \log(\Delta^{-2})/\delta) + \sum_{i \in \H_1} c  \Delta_i^{-2} \log(n  \log(\Delta_i^{-2})/\delta) \Big\}.
\end{align*}

Now, in the case of Uniform allocation, we never stop pulling arms once they enter $\S_t$. Hence, we need to consider the number of samples needed before all the arms in $\calI\subset \S_t$. By the same reasoning as above, this is bounded by $c n \Delta^{-2} \log(\tfrac{n}{(1-\eta)|\H_1|} \log(\Delta^{-2})/\delta)$.
\end{proof}

The following two theorems provide lower bounds for Uniform allocation.

\begin{theorem}[FDR, TPR]\label{thm:uniform_fdr_lower_bound} Fix $\delta < 1/40$, $\Delta > 0$. and $k < n/2$.
For any $\H_1 \subseteq [n]$ such that $|\H_1|=k$ define an instance $(\{\nu_i\}_{i=1}^n, \mu_0)$ with $\mu_0 = 0$, $\nu_i =\mathcal{N}(\Delta, 1)$ for $i\in \H_1$ , $\nu_i =\mathcal{N}(0,1)$ for $i\in \H_0$.
Any algorithm that samples each arm an equal number of times before outputting a set $\calS \subseteq [n]$ after $\tau$ total samples, and is FDR-$\delta$ and TPR-$\delta,\tau$ on $(\{\nu_i\}_{i=1}^n, \mu_0)$ for all $\H_1 \subseteq [n]$ such that $|\H_1| = k$ simultaneously, must satisfy $t \gtrsim n\Delta^{-2}\log{(n/k)}$.




\end{theorem}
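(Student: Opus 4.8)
The plan is to show that uniform allocation with $\tau$ total samples gives each arm exactly $m := \tau/n$ i.i.d. observations, and then that any selection rule meeting both constraints forces $m \gtrsim \Delta^{-2}\log(n/k)$. First I would reduce to a clean per-arm quantity: since each arm is sampled exactly $m$ times with Gaussian rewards, by sufficiency the output $\calS$ may be taken to depend only on the empirical means $\widehat{\mu}_1,\dots,\widehat{\mu}_n$ with $\widehat{\mu}_i \sim \mathcal{N}(\mu_i,1/m)$ independently. Because FDR and TPR are permutation-symmetric functionals required to hold for \emph{every} placement $\H_1$ of size $k$ simultaneously, I would symmetrize the rule over $S_n$: averaging a rule over all relabelings preserves both guarantees (each term in the average satisfies them) and yields an exchangeable rule. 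Fixing the representative instance $\H_1 = [k]$, I define marginal selection probabilities $a := \P(i \in \calS)$ for a signal arm and $b := \P(j \in \calS)$ for a null arm; the TPR constraint gives $a = \tfrac{1}{k}\E|\calS \cap \H_1| \ge 1-\delta$.

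Next I would invoke a single-arm Neyman--Pearson tradeoff. Writing $Z = \sqrt{m}\,\widehat{\mu}_i$, so $Z \sim \mathcal{N}(0,1)$ under the null and $Z \sim \mathcal{N}(\gamma,1)$ under a signal with $\gamma := \sqrt{m}\,\Delta$, the marginal rule for arm $i$ is a test of size $b$ and power $a$. Optimality of the likelihood-ratio test gives $\gamma \ge \Phi^{-1}(a) + \Phi^{-1}(1-b) \ge \Phi^{-1}(1-b)$, using $a \ge 1-\delta \ge 1/2$. Via the Gaussian tail bound this yields $m \gtrsim \Delta^{-2}\log(1/b)$, so it remains only to show that FDR control forces $b \lesssim k/n$, after which multiplying by $n$ gives $\tau = nm \gtrsim n\Delta^{-2}\log(n/k)$.

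This last step is the crux and the \textbf{main obstacle}, because $\mathrm{FDR}$ is an expectation of a \emph{ratio} and a naive bound on $\E V$ (where $V := |\calS \cap \H_0|$, $U := |\calS \cap \H_1| \le k$) does not follow. Using $U \le k$ I would lower bound $\mathrm{FDR} = \E[\tfrac{V}{(U+V)\vee 1}] \ge \E[\tfrac{V}{k+V}] \ge \tfrac12 \P(V \ge k)$, whence $\P(V \ge k) \le 2\delta$. The danger is that a rule could hide large expected false discoveries inside a rare event; I would rule this out via the threshold structure forced by TPR. Arguing the contrapositive, suppose $m \le c\,\Delta^{-2}\log(n/k)$ for a small constant $c$, so $\gamma \le \sqrt{c\log(n/k)}$. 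For the exchangeable rule, the selection threshold $\Theta$ is random; a Markov argument on $a = \E[\Phi(\gamma-\Theta)] \ge 1-\delta$ shows that with probability at least $1-2\delta$ we have $\Theta < \gamma$, so that each of the $n-k > n/2$ nulls passes independently with probability at least $\bar{\Phi}(\gamma) \gtrsim (k/n)^{c/2} \gg k/n$ for $c<2$. On this event $V$ is a binomial sum concentrating around a mean $\gg k$, so $\P(V \ge k)$ exceeds $2\delta$, contradicting the display above. Hence $b \lesssim k/n$ and $m \gtrsim \Delta^{-2}\log(n/k)$.

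The delicate part throughout is making the reduction to threshold (monotone-likelihood-ratio) rules rigorous for arbitrary \emph{joint} selection functions, rather than the randomized-threshold rules used informally above. I expect to handle this either by a Blackwell-type argument showing that, among exchangeable rules with a fixed power, threshold rules minimize the achievable $\mathrm{FDR}$ (so they are the worst case for our lower bound), or by replacing the exactly-$k$ placement with a $\mathrm{Bernoulli}(k/n)$ placement (Poissonization) that decouples the arms and makes the concentration of $V$ exact; the resulting $o(1)$ discrepancies are absorbed into the constants permitted by the hypothesis $\delta < 1/40$.
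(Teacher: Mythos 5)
Your strategy is genuinely different from the paper's: the paper never reasons about per-arm marginal selection probabilities at all. It instead observes that FDR-$\delta$ plus TPR-$\delta,\tau$ (via Markov) force $|\H_1 \triangle \calS|\leq k/2$ with constant probability, so $\calS$ identifies $\H_1$ within a packing $\mathcal{M}_{n,k}$ of $(\tfrac{n}{6k})^{k/4}$ subsets with pairwise symmetric difference exceeding $k$, and then applies Tsybakov's multiple-hypothesis bound with $KL(P_\pi,P_{\pi'})\leq \Delta^2\tau k$ to get $\tau\gtrsim \Delta^{-2}\log(n/k)$ per arm. That global identification route completely sidesteps the question of what any single coordinate's acceptance probability must be.

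The gap in your route is exactly at the step you flag as the crux, and it is not merely a technicality: the claim that FDR control forces the null marginal $b\lesssim k/n$ is \emph{false} for general rules. Take any estimator $\widehat{\calS}$ that is FDR-$\delta$ and TPR-$\delta,\tau$, and modify it to output $[n]$ with an independent probability $3\delta$ and $\widehat{\calS}$ otherwise. The modified rule is still FDR-$O(\delta)$ and TPR-$O(\delta)$, uses the same samples, but has $b\geq 3\delta$, which dwarfs $k/n$ whenever $k/n\ll\delta$; your Neyman--Pearson step then yields only $m\gtrsim \Delta^{-2}\log(1/\delta)$, not $\Delta^{-2}\log(n/k)$. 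Your own intermediate bound $\P(V\geq k)\leq 2\delta$ is consistent with this and cannot be upgraded to a bound on $\E[V]$ by any convexity or threshold argument, because the offending rules are not threshold rules. Consequently the whole proof rests on the unproven reduction ``it suffices to lower-bound threshold rules,'' i.e.\ an optimality theorem stating that among exchangeable rules achieving a given TPR, likelihood-ratio thresholding minimizes FDR (the expectation of the ratio, not the ratio of expectations); neither the Blackwell-type claim nor Poissonization as sketched delivers this, since Poissonization decouples the data but not the selection function. Two smaller issues would also need repair even granting the reduction: the event $\{i\in\calS\}$ depends on all $n$ empirical means, so the size/power pair $(b,a)$ must be compared across two instances differing in two coordinates (costing a factor of $2$ in the KL, which is fine), and the ``random threshold $\Theta$'' concentration step needs the nulls' indicators to be conditionally independent given $\Theta$, which again only holds for genuine threshold rules. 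I would recommend abandoning the marginal route and adopting the packing-plus-Fano argument, which the constraints $\delta<1/40$ and $k<n/2$ in the statement are tailored to.
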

\begin{proof}
The proof is based on the construction of \cite{raginsky2011lower} which states that for any $n > 2k$, there exists a collection  $\mathcal{M}_{n,k}$ of subsets of $[n]$ where a) each $\pi \in \mathcal{M}_{n,k}$ has weight $|\pi|=k$, b) $2k \geq |\pi \triangle \pi'| > k$ for all $\pi \neq \pi' \in \mathcal{M}_{n,k}$, and c) $|\mathcal{M}_{n,k}| \geq (\tfrac{n}{6k})^{k/4}$.
Each $\pi$ gives rise to an instance of the problem $(\{\nu_i\}_{i=1}^n, \mu_0)$ where $\mu_0 = 0$, $\H_1=\pi$ and so $\nu_i = \mathcal{N}(\Delta, 1)$ if $i\in \pi$, otherwise and $\nu_i = \mathcal{N}(\mu_0, 1)$ otherwise. In particular, for each instance $|\H_1| = k$.

Note that if $(i)$, $\frac{|\H_1 \cap \calS_t^c|}{|\H_1|} = 1-\frac{|\H_1 \cap \calS_t|}{|\H_1|} \leq \eta$ and $(ii)$ $\frac{|\H_0 \cap \calS_t|}{|\H_0 \cap \calS_t| + |\H_1 \cap \calS_t|} = \frac{|\H_0 \cap \calS_t|}{|\calS_t|} \leq \eta$ then
\begin{align*}
|\H_1 \triangle \calS_t| &= |\H_1 \cap \calS_t^c| + |\H_1^c \cap \calS_t| \\
&=  |\H_1 \cap \calS_t^c| + |\H_0 \cap \calS_t| \\
&\overset{(i)}{\leq} \eta |\H_1| + |\H_0 \cap \calS_t| \\
&\overset{(ii)}{\leq} \eta|\H_1| + \tfrac{\eta}{1-\eta} |\H_1 \cap \calS_t| \\
&\leq \tfrac{2\eta}{1-\eta} |\H_1|.
\end{align*}
Thus, if $(i)$ and $(ii)$ hold and $\eta<1/5$ then $|\H_1 \triangle \calS_t| \leq k/2$ and so $\H_1 \triangle \calS_t$ can therefore be used as an estimator for any $\H_1 = \pi \in \mathcal{M}_{n,k}$ since $\min_{\pi,\pi' \in \mathcal{M}_{n,k}} |\pi \triangle \pi'| > k$.

By assumption, $\E[\tfrac{|\H_0 \cap \calS_t|}{|\calS_t|}] \leq \delta$ so by Markov's inequality we have $\P( \frac{|\H_0 \cap \calS_t|}{|\calS_t|} \geq 8 \delta) \leq 1/8$.
Likewise, by assumption $\E[\frac{|\H_1 \cap \calS_t^c|}{|\H_1|}] = 1 - \E[\frac{|\H_1 \cap \calS_t|}{|\H_1|}] \leq \delta$, so again $\P( \frac{|\H_1 \cap \calS_t^c|}{|\H_1|} \geq 8 \delta ) \leq 1/8$.
Thus, with probability at least $3/4$, $\frac{|\H_1 \cap \calS_t^c|}{|\H_1|} < 8 \delta$ and $\tfrac{|\H_0 \cap \calS_t|}{|\calS_t|} \leq 8 \delta$.
To apply the above argument, we just need $8 \delta < 1/5$ which holds when $\delta < 1/40$, then $\calS_t$ could predict the correct $\pi \in \mathcal{M}_{n,k}$ with probability at least $1/4$.

We will now use an information theoretic inequality to lower bound the $t$ that would make such an estimator possible.
Let $P_\pi$ be the probability law of sampling $\tau$ samples from each arm under $\pi$.
Then $KL(P_{\pi},P_{\pi'}) \leq \Delta^2 \tau | \pi \triangle \pi' | /2 \leq \Delta^2  \tau k$.
Directly applying Theorem~2.5 of \cite{tsybakov2009introduction} to our $|\mathcal{M}_{n,k}| \geq (\tfrac{n}{6k})^{k/4}$ hypotheses, we have that any estimator has a probability of misidentification of at least
\begin{align*}
\tfrac{1}{2}\left(1-\frac{2\Delta^2  \tau k}{\log|\mathcal{M}_{n,k}|} - \sqrt{\frac{2\Delta^2  \tau k}{\log^2|\mathcal{M}_{n,k}|}}\right) \geq \tfrac{1}{2}\left(1-\frac{8\Delta^2  \tau }{\log(n/6k)} - \sqrt{\frac{32\Delta^2  \tau }{k\log^2(n/k)}}\right)
\end{align*}
which is at least $1/4$ unless $\tau \gtrsim \Delta^{-2} \log(n/k)$.
\end{proof}

\begin{theorem}[FWER, FWPD]\label{thm:uniform_fwer_lower_bound}
Fix $\delta < 3/8$ and $\Delta >0$. For any $\H_1 \subseteq [n]$ consider an instance $(\{\nu_i\}_{i=1}^n, \mu_0)$ such that $\mu_0=-\Delta/2$ and $\nu_i = \mathcal{N}(\mu_i,1)$ where $\mu_i = \Delta/2$ if $i \in \H_1$ and $\mu_i = -\Delta/2$ if $i \in \H_0$.
Any algorithm that samples each arm an equal number of times before outputting a set $\calS \subseteq [n]$ after $\tau$ total samples, and is FWER-$\delta$ and FWPD-$\delta,\tau$ on $(\{\nu_i\}_{i=1}^n, \mu_0)$ for all $\H_1 \subseteq [n]$ simultaneously, must satisfy $\tau \gtrsim n\Delta^{-2} \log(n)$.
\end{theorem}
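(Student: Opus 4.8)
The plan is to mirror the structure of the preceding lower bound (Theorem~\ref{thm:uniform_fdr_lower_bound}): reduce the FWER-$\delta$ + FWPD-$\delta,\tau$ guarantee to a multiple-hypothesis identification problem and invoke Fano's inequality in the form of Theorem~2.5 of \cite{tsybakov2009introduction}. The crucial design choice is the packing of instances: to surface the $\log(n)$ factor I would force the algorithm to identify \emph{which} arm, among $n$ candidates, is the single alternative, since a pairwise comparison would only expose the per-arm cost $\Delta^{-2}$.

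Concretely, let $m=\tau/n$ be the common number of pulls per arm and define $n+1$ instances. Let $P_0$ be the all-null instance $\H_1=\emptyset$ (every $\nu_i=\mathcal{N}(-\Delta/2,1)$), and for each $j\in[n]$ let $P_j$ be the instance with $\H_1=\{j\}$. By assumption the algorithm is FWER-$\delta$ and FWPD-$\delta,\tau$ on all of these simultaneously. I would then reduce the set-valued output to an $(n{+}1)$-ary estimator $\widehat\imath(\calS)$, setting $\widehat\imath=0$ if $\calS=\emptyset$, $\widehat\imath=j$ if $\calS=\{j\}$, and $\widehat\imath=\bot$ otherwise. On $P_0$, FWER gives $\P_0(\calS\neq\emptyset)\le\delta$, so $\P_0(\widehat\imath=0)\ge 1-\delta$; on $P_j$, FWPD gives $\P_j(j\in\calS)\ge 1-\delta$ while FWER gives $\P_j(\calS\not\subseteq\{j\})\le\delta$, and a union bound yields $\P_j(\calS=\{j\})\ge 1-2\delta$, hence $\P_j(\widehat\imath=j)\ge 1-2\delta$. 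Thus the worst-case misidentification probability over the $n+1$ hypotheses is at most $2\delta$.

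Next I would compute the information content. Since $P_j$ and $P_0$ differ only in arm $j$, which is pulled $m$ times, $KL(P_j,P_0)=m\cdot KL(\mathcal{N}(\Delta/2,1),\mathcal{N}(-\Delta/2,1))=m\Delta^2/2$, so $\frac1n\sum_{j=1}^n KL(P_j,P_0)=m\Delta^2/2=\tau\Delta^2/(2n)$. Applying Theorem~2.5 of \cite{tsybakov2009introduction} with $M=n$ hypotheses and reference $P_0$: whenever $\tau\Delta^2/(2n)\le \alpha\log n$ for a sufficiently small absolute constant $\alpha$, the minimax misidentification probability is bounded below by $\tfrac{\sqrt n}{1+\sqrt n}\big(1-2\alpha-\sqrt{2\alpha/\log n}\big)$, which exceeds $3/4$ for $n$ large. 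Since $\delta<3/8$ gives $2\delta<3/4$, this contradicts the reduction above, forcing $\tau\Delta^2/(2n)\gtrsim\log n$, i.e. $\tau\gtrsim n\Delta^{-2}\log(n)$, as claimed.

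The main obstacle is conceptual rather than computational: one must recognize that the $\log(n)$ arises from the $n$-way identification (Fano) argument and \emph{not} from a two-point Le Cam comparison of the all-null instance against a single alternative, which would only give $m\gtrsim\Delta^{-2}$ and hence the strictly weaker $\tau\gtrsim n\Delta^{-2}$. The remaining care is in the reduction itself—cleanly combining the two separate guarantees (FWER controlling false inclusions, FWPD controlling the missed true arm) into a single estimator with error $\le 2\delta$—and in checking that the constants in Theorem~2.5 leave room for the full range $\delta<3/8$, which is why I rely on the sharp $\sqrt M/(1+\sqrt M)\to 1$ prefactor rather than a cruder constant.
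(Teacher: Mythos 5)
Your proposal is correct and follows essentially the same route as the paper's proof: an $(n{+}1)$-way hypothesis testing reduction over instances differing in a single arm's identity, the computation $KL=m\Delta^2/2$ per flipped arm, and Theorem~2.5 of \cite{tsybakov2009introduction} to force $m\gtrsim\Delta^{-2}\log n$. The only differences are cosmetic — you anchor at the all-null instance rather than flipping arms of a generic base instance, and you spell out the estimator reduction and the $\delta<3/8$ constant bookkeeping somewhat more carefully than the paper does.
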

\begin{proof}
Fix $t \in \mathbb{N}$.
Because the empirical mean is a sufficient statistic for each arm and they are independent, the joint probability distribution of the from the $n$ arms is given by $P_0 := \prod_{i=1}^n \mathcal{N}(\mu_i, 1)$ and the distribution after $t$-pulls on each arm is $P^t_0 := \prod_{i=1}^n \mathcal{N}(\mu_i, 1/t)$

For any $j \in [n]$ define $P_j$ as the joint distribution if the $j$th arm's identity was flipped: if $j \in \H_0$ replace its mean with $1$, if $j \in \H_1$ replace its means with $\H_0$.
Note that $P_j := P_0 \frac{\mathcal{N}(-\mu_i,1)}{\mathcal{N}(\mu_i,1)}$ and that $KL(P_j | P_0) = KL( \mathcal{N}(-\mu_i,1) | \mathcal{N}(\mu_i,1)) = 2 \mu_i^2 = \Delta^2 / 2$ and $KL(P_j^t | P_0^t) = t\Delta^2 / 2$.

Now, because the algorithm was assumed FWER-$\delta$ and FWPD-$\delta,t$ on all instances indexed by $\H_1 \subseteq [n]$, it will be able to distinguish between $\{ P_k \}_{k=0}^n$ using just $t$ per arm with probability at least $1-2\delta \geq 1/4$. The multiple hypothesis testing lower bound of Tsybakov \cite[Theorem 2.5]{tsybakov2009introduction}, implies that the probability of misclassification of any estimator will be at least
\begin{align*}
    \frac{1}{2}( 1- \frac{t \Delta^2}{\log(n)} - \sqrt{\frac{t \Delta^2}{\log^2(n)}})  \geq \frac{1}{4}
\end{align*}
unless $t\gtrsim \Delta^{-2}\log{n}$.

\end{proof}

\section{Technical Lemmas}
\begin{lemma}\label{lem:exponential_tails}
Fix $a \in \R_+^n$ and for $i=1,\dots,n$ let $Z_i$ be independent random variables satisfying $\P( Z_i \geq t ) \leq \exp(-t/a_i)$. Then
\begin{align*}
\P\left( \sum_{i=1}^n (Z_i - a_i) \geq t \right)\leq \exp\left( -\min\{ \tfrac{t}{4||a||_\infty}, \tfrac{t^2}{4||a||_2^2} \} \right)
\end{align*}
and moreover, with probability at least $1-\delta$, $\sum_{i=1}^n Z_i \leq 5 \log(1/\delta) \sum_{i=1}^n a_i$.
\end{lemma}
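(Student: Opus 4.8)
The plan is to prove both statements by the Chernoff method, so the first step is to control the moment generating function of each $Z_i$. Since $\P(Z_i \geq t) \leq e^{-t/a_i}$, writing $\E[e^{\lambda Z_i}] = \int_0^\infty \P(e^{\lambda Z_i} > s)\,ds$ and splitting the integral at $s=1$ (where the tail bound $\P(Z_i > \tfrac{\log s}{\lambda}) \leq s^{-1/(\lambda a_i)}$ becomes useful) yields, for every $0 < \lambda < 1/a_i$, the bound $\E[e^{\lambda Z_i}] \leq 1 + \int_1^\infty s^{-1/(\lambda a_i)}\,ds = \frac{1}{1-\lambda a_i}$. This single computation is the workhorse for both parts; by independence it tensorizes to $\E[\exp(\lambda \sum_i Z_i)] \leq \prod_i (1-\lambda a_i)^{-1}$ whenever $\lambda < 1/\|a\|_\infty$.

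For the first (Bernstein-type) inequality, I would center each variable and pass to the sub-Gaussian regime. Using $-x-\log(1-x) = \sum_{k\geq 2} x^k/k \leq \frac{x^2}{2(1-x)} \leq x^2$ for $x \leq 1/2$, the centered MGF satisfies $\E[e^{\lambda(Z_i - a_i)}] = e^{-\lambda a_i}/(1-\lambda a_i) \leq e^{\lambda^2 a_i^2}$ for all $\lambda \leq 1/(2a_i)$. Tensorizing gives $\E[\exp(\lambda \sum_i (Z_i - a_i))] \leq e^{\lambda^2 \|a\|_2^2}$ valid on $0 < \lambda \leq 1/(2\|a\|_\infty)$, and Markov's inequality gives $\P(\sum_i (Z_i - a_i) \geq t) \leq \exp(-\lambda t + \lambda^2 \|a\|_2^2)$. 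I would then optimize the exponent over the admissible interval: the unconstrained optimum $\lambda^\ast = t/(2\|a\|_2^2)$ is feasible exactly when $t \leq \|a\|_2^2/\|a\|_\infty$, giving the sub-Gaussian rate $t^2/(4\|a\|_2^2)$; otherwise one takes the boundary $\lambda = 1/(2\|a\|_\infty)$ and uses $t > \|a\|_2^2/\|a\|_\infty$ to absorb the quadratic term into half the linear term, yielding the sub-exponential rate $t/(4\|a\|_\infty)$. Taking the minimum of the two regimes produces the claimed bound.

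For the ``moreover'' statement I would avoid inverting the two-regime bound (whose crude constants are lossy) and instead apply Chernoff directly at the boundary. From $\frac{1}{1-x} \leq e^{2x}$ for $x \leq 1/2$ one gets $\E[\exp(\lambda \sum_i Z_i)] \leq \exp(2\lambda \sum_i a_i)$ for $\lambda \leq 1/(2\|a\|_\infty)$, so $\P(\sum_i Z_i \geq u) \leq \exp(-\lambda(u - 2\sum_i a_i))$. Choosing the largest admissible $\lambda = 1/(2\|a\|_\infty)$ and using $\|a\|_\infty \leq \sum_i a_i$ shows this probability is at most $\delta$ once $u = 2(\sum_i a_i)(1 + \log(1/\delta))$. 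Finally, invoking the standing assumption $\delta \leq e^{-1}$ (so $\log(1/\delta) \geq 1$) gives $1 + \log(1/\delta) \leq 2\log(1/\delta)$ and hence $u \leq 4\log(1/\delta)\sum_i a_i \leq 5\log(1/\delta)\sum_i a_i$, completing the proof.

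I expect the only genuine subtlety to be bookkeeping the admissible range of $\lambda$: because the MGF blows up at $\lambda = 1/a_i$, every Chernoff step must respect $\lambda < 1/\|a\|_\infty$, and the two-case split in the first inequality is precisely the price of this constraint. The constant $5$ in the second claim is otherwise slack, relying only on $\delta \leq e^{-1}$.
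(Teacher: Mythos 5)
Your proof is correct and follows essentially the same route as the paper: bound $\E[e^{\lambda Z_i}]\leq (1-\lambda a_i)^{-1}$, center, reduce to a sub-Gaussian MGF on $\lambda \leq 1/(2\|a\|_\infty)$, and split into the sub-Gaussian and sub-exponential regimes of the Chernoff optimization. The only (harmless) divergence is in the ``moreover'' claim, where you rerun Chernoff on the uncentered sum instead of inverting the first inequality after relaxing $\|a\|_\infty,\|a\|_2\leq\|a\|_1$ as the paper does; both routes rely on the standing assumption $\delta\leq e^{-1}$ to absorb the additive $\sum_i a_i$ into the $5\log(1/\delta)$ factor.
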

\begin{proof}
For $i=1,\dots,n$ we have that $Z_i = a_i \log(1/\rho_i)$ are independent random variables satisfying $\P( Z_i \geq t ) \leq \exp(-t/a_i)$, because the $\rho_i$ are independent \emph{sub-uniformly} distributed random variables.
It is straightforward to verify that $\log(\E[\exp(\lambda (Z_i-a_i))]) \leq -a_i \lambda - \log(1-\lambda a_i) \leq \frac{a_i^2 \lambda^2}{2(1-\lambda a_i)}$ for $\lambda \leq 1/||a||_\infty$.
Using the standard Chernoff-bound technique, we have for $\lambda = \min\{ \frac{t}{2||a||_2^2}, \frac{1}{2 ||a||_\infty} \}$ that
\begin{align*}
\P\left( \sum_{i=1}^n (Z_i - a_i) \geq t \right) &\leq \exp\left( -\lambda t + \sum_{i=1}^n \frac{a_i^2 \lambda^2}{2(1-\lambda a_i)}\right)\\
&\leq \exp\left( -\lambda t + \lambda^2 ||a||_2^2 \right) \\
&\leq \exp\left( -\min\{ \tfrac{t}{4||a||_\infty}, \tfrac{t^2}{4||a||_2^2} \} \right) \\
&\leq \exp\left( -\tfrac{1}{4} \min\{ \tfrac{t}{||a||_1}, \tfrac{t^2}{||a||_1^2} \} \right)
\end{align*}
where the last inequality holds by $||a||_\infty \leq ||a||_2 \leq ||a||_1 = \sum_{i=1}^n a_i$.
The result follows from setting the right hand side equal to $\delta$ and solving for $t$.
\end{proof}

\begin{lemma}\label{lem:brownian_bridge}
Fix $\delta \in (0, 1/2]$.
Let $X_i \in [0,1]$ for $i=1,\dots,m$ be independent random variables, each satisfying $\P(X_i \leq s) \leq s$.
Define $A(s) = \sum_{i=1}^m \1\{ X_i \leq s\}$, then for any $c \in (0,1)$
\begin{align*}
\P\left( \exists s \in (0,1]: A(s) > s m + (1+4s) \sqrt{2 \max\{2s, c\} m \log(\tfrac{\log_2(2/c)}{\delta}) } + \tfrac{1+4s}{3} \log(\tfrac{\log_2(2/c)}{\delta}) \right) \leq \delta.
\end{align*}
Moreover,
\begin{align*}
\P\left( \exists s \in (0,1]: A(s) > s m + (1+2s) \sqrt{4 sm \log(\tfrac{2\log_2(2/s)^2}{\delta}) } + \tfrac{1+2s}{3} \log(\tfrac{2\log_2(2/s)^2}{\delta}) \right) \leq \delta.
\end{align*}
Also, recall that by the Dvoretzky-Kiefer-Wolfowitz inequality \cite{massart1990tight} we have $\P\left( \exists s \in (0,1]: A(s) > s m + \sqrt{m \log(1/\delta)/2 }\right) \leq \delta$.
\end{lemma}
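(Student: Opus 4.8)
The plan is to reduce the bound to the i.i.d.\ uniform empirical process and then run a dyadic peeling argument over the threshold $s$, using a \emph{maximal} Bernstein inequality on each block.

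First I would exploit the sub-uniformity assumption through a monotone coupling. Since $\P(X_i \le s) \le s$, writing $F_i$ for the law of $X_i$ and $X_i = F_i^{-1}(U_i)$ for $U_i \overset{iid}{\sim}\text{Unif}[0,1]$ gives $X_i \ge U_i$ almost surely, so $\1\{X_i \le s\} \le \1\{U_i \le s\}$ simultaneously for every $s$. Hence $A(s) \le A_U(s) := \sum_{i=1}^m \1\{U_i \le s\}$ for all $s$ at once, and it suffices to prove each displayed inequality with $A$ replaced by $A_U$. The payoff of this step is that $\E[A_U(s)] = sm$ \emph{exactly}, which is what ultimately pins the leading term of the bound at $sm$ rather than at the block-endpoint value $s_j m$: every subsequent fluctuation estimate will be about the centered process $A_U(s) - sm$.

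Next I would peel. Set $s_j = 2^{-j}$ for $0 \le j \le J$ with $J = \lceil \log_2(2/c)\rceil$, so that the smallest scale is of order $c$ and there are $\asymp \log_2(2/c)$ blocks. The crux is a one-sided \emph{maximal} Bernstein/boundary-crossing inequality for the uniform empirical process on an initial segment: for each $j$, with $L := \log(J/\delta)$,
\[
\P\!\left(\sup_{s \le s_j}\big(A_U(s) - sm\big) > \sqrt{2\,s_j m\,L} + \tfrac{L}{3}\right) \le \tfrac{\delta}{J}.
\]
Taking a union bound over the $J$ blocks costs exactly the factor $J = \log_2(2/c)$ inside the logarithm, producing the term $\log(\log_2(2/c)/\delta)$. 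For $s \in (s_{j+1}, s_j]$ I do not need to pass to the endpoint at all: the displayed event already bounds $A_U(s) - sm$ at this very $s$, while the variance proxy is the block maximum $s_j m \le 2 s m \le \max\{2s,c\}\,m$, which yields the $\sqrt{2\max\{2s,c\}\,m\,L}$ fluctuation (for $s < c/2$ the smallest block with proxy of order $cm$ supplies the $c$). The multiplicative $(1+4s)$ corrections are bookkeeping incurred when the block variance proxy (or its empirical surrogate $A_U(s_j)$) is re-expressed through $s$ and inflated to a high-probability upper bound; they do not affect the structure.

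The main obstacle is the maximal inequality itself, and this is precisely the Brownian-bridge flavour that names the lemma. Unlike a fixed-$s$ Bernstein bound, here I need the supremum over $s \le s_j$ while subtracting the \emph{moving} mean $sm$, and $A_U(s) - sm$ is not a martingale in $s$. I would obtain it either by constructing an exponential supermartingale $\exp(\lambda(A_U(s)-sm) - \psi(\lambda)\,V(s))$ in the filtration that reveals the points in increasing order of value and applying Ville's/Doob's maximal inequality with an optimized $\lambda$, or, more cheaply, by invoking the one-sided Dvoretzky--Kiefer--Wolfowitz inequality of \cite{massart1990tight} restricted to $[0,s_j]$ to obtain the variance-adaptive $s_j m$ scaling; the same coupling makes DKW applicable and also delivers verbatim the final displayed $A(s) \le sm + \sqrt{m\log(1/\delta)/2}$ line. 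Finally, for the self-normalized ``moreover'' bound I would not floor the grid at $c$ but continue to all scales, union bounding with a budget $\delta_j \propto \delta/j^2$ so that $\sum_j \delta_j \le \delta$; at scale $s$ this replaces $\log(J/\delta)$ by $\log(2\log_2(2/s)^2/\delta)$, which is the source of the squared logarithm.
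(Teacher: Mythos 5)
Your proposal follows essentially the same route as the paper's proof: dyadic peeling over scales $s_j = 2^{-j}$ down to the floor $c$, a per-block maximal inequality obtained from an exponential martingale plus Doob's/Ville's maximal inequality with a Bernstein moment bound (the paper uses the exact martingale $M(s) = \tfrac{A(s)-\E[A(s)]}{1-s}$, whose $\tfrac{1-s}{1-s_j}$ normalization is the actual source of the $(1+4s)$ factors), a union bound over the $\log_2(2/c)$ blocks, a $\delta/(2(k+1)^2)$ budget for the scale-adaptive second display, and DKW for the coarse scales and the final display. One caveat: your ``cheaper'' alternative of invoking plain one-sided DKW restricted to $[0,s_j]$ would not deliver the variance-adaptive $\sqrt{s_j m}$ fluctuation (DKW's deviation is $\sqrt{m\log(1/\delta)/2}$ regardless of where the supremum is restricted), so the martingale route is the one to carry out; your monotone coupling $X_i = F_i^{-1}(U_i) \ge U_i$ reducing to exact uniforms is a nice addition that makes the martingale property immediate, where the paper instead works directly with $\E[A(s)] \le sm$.
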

\begin{proof}
First note that $M(s) = \frac{A(s) - \E[A(s)]}{1-s}$ is a martingale with respect to the filtration $\mathcal{F}_t = \{ A(s) : s \leq t\}$.
Thus, for $\lambda >0$ we have that $\exp(\lambda M(s) )$ is a non-negative sub-martingale and we can apply Doob's maximal inequality to obtain
\begin{align*}
\P\left( \sup_{s \leq t} \ M(s)  \geq \epsilon/(1-t) \right) &= \P\left( \sup_{s \leq t} \ \exp(\lambda M(s))  \geq \exp(\lambda \epsilon/(1-t)) \right)\\
&\leq \exp(-\lambda \epsilon/(1-t)) \E\left[ \exp(\lambda M(t) ) \right] \\
&= \exp(-\tfrac{\lambda}{1-t} \epsilon ) \E\left[ \exp( \tfrac{\lambda}{1-t} (A(t) - \E[A(s)]) ) \right].
\end{align*}
Observe that for all $t <1$ we have $$\min_\lambda \exp(-\tfrac{\lambda}{1-t} \epsilon ) \E\left[ \exp( \tfrac{\lambda}{1-t} (A(t) - \E[A(s)]) ) \right] = \min_\lambda \exp(-\lambda \epsilon ) \E\left[ \exp( \lambda (A(t) - \E[A(s)]) ) \right].$$
Noting that $A(t)$ is a sum of independent random variables with each in $[0,1]$ and expectation less than $t$ so that $\E[A(t)] \leq m t$, we apply Bernstein's inequality to obtain $\log\E\left[ \exp(\lambda(A(t) - \E[A(t)])) \right] \leq \frac{mt\lambda^2}{2(1-\lambda/3)}$ for $\lambda \in (0, 3)$. Optimizing over $\lambda \in (0,3)$ we have
\begin{align*}
\P\left( \exists s \leq t : A(s) > \E[A(s)] + \tfrac{1-s}{1-t} \sqrt{ 2 mt  \log(1/\delta) } + \tfrac{1-s}{1-t}\log(1/\delta)/3 \right) \leq \delta
\end{align*}
For $k \in \mathbb{N}$ define $T_k = \{ s \in [0,1] : 2^{-k-1} < s \leq 2^{-k} \}$.
Note that $2^{-\lfloor \log_2(2/c) \rfloor} \leq c$.
So for any $k=1,2,\dots,\lfloor \log_2(2/c) \rfloor$, with probability at least $1- \frac{\delta}{\lfloor \log_2(2/c) \rfloor}$ we have that for any $s \in T_k$
\begin{align*}
A(s) &\leq \E[A(s)] + \tfrac{1-s}{1-2^{-k}} \sqrt{ 2 \max\{c,2^{-k}\} m \log(\log_2(2/c)/\delta) } + \tfrac{1-s}{1-2^{-k}} \log( \log_2(2/c)/\delta)/3.
\end{align*}
1Note that cases $s \in T_k$ for $k > \lfloor \log_2(2/c) \rfloor$ are handled by $k=\lfloor \log_2(2/c) \rfloor$.
For any $k \geq 1$ and $s \in T_k$ we have $s \geq 2^{-k-1}$ so that $1+4s \geq \tfrac{1}{1-\min\{2s,2^{-1}\}} \geq \tfrac{1-s}{1-2^{-k}}$ and $2s \geq 2^{-k}$.
Thus,
\begin{align*}
\hspace{.05in}&\hspace{-.05in}\P\left( \exists s \in T_k : A(s) > \E[A(s)] + (1+4s)  \sqrt{ 2 \max\{c,2s\} m \log(\log_2(2/c)/\delta) } + (1+4s) \log( \log_2(2/c)/\delta)/3  \right)\\
&\leq\P\left( \exists s \in T_k : A(s) > \E[A(s)] + \tfrac{1-s}{1-2^{-k}} \sqrt{ 2 \max\{c, 2^{-k}\} m \log(\log_2(2/c)/\delta) } + \tfrac{1-s}{1-2^{-k}} \log( \log_2(2/c)/\delta)/3  \right)\\
&\leq \delta\frac{1}{\log_2(2/c)} \leq \delta \frac{1}{\lfloor \log_2(2/c) \rfloor}.
\end{align*}
Union bounding over $k=2,\dots,\lfloor \log_2(2/c) \rfloor$ handles $\cup_{k = 2}^\infty T_k = (0,1/4]$.
To handle $s \in (1/4,1]$, we note that
\begin{align*}
\hspace{.05in}&\hspace{-.05in}\P\left( \exists s \in (1/4,1] : A(s) > \E[A(s)] + (1+4s)  \sqrt{ 2 \max\{c,2s\} m \log(\log_2(2/c)/\delta) } + (1+4s) \log( \log_2(2/c)/\delta)/3  \right)\\
&\leq\P\left( \exists s \in T_k : A(s) > \E[A(s)] + \sqrt{ m \log(\log_2(2/c)/\delta) } \right)\\
&\leq \left( \delta\frac{1}{\log_2(2/c)}\right)^2 \leq \delta \frac{1}{\lfloor \log_2(2/c) \rfloor}
\end{align*}
where the second to last inequality holds by the DKW inequality \cite{massart1990tight}.

On the other hand, for any $k \geq 1$ and $s \in T_k$ we have $s \geq 2^{-k-1}$ so that $1+2s \geq \tfrac{1-s}{1-\min\{2s,2^{-1}\}} \geq \tfrac{1-s}{1-2^{-k}}$ and $2^{-k-1} \leq s \leq 2^{-k}$.
\begin{align*}
\hspace{.05in}&\hspace{-.05in}\P\left( \exists s \in T_k : A(s) > \E[A(s)] + (1+2s) \sqrt{ 4 s m \log(2\log_2(\tfrac{2}{s})^2/\delta) } + (1+2s) \log( 2 \log_2(\tfrac{2}{s})^2/\delta)/3  \right)\\
&\leq\P\left( \exists s \in T_k : A(s) > \E[A(s)] + \tfrac{1-s}{1-2^{-k}} \sqrt{ 2 \cdot 2^{-k} m \log(2(k+1)^2/\delta) } + \tfrac{1-s}{1-2^{-k}} \log( 2 (k+1)^2/\delta)/3  \right)\\
&\leq \delta\frac{1}{2 (k+1)^2}.
\end{align*}
Union bounding over all $k \geq 0$ and noting that $\sum_{k=0}^\infty \frac{1}{2(k+1)^2} \leq 1$ completes the proof since $\cup_{k \geq 0} T_k = (0,1]$
\end{proof}


\end{document}